\newcommand*\diff{\mathop{}\!\mathrm{d}}
\renewcommand{\vec}[1]{\mathbf{#1}}
\newcommand{\PP}{\mathcal{P}}
\newcommand{\E}{\mathrm{E}}
\renewcommand{\H}{\mathcal{H}}
\newcommand{\x}{\vec x}
\newcommand{\y}{\vec y}
\newcommand{\norm}[1]{\left\lVert#1\right\rVert}
\newcommand{\wrt}{w.\,r.\,t.}
\newcommand{\phim}{\boldsymbol \phi_m}
\newcommand{\M}{\mathcal{M}([0,1]^N)}
\newtheorem{defi}{Definition}
\begin{document}
	
	\title{
		On generalization in moment-based domain adaptation}
	%\subtitle{Do you have a subtitle?\\ If so, write it here}
	
	%\titlerunning{Comparing high-entropy distributions}        % if too long for running head
	
	\author{Werner Zellinger         \and
		Bernhard A. Moser \and
		Susanne Saminger-Platz
	}
	
	%\authorrunning{Short form of author list} % if too long for running head
	
	\institute{Werner Zellinger and Bernhard A. Moser \at
		Data Science\\
		Software Competence Center Hagenberg GmbH, Austria\\
		\email{\{werner.zellinger, bernhard.moser\}@scch.at}
		\and
	    Susanne Saminger-Platz \at
		Department of Knowledge-Based Mathematical Systems\\
		Johannes Kepler University Linz, Austria\\
		\email{susanne.saminger-platz@jku.at}
	}
	
	\date{Received: date / Accepted: date}
	% The correct dates will be entered by the editor
	
	\maketitle
	
	\begin{abstract}
		Domain adaptation algorithms are designed to minimize the misclassification risk of a discriminative model for a target domain with little training data by adapting a model from a source domain with a large amount of training data.
		Standard approaches measure the adaptation discrepancy based on distance measures between the empirical probability distributions in the source and target domain.
		In this setting, we address the problem of deriving generalization bounds under practice-oriented general conditions on the underlying probability distributions.
		As a result, we obtain generalization bounds for domain adaptation based on finitely many moments and smoothness conditions.
		\keywords{transfer learning, domain adaptation, moment distance, learning theory, classification, total variation distance, probability metric}
	    \subclass{68Q32, 68T05, 68T30}
	\end{abstract}
	
	\section{Motivation}
	\label{sec:intro}
	
	Domain adaptation problems are encountered in everyday life of engineering machine learning applications whenever there is a discrepancy between assumptions on the learning and application setting.
	For example, most theoretical and practical results in statistical learning are based on the assumption that the training and test sample are drawn from the same distribution.
	As outlined in~\cite{blitzer2008learning,ben2010theory,pan2010survey,ben2014domain}, however, this assumption may be violated in typical applications such as natural language processing~\cite{blitzer2007biographies,jiang2007instance} and computer vision~\cite{ganin2016domain,zellinger2017robust,zellinger2016linear}. 
	
	In this work, we relax the classical assumption of identical distributions under training and application setting by postulating that only a finite number of moments of these distributions are aligned.
	
	This postulate is motivated two-fold:
	First, by the methodology to overcome a present difference in distributions by mapping the samples into a latent model space where the resulting corresponding distributions are aligned.
	See Figure~\ref{fig:grafical_abstract} for an illustration.
	Moment-based algorithms perform particularly well in many practical tasks~\cite{duan2012domain,baktashmotlagh2013unsupervised,sun2016deep,zellinger2017central,zellinger2017robust,koniusz2017domain,li2018adaptive,zhao2017joint,nikzad2018domain,peng2018cross,ke2018identity,Wei2018GenerativeAG,xing2018adaptive,peng2018moment,peng2019weighted}.
	Second, by the current scientific discussion about the choice of an appropriate distance function for domain adaptation~\cite{ben2007analysis,courty2017optimal,long2015learning,long2016unsupervised,Zhuang2015supervised,ganin2016domain}.
	The convergence in most common probability metrics of compactly supported distributions implies the convergence of finitely many moments.
	In particular, many common probability metrics admit upper bounds on moment-based distances, see Figure~\ref{fig:metrics}.
	Therefore, results under the proposed setting can also give theoretical insights to approaches based on stronger concepts of similarity like the Wasserstein distance~\cite{courty2017optimal}, the Maximum Mean Discrepancy~\cite{long2016unsupervised} or the f-divergences~\cite{Zhuang2015supervised}.
	
	However, distributions with only finitely many similar moments can be very different, see e.g.~\cite{lindsay2000moments}, which implies that classical bounds on the target risk are very loose for general distributions under the proposed setting.
	This brings us to our motivating question \textit{under which further conditions can we expect a discriminative model to perform well on a future test sample given that only finitely many moments are aligned with those of a prior training sample.}
	
	\begin{figure}
	    \begin{center}
	        \includegraphics[width=0.5\linewidth]{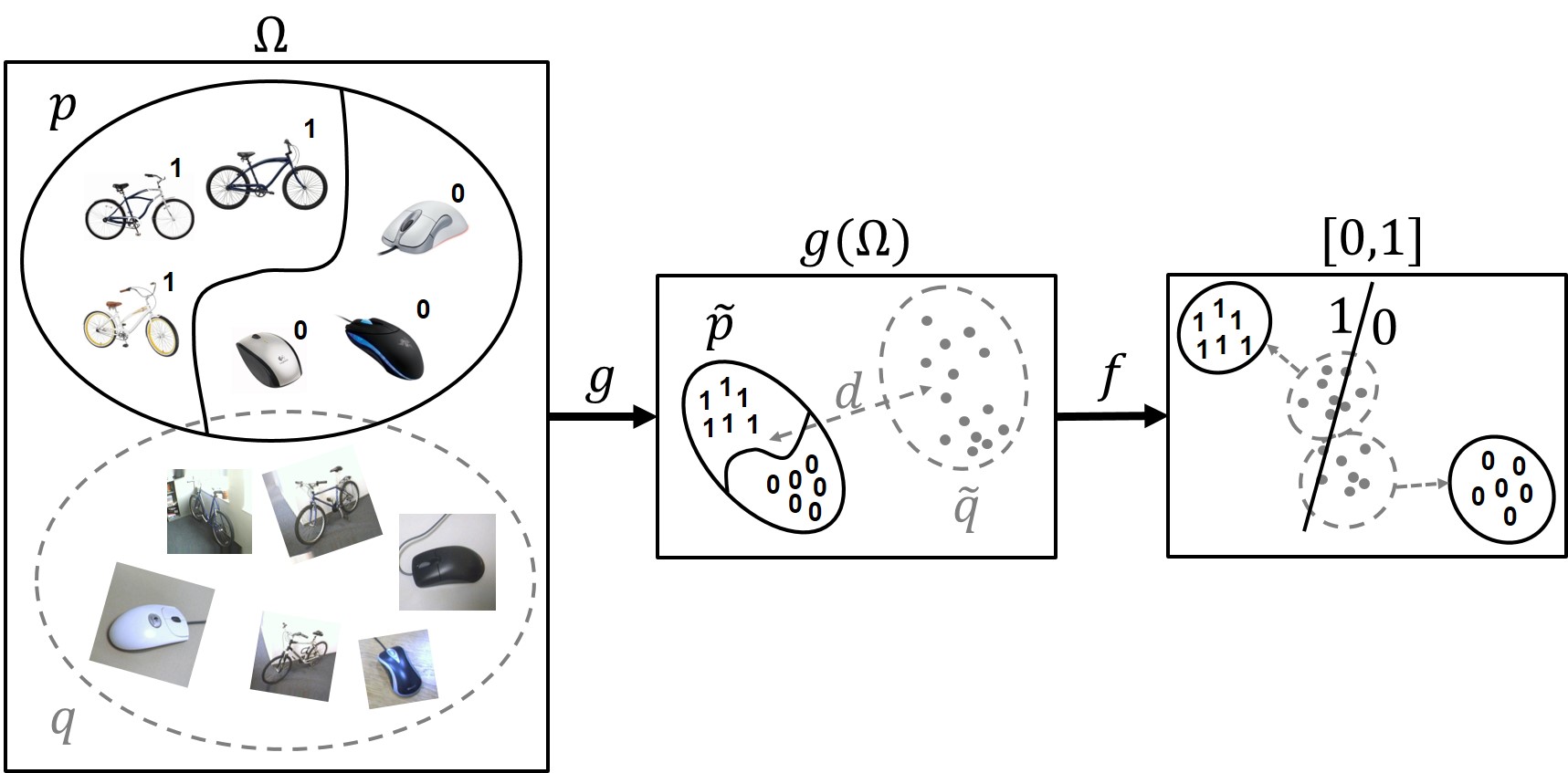}
	    \end{center}
		\caption{Illustration of machine learning model $f\circ g:\Omega\to \{0,1\}$ for unsupervised domain adaptation. Given: {\it Unlabeled} sample following target probability density $q$ and labeled sample following auxiliary probability density $p$; Goal: High performance on target density $q$; Method: Minimizing error on $p$ and moment-distance $d$ between the samples densities $\tilde p$ and $\tilde q$ in the latent space $g(\Omega)$.
		}
		\label{fig:grafical_abstract}
	\end{figure}
	
	We approach this problem by also considering the information encoded in the distributions in addition to the moments.
	Following Information Theory, this information can be modeled by the deviation of the differential entropy to the maximum entropy distribution~\cite{cover2012elements,milev2012moment}, or equivalently, by the error in Kullback-Leibler divergence (KL-divergence) of approximation by exponential families~\cite{csiszar1975}.
	Note that exponential families are the only parametric distributions with fixed compact support having the property that a finite pre-defined vector of moments can serve as sufficient statistic~\cite{koopman1936distributions} and therefore carries all the information about the distribution.
	In addition, exponential families are particularly suitable for our analysis as they include Truncated Normal Distributions arising in many applications.
	
	\begin{figure}[ht]
	    \centering
		\includegraphics[width=0.75\linewidth]{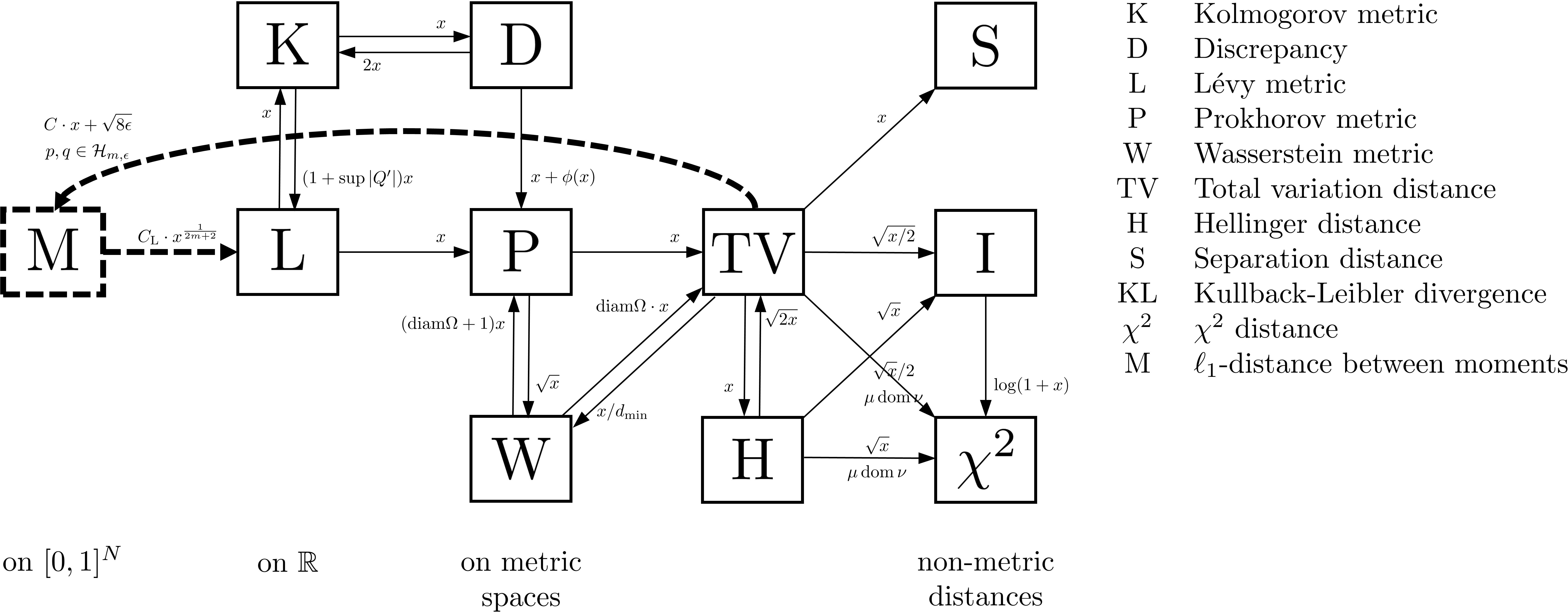}
		\caption{Relationships among probability metrics arranged from weaker (left) to stronger (right) as illustrated in~\cite{gibbs2002choosing} and supplemented by Theorem~\ref{thm:bound_for_smooth_functions} and Lemma~\ref{lemma:rachev_bound} (dashed).
		A directed arrow from $\text{A}$ to $\text{B}$ annotated by a function $h(x)$ means that $d_\text{A}\leq h(d_\text{B})$.
% 		Illustrated distances: Kolmogorov metric (K), Discrepancy (D), L\'evy metric (L), Prokhorov metric (P), Wasserstein metric (W), total variation distance (TV), Hellinger distance (H), Kullback-Leibler divergence (I), $\chi^2$-distance ($\chi^2$), separation distance (S).
		For other notations, restrictions and applicability see~\cite{gibbs2002choosing}.
		}
		\label{fig:metrics}
	\end{figure}
	
	We analyze the convergence of sequences of smooth probability densities in terms of finite moment convergence and the differential entropy of the densities.
	Based on results about the approximation by maximum entropy distributions and polynomials~\cite{barron1991approximation,cox1988approximation} we provide (locally admissible) bounds of the form
	\begin{align}
	\label{eq:form_of_bounds}
	\norm{p-q}_{L^1}\leq C\cdot\norm{\boldsymbol{\mu}_p-\boldsymbol{\mu}_q}_1+\varepsilon,
	\end{align}
	where $\norm{p-q}_{L^1}$ is the $L^1$-difference between the probability densities $p$ and $q$ with respective pre-defined vectors of (sample) moments $\boldsymbol{\mu}_p$ and $\boldsymbol{\mu}_q$,
	$C$ is a constant depending on the smoothness of $p$ and $q$ and
	$\epsilon$ is the error of approximating $p$ and $q$ by (estimators of) maximum entropy distributions measured in terms of differential entropy (and sample size).
	The term $\epsilon^2/2$ can be interpreted as upper bound on the amount of information lost when representing $p$ (or $q$) by its moments $\boldsymbol{\mu}_p$ (or $\boldsymbol{\mu}_q$).
	
	To obtain bounds on the expected misclassification risk of a discriminative model tested on a sample with only finitely many moments similar to those of the training sample, we extend the theoretical bounds proposed in~\cite{ben2010theory} by means of Eq.~\eqref{eq:form_of_bounds}.
	The resulting learning bounds do not make assumptions on the structure of the underlying (unknown) labeling functions.
	In the case of two underlying labeling functions, we obtain error bounds that are relative to the performance of some optimal discriminative function and in the case of one underlying labeling function, i.e. in the covariate-shift setting~\cite{sugiyama2012machine,ben2014domain}, we obtain absolute error bounds.
	
	Our bounds show that a small misclassification risk of the discriminative model can be expected if the misclassification risk of the model on the training sample is small, if the samples are large enough and their densities have high entropy in the respective classes of densities sharing the same finite collection of moments.
	
	As an application, we give bounds on the misclassification risk of some recently proposed moment-based algorithms for unsupervised domain adaptation~\cite{zellinger2017central,zellinger2017robust,nikzad2018domain} illustrated in Figure~\ref{fig:grafical_abstract}.
	Our bounds are uniform for a class of smooth distributions and multivariate moments with solely univariate terms.
	
	This work is structured as follows: Section~\ref{sec:related} describes some related works on domain adaptation, moment-based bounds on distances between distributions and exponential families, Section~\ref{sec:preliminaries} gives the basic notations and preliminaries used to prove our results, Section~\ref{sec:problem} formulates the problem considered in this work, Section~\ref{sec:results} discusses our approach based on convergence rate analysis, Section~\ref{sec:main_result_on_learning_bounds} gives our main result on moment-based learning bounds, Section~\ref{sec:application} applies our result on moment-based algorithms for unsupervised domain adaptation and Section~\ref{sec:details} gives all proofs.
	
	\section{Related Work}
	\label{sec:related}
	
	Most error bounds for classes of discriminative models in statistical learning theory~\cite{vapnik2013nature} are based on the assumption that the training and test sample are drawn from the same distribution and that an underlying labeling function exists.
	
	Ben-David et al.~\cite{ben2007analysis,blitzer2008learning,ben2010theory,ben2010impossibility} extended this theory to a basic formal model of domain adaptation.
	The definition of the domain adaptation problem assumes a training sample with a distribution different from that of a test sample and the existence of two corresponding labeling functions.
	They propose bounds on the misclassification probability of discriminative models for domain adaptation.
	Their bounds are based on the model's misclassification probability on the training sample, a distance between the training and the test sample and the misclassification risk of a reference model that performs well on both distributions.
	Their work includes a bound based on the $L^1$-norm of the difference between the samples densities.
	In~\cite{ben2010impossibility} they show that a high dissimilarity of the distributions makes effective domain adaptation impossible in general situations.
	
	Mansour et al.~\cite{mansour2009domain,mansour2009multiple,mansour2014robust} extended the arguments of Ben-David et al. by more general distance measures~\cite{mansour2009domain}, robustness concepts of algorithms~\cite{mansour2014robust} and tighter error bounds based on the Rademacher complexity.
	
	Recently, Vural~\cite{vural2018bound} considered the problem of transforming two differently distributed samples by means of two different functions in a common latent space and subsequently learn a discriminative model.
	Her assumptions imply that the two different functions do not map differently labeled sample points onto the same point in the latent space.
	
	One assumption commonly made in domain adaptation is the {\it covariate shift} assumption~\cite{sugiyama2005generalization,sugiyama2012machine,ben2014domain} stating one underlying labeling function.
	This assumption is partially motivated by the impossibility of overcoming an error induced by a difference of two general labeling functions, corresponding to the two distributions, in unsupervised domain adaptation~\cite{ben2010theory}.
	
	Following the works mentioned above, questions about the difference of two distributions based on finitely many moments arise.
	The literature about Moment Problems~\cite{akhiezer1965classical,tardella2001note,kleiber2013multivariate,schmudgen2017moment} provides bounds on the difference between two one-dimensional distributions with finitely many coinciding moments.
	However, bounds in the multivariate case remain scarce~\cite{laurent2009sums,di2018multidimensional}.
	
	Lindsay and Basak show~\cite{lindsay2000moments} that the difference between two distributions with finitely many coinciding moments can be very large.
	
	Tagliani et al.~\cite{tagliani2003note,tagliani2002entropy,tagliani2001numerical,milev2012moment} show that, in the case of compactly supported distributions, this difference can be bounded by means of the KL-divergence between the distributions and maximum entropy distributions sharing the same finite collection of moments.
	
	Barron and Sheu~\cite{barron1991approximation} give bounds on the KL-divergence between a compactly supported probability density and its approximation by estimators of maximum entropy distribution.
	They establish rates of convergence for log-density functions assumed to have square integrable derivatives.
	Their analysis involves moment-based bounds.
	\\
	
	Our work is partly motivated by the high performance of moment-based unsupervised domain adaptation methods.
	Recent examples can be found in the areas of deep learning~\cite{zellinger2017central,sun2016deep,koniusz2017domain,li2018adaptive,peng2018cross,ke2018identity,Wei2018GenerativeAG,xing2018adaptive}, kernel methods~\cite{duan2012domain,baktashmotlagh2013unsupervised} and linear regression~\cite{nikzad2018domain}.
	However, none of these works provide theoretical guarantees for a small misclassification risk with exception of~\cite{peng2018moment,zellinger2017robust} in which loose bounds (as a consequence of considering general distributions) are proposed.
	Another motivation of our work is that many common probability metrics admit upper bounds on moment-based distance measures~\cite{rachev2013methods}.
	Gibbs and Su~\cite{gibbs2002choosing} review different useful relations between probability metrics without considering moment distances.
	
	Our work is based on the observation that bounds on the $L^1$-norm of the difference between densities lead to bounds on the misclassification probability of a discriminative model according to Ben-David et al.~\cite{ben2010theory}.
	Following ideas from Tagliani et al.~\cite{tagliani2003note,tagliani2002entropy,tagliani2001numerical} and properties of maximum entropy distributions~\cite{cover2012elements}, we obtain such bounds for multivariate distributions based on the differential entropy.
	Following Barron and Sheu~\cite{barron1991approximation} and Cox~\cite{cox1988approximation}, appropriate regularity assumptions on the distributions are presented under which the KL-divergence based bounds are further upper bounded in terms of (sample) moment differences leading to the form of Eq.~\eqref{eq:form_of_bounds}.
	
	Our results supplement the picture of probability metrics proposed by Gibbs and Su~\cite{gibbs2002choosing} by moment distances, see Figure~\ref{fig:metrics}.
	In contrast to other works, our main result is a learning bound for domain adaptation that does not depend on the knowledge of a full test sample but only on the knowledge of finitely many of its (sample) moments.
	
	\section{Notations and Preliminaries}
	\label{sec:preliminaries}
	
	Throughout the paper, we assume that all distributions are represented by probability density functions \wrt~the Lebesgue reference measure.
	%	However, the present theory can be extended to distributions of arbitrary compact support by the usual basis transformations.
	We denote by $\mathcal{M}(\Omega)$ the set of all probability densities \wrt~the Lebesgue reference measure with support $\Omega\subseteq\mathbb{R}^M$.
	A multiset with elements in $\Omega$ is called a {\it $k$-sized sample drawn from $p$}, denoted by $X_p$, if its elements are realization of $k$ independently identically distributed random variables with probability density function $p$.
	We denote by $\mathbb{R}_m[x_1,\ldots,x_N]$ the set of polynomials with degree up to $m$ in $N$ variables $x_1,\ldots,x_N$.
	Column vectors are denoted by bold symbols, e.g. $\vec x=(x_1,\ldots,x_N)^\text{T}$.
	
	\subsection{Statistical Learning Theory}
	\label{subsec:learning_theory}
	
	Following~\cite{vapnik2013nature}, we formulate the problem of binary classification on an input set $\Omega\subseteq\mathbb{R}^M$:
	Consider a probability density $q\in\mathcal{M}(\Omega)$ and a labeling function $l:\Omega\to [0,1]$, which can have intermediate (expected) values if labeling occurs non-deterministically.
	Given a $k$-sized sample $X_q$ drawn from $q$, the goal of binary classification is to find a discriminative model $f$ from a function class
	\begin{align}
	\label{eq:discriminative_model}
	\mathcal{F} &\subseteq \left\{f:\Omega\to \{0,1\}\mid f~ \text{integrable}\right\}
	\end{align}
	with a small misclassification risk
	\begin{equation}
	\label{eq:misclassification_error}
	\E_{q}\big[|f-l|\big]=\int_{\Omega} |f(\x)-l(\x)|\, q(\x) d\x.
	\end{equation}
	The Vapnik-Chervonenkis dimension (VC-dimension) $d$ of a function class $\mathcal{F}$ defined in Eq.~\eqref{eq:discriminative_model} is the maximum cardinality $|X|$ of a set of non-collinear points $X\subseteq \Omega$ such that for all labeling functions $l:\Omega\to \{0,1\}$ there exists a model $f\in\mathcal{F}$ with zero misclassification risk on the set $X$, i.e. $\sum_{x\in X} |f(x)-l(x)|=0$~\cite{vapnik2013nature}.
	
	\sloppy
	According to Vapnik and Chervonenkis~\cite{vapnik2015uniform}, the following holds with probability at least $1-\delta$ (over the choice of $k$-sized samples $X_q$ drawn from $q$):
	\begin{align}
	\label{eq:vapnik_bound}
	\sup_{f\in\mathcal{F}}\Big|\E_{q}\big[|f-l|\big]- \frac{1}{k}\sum_{\x\in X_q}|f(\x)-l(\x)|\Big|\leq \sqrt{\frac{4}{k} \left( d\log \frac{2 e k}{d} + \log\frac{4}{\delta} \right)}
	\end{align}
	The left-hand side of Eq.~\eqref{eq:vapnik_bound} is called the {\it generalization error} of $\mathcal{F}$.
	According to Eq.~\eqref{eq:vapnik_bound}, a model $f\in\mathcal{F}$ can be expected to perform well on a large enough sample $X_q$ if the empirical misclassification risk ${\frac{1}{k}\sum_{\x\in X_q}|f(\x)-l(\x)|}$ is small.
	However, in domain adaptation, samples from two different distributions are considered~\cite{ben2010theory,vapnik2013nature}.
	
	\subsection{Domain Adaptation}
	\label{subsec:domain_adaptation}
	
	In domain adaptation~\cite{daume2006domain,ben2010theory,sugiyama2012machine}, we consider two different distributions represented by probability densities $p,q\in\mathcal{M}(\Omega)$.
	Following~\cite{ben2010theory}, we consider two corresponding unknown integrable {\it labeling} functions $l_p,l_q:\Omega\to [0,1]$.
	Given two $k$-sized samples $X_p$ and $X_q$ drawn from $p$ and $q$, respectively, and some subsets $Y_p\subseteq l_p(X_p), Y_q\subseteq l_q(X_q)$ of the labels, the goal of domain adaptation is to find an $f\in\mathcal{F}$ with a low misclassification risk as defined in Eq.~\eqref{eq:misclassification_error} and $\mathcal{F}$ as defined in Subsection~\ref{subsec:learning_theory}.
	%	In the specification of {\it unsupervised} domain adaptation in which the the full labeled sample $Y_p$ is given and nothing is known about the labels $Y_q$.
	As Ben-David et al. showed in~\cite{ben2010theory}, the following holds:
	\begin{align}
	\label{eq:simple_da_equation}
	\E_{q}\big[|f-l_q|\big]\leq \E_{p}\big[|f-l_p|\big] + \norm{p-q}_{L^1} + \lambda^*
	\end{align}
	where $\lambda^* = \inf_{h\in\mathcal{F}}\big(\E_p[|h-l_p|]+\E_q[|h-l_q|]\big)$.
	The {\it covariate shift} emphasis~\cite{sugiyama2012machine,ben2014domain} states the equality of the two labeling functions, i.e. $l_p=l_q$.
	In the specification of {\it unsupervised} domain adaptation, the label set $Y_q$ is empty and the misclassification risk of interest, i.e. the error on the left hand side of Eq.~\eqref{eq:simple_da_equation}, cannot be sampled making upper bounds as expressed by Eq.~\eqref{eq:simple_da_equation} particularly interesting.
	
	\subsection{Maximum Entropy Distributions}
	\label{subsec:maxent}
	
	Shannon's differential entropy $h(p)$ of a probability density $p\in\M$ is given by the functional
	\begin{align}
	\label{eq:entropy}
	h(p)=-\int_{[0,1]^N} p(\x) \log p(\x)\, d \x
	\end{align}
	where $\log$ is the natural logarithm~\cite{cover2012elements}.
	The differential entropy is concave, may be negative, and may be potentially infinite if the integral in Eq.~\eqref{eq:entropy} diverges.
	
	For the rest of this work let $\psi(m,N)$ denote the number of monomials of maximum total degree $m$ in $N$ variables excluding the monomial $1$ of degree $0$.
	Note that the number $\zeta(m,N)$ of monomials of total degree $m$ in $N$ variables is equal to the number of weak compositions and therefore $\zeta(m,N)=\binom{N+m-1}{m}$.
	It follows that $\psi(m,N)=\sum_{i=1}^m \zeta(i,N)=\binom{N+m}{m}-1$.
	
	Consider some $\boldsymbol{\phi}=(\phi_1,\ldots,\phi_{\psi(m,N)})^\text{T}$ such that $1,\phi_1,\ldots,\phi_{\psi(m,N)}$ is a basis of $\mathbb{R}_m[x_1,\ldots,x_N]$.
	By the compactness of the support of $p$, the moments
	\begin{align}
	\int\boldsymbol{\phi} p:=\left(\int_{[0,1]^N} \phi_1(\x) p(\x) d\x,\ldots,\int_{[0,1]^N}\phi_{\psi(m,N)}(\x)p(\x)d\x\right)^\text{T}
	\end{align}
	are finite.
	Consider the class
	\begin{align}
	\label{eq:moment_space}
	\PP:=\left\{q\in\M\Bigm\vert \int \boldsymbol{\phi} q=\int\boldsymbol{\phi} p\right\}
	\end{align}
	of densities sharing the same pre-defined moments.	
	The {\it principle of maximum entropy} states that the distribution which best represents the knowledge captured by the moments $\int\boldsymbol{\phi} p$ is that $p^*\in\PP$ having the largest differential entropy~\cite{cover2012elements}.
	This distribution is called {\it maximum entropy distribution} constrained at the moments $\int\boldsymbol{\phi} p$, its probability density is called the {\it maximum entropy density} and will be denoted by $p^*$.
	By the Lebesgue reference measure the density $p$ is not a convex combination of Dirac deltas and, as the elements of $\boldsymbol{\phi}$ form a basis of $\mathbb{R}_m[x_1,\ldots,x_N]$, the maximum entropy density exists~\cite{frontini2011hausdorff,wainwright2008graphical,kleiber2013multivariate}.
	The uniqueness of $p^*$ follows from the concavity of the differential entropy~\cite{cover2012elements,csiszar1975}.
	We denote by $h_{\boldsymbol{\phi}}(p):=h(p^*)$ the entropy of $p^*$.
	It is well known~\cite{csiszar1975} that $p^*=\arg\min_{q\in\mathcal{E}} D(p\Vert q)$ where $D$ refers to the KL-divergence
	\begin{align}
	\label{eq:KL_divergence}
	D(p\Vert q):=\int_{[0,1]^N} p(\x) \log \frac{p(\x)}{q(\x)} d\x
	\end{align}
	and $\mathcal{E}$ is the exponential family consisting of densities of the form
	\begin{align}
	\label{eq:maxent_distr_formula}
	q(\x) = c(\boldsymbol\lambda) \exp\left(-\langle \boldsymbol\lambda,\boldsymbol\phi(\x)\rangle\right)
	\end{align}
	where
	\begin{align}
	    \label{eq:constant_of_normalization}
	    c(\boldsymbol\lambda):=\left(\int_{[0,1]^N} \exp\left(-\langle \boldsymbol\lambda,\boldsymbol\phi(\x)\rangle\right) d\x\right)^{-1}
	\end{align}
	is the constant of normalization, $\boldsymbol\lambda\in\mathbb{R}^{k}$ is a parameter vector and $\langle \x, \y\rangle=x_1 y_1+\ldots+x_k y_k$ is the Euclidean inner product~\cite{cover2012elements,csiszar1975}.
	Consequently, the maximum entropy density $p^*$ can be interpreted as the best approximation of $p$ by densities in $\mathcal{E}$ \wrt~KL-divergence and it is sometimes called {\it information projection} of $p$ onto the space $\mathcal{E}$~\cite{csiszar1975}.
	The KL-divergence (or relative entropy) in Eq.~\eqref{eq:KL_divergence} can be interpreted as the amount of information lost when identifying $p$ with the density $q$~\cite{cover2012elements}.	
	It holds that $D(p\Vert p^*)=h_{\boldsymbol{\phi}}(p)-h(p)$ and that $h_{\boldsymbol{\phi}}(p)\to h(p)$ as $m\to\infty$~\cite{borwein1991convergence,tagliani1999hausdorff}.
	
	\section{Formal Problem Statement} 
	\label{sec:problem}

    We start with a typical scenario encountered in statistical learning theory~\cite{vapnik2013nature} on the one hand and domain adaptation theory~\cite{ben2010theory} on the other hand.
	To this end, we assume source and target densities $p, q\in\mathcal{M}([0,1]^N)$ with corresponding labeling functions $l_p,l_q :[0,1]^N\to[0,1]$ as well as $f\in \mathcal{F}$ from a family of discriminative functions of finite VC-dimension as defined in Subsection~\ref{subsec:domain_adaptation}.
	In this work, furthermore, we postulate the alignment of finitely many moments, i.e. $\int\boldsymbol{\phi} p\approx\int\boldsymbol{\phi} q$ for some $\boldsymbol{\phi}\in\mathbb{R}_m[x_1,\ldots,x_N]^k$.
	
	Our goal is to determine and describe conditions on the densities $p$ and $q$ such that a small target risk $\E_q\big[|f-l_q|\big]$ is induced by a small (sampled) source risk $\E_p\big[|f-l_p|\big]$, a small difference $\norm{\boldsymbol{\mu}_p-\boldsymbol{\mu}_q}_1$ between the (sampled) moments $\boldsymbol{\mu}_p$ and $\boldsymbol{\mu}_p$ and a small distance $\lambda^*$ between the labeling functions $l_p$ and $l_q$ as defined in Eq.~\eqref{eq:simple_da_equation}.
	
	Without further conditions on the densities, a small target risk is not induced by the above mentioned quantities (see Subsection~\ref{subsec:from_moment_convergence_to_l1}).
	Throughout this work, we refer to this problem as the {\it moment adaptation problem} on the unit cube.

	\section{Approach by Convergence Rate Analysis}
	\label{sec:results}
	It will turn out that the postulation of high-entropy distributions satisfying additional smoothness conditions allows us to provide learning bounds.
	Our approach is based on the analysis of the $L^1$-convergence rate of sequences of densities based on the convergence of finitely many of its corresponding moments as motivated in the following.
	
    \subsection{From Moment Similarity to $L^1$-similarity}
	\label{subsec:from_moment_convergence_to_l1}
	
	The postulated similarity of finitely many moments as stated in the moment adaptation problem does not directly lead to the required error guarantees.
	The following Lemma, see Section~\ref{sec:details} for its proof, motivates the consideration of the stronger concept of similarity in $L^1$-difference.
	
	\begin{restatable}{lemmarep}{totalvariation}
	\label{lemma:motivation}
    Let $f\in\mathcal{F}$ and $p,q\in\mathcal{M}([0,1]^N)$ as defined in Section~\ref{sec:problem}. Then the following holds:
    \begin{align}
        \max_{l:[0,1]^N\to [0,1]} \left| \E_{q}\big[|f-l|\big] - \E_{p}\big[|f-l|\big] \right| = \frac{1}{2} \norm{p-q}_{L^1}.
    \end{align}
	\end{restatable}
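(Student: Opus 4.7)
The plan is to reduce the claim to the well-known variational characterization of total variation distance, namely that $\frac{1}{2}\norm{p-q}_{L^1}=\sup_{g:[0,1]^N\to[0,1]}\left|\int g(p-q)\right|$.

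First I would rewrite the expression inside the maximum using linearity of the integral as
\begin{align*}
\E_{q}\bigl[|f-l|\bigr]-\E_{p}\bigl[|f-l|\bigr]=\int_{[0,1]^N}|f(\x)-l(\x)|\bigl(q(\x)-p(\x)\bigr)\,d\x.
\end{align*}
The key observation is then that since $f$ takes values in $\{0,1\}$ and $l$ takes values in $[0,1]$, the function $g:=|f-l|$ equals $l$ on $\{f=0\}$ and $1-l$ on $\{f=1\}$. Hence, as $l$ varies over all integrable functions $[0,1]^N\to[0,1]$, the map $g$ varies over exactly the same class (given any target $g:[0,1]^N\to[0,1]$, define $l=g$ on $\{f=0\}$ and $l=1-g$ on $\{f=1\}$). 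This reduces the problem to computing $\max_g \left|\int g(q-p)\right|$ over all measurable $g$ valued in $[0,1]$.

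Next I would establish the upper bound. Let $A=\{\x:q(\x)\geq p(\x)\}$ and use that $\int(q-p)=0$ (both are densities) to obtain $\int_{A}(q-p)=\int_{A^c}(p-q)=\tfrac{1}{2}\norm{p-q}_{L^1}$. For any admissible $g$, bounding $g$ by $\mathbf{1}_A$ on $A$ and by $0$ on $A^c$ yields $\int g(q-p)\leq\int_A(q-p)=\tfrac{1}{2}\norm{p-q}_{L^1}$, and symmetrically $\int g(q-p)\geq -\tfrac{1}{2}\norm{p-q}_{L^1}$. To show the bound is attained, I would take $g=\mathbf{1}_A$ (which is $[0,1]$-valued and integrable, hence lies in the feasible set), corresponding via the construction above to a valid labeling function $l$, and verify that the resulting integral equals $\tfrac{1}{2}\norm{p-q}_{L^1}$ exactly.

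I do not anticipate a substantial obstacle: the argument is essentially the standard proof that $\dtv(p,q)=\tfrac{1}{2}\norm{p-q}_{L^1}$, and the only extra ingredient is the bijection between admissible $l$ and admissible $g$ induced by the fact that $f$ is $\{0,1\}$-valued. The one minor point worth being careful about is that $l$ must be integrable (as required in Subsection~\ref{subsec:domain_adaptation}), which follows because $g=\mathbf{1}_A$ is measurable and bounded, and $f$ is integrable, so $l$ defined piecewise through $f$ and $g$ inherits measurability on $[0,1]^N$.
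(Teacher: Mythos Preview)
Your proposal is correct. The bijection observation $l\mapsto g:=|f-l|$ (invertible because $f$ is $\{0,1\}$-valued: set $l=g$ on $\{f=0\}$ and $l=1-g$ on $\{f=1\}$) reduces the problem immediately to the standard variational identity $\sup_{0\le g\le 1}\bigl|\int g\,(q-p)\bigr|=\tfrac{1}{2}\norm{p-q}_{L^1}$, and your maximizer $g=\mathbbm{1}_A$ corresponds exactly to the labeling function $l^*$ that the paper builds case by case.

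The paper follows the same underlying idea but in a more hands-on way: it writes down $l^*$ explicitly via four cases depending on $f(\x)$ and the sign of $p(\x)-q(\x)$, verifies $|f-l^*|=\mathbbm{1}_A$ with $A=\{p\ge q\}$, and then tries to bound $\bigl|\int|f-l|\,(p-q)\bigr|$ for arbitrary $l$ by $\int|f-l^*|\,(p-q)$ through a chain of inequalities. Your reduction to the off-the-shelf total-variation formula is more economical and avoids the case analysis; it also sidesteps a problematic step in the paper's upper-bound chain, namely $\bigl|\int|f-l|\,(p-q)\bigr|\le\bigl|\sup_{\x}|f(\x)-l(\x)|\cdot\int(p-q)\bigr|$, whose right-hand side is zero (since $\int p=\int q=1$) and hence cannot dominate the left-hand side in general. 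So your route is both shorter and more robust.
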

	Lemma~\ref{lemma:motivation} shows that the $L^1$-difference between the densities $p$ and $q$ has to be small to achieve our goal.
	Assume the $L^1$-difference is not small, then there exists a labeling function $l_p:=l_q:=l$ such that the source risk $\E_p\big[|f-l_p|\big]$ is not a good indicator for the target risk $\E_q\big[|f-l_q|\big]$.
	Consequently, to achieve our goal, a small difference $\norm{\boldsymbol{\mu}_p-\boldsymbol{\mu}_q}_1$ between the moments has to imply a small $L^1$-difference.
	
	Unfortunately this is not the case without further conditions as even the uniform metric (which is smaller than the $L^1$-difference) can be very large for general densities with aligned moments only, see e.g.~\cite{lindsay2000moments}.
	
	\subsection{Convergence of High-Entropy Distributions}
	\label{subsec:high_entropy_distr}
	
	According to Subsection~\ref{subsec:from_moment_convergence_to_l1} additional assumptions on the densities are required to solve the moment adaptation problem.
	Therefore, we introduce a notion of $\epsilon$-close maximum entropy densities.
	We call a probability density {\it $\epsilon$-close maximum entropy density} if
	\begin{align}
	\label{eq:high_entropy_density}
	h_{\boldsymbol{\phi}}(p)-h(p)\leq\epsilon.
	\end{align}
	for some $\epsilon\geq 0$ and some vector $\boldsymbol{\phi}=(\phi_1,\ldots,\phi_{\psi(m,N)})^\text{T}$ of polynomials such that $1,\phi_1,\ldots,\phi_{\psi(m,N)}$ is a basis of $\mathbb{R}_m[x_1,\ldots,x_N]$.
	
	For some small $\epsilon$, by Eq.~\eqref{eq:maxent_distr_formula} and Pinsker's inequality, an $\epsilon$-close maximum entropy density $p$ fulfills $\norm{p-p^*}_{L^1}\leq \sqrt{2\epsilon}$ and can therefore be interpreted as being well approximable by its corresponding maximum entropy density $p^*$.
	In the language of Bayesian inference the term $D(p\Vert p^*)=h_{\boldsymbol{\phi}}(p)-h(p)$ measures the information gained when one revises one's beliefs from the prior probability density $p^*$ to the posterior probability density $p$.
	In this sense, the amount of information lost when using the moments $\int\boldsymbol{\phi} p$ instead of the density $p$ is at most $\epsilon$ for densities fulfilling Eq.~\eqref{eq:high_entropy_density}.
	Note that we allow $\epsilon$ to be zero to include maximum entropy densities $p=p^*$ in our discussions.
	The following Lemma~\ref{lemma:L1_convergence_in_M} (see Subsection~\ref{subsec:class} for its proof) motivates to consider $\epsilon$-close maximum entropy densities for tackling the moment adaptation problem defined in Section~\ref{sec:problem}.
	
	\begin{restatable}{lemmarep}{Lone}
		\label{lemma:L1_convergence_in_M}
		Consider some $\epsilon\geq 0$ and some vector $\phim=(\phi_1,\ldots,\phi_{\psi(m,N)})^\text{T}$ such that $1,\phi_1,\ldots,\phi_{\psi(m,N)}$ is a basis of $\mathbb{R}_m[x_1,\ldots,x_N]$ and let $p_n\in\M$ for $n\in\{1,\ldots,\infty\}$ be $\epsilon$-close maximum entropy densities with moments denoted by ${\boldsymbol\mu_n=\int \phim p_n}$.
		Then the following holds: 
		\begin{gather}
		\label{eq:convergence_in_M}
		%		\boldsymbol\mu_n\rightarrow\boldsymbol\mu_\infty
		\lim_{n\to\infty}\norm{\boldsymbol{\mu}_n-\boldsymbol{\mu}_\infty}_1=0
		\quad
		\implies
		\quad
		\limsup_{n\to\infty} \norm{p_n-p_\infty}_{L^1}\leq \sqrt{8 \epsilon}.
		%		p_n^*\xrightarrow{\text{KL}} p_\infty^*
		\end{gather}
	\end{restatable}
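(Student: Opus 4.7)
The plan is to bound $\norm{p_n-p_\infty}_{L^1}$ by the triangle inequality routed through the associated maximum entropy densities $p_n^*$ and $p_\infty^*$:
\[
\norm{p_n-p_\infty}_{L^1}\le \norm{p_n-p_n^*}_{L^1}+\norm{p_n^*-p_\infty^*}_{L^1}+\norm{p_\infty-p_\infty^*}_{L^1}.
\]
For the two outer terms I would invoke the identity $D(p\Vert p^*)=h_{\phim}(p)-h(p)$ recalled in Subsection~\ref{subsec:maxent} together with the $\epsilon$-close maximum entropy hypothesis, which gives $D(p_n\Vert p_n^*)\le\epsilon$ and $D(p_\infty\Vert p_\infty^*)\le\epsilon$. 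Pinsker's inequality then bounds each of these two outer terms by $\sqrt{2\epsilon}$, uniformly in $n$.

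The substantive step is to show that the middle term satisfies $\norm{p_n^*-p_\infty^*}_{L^1}\to 0$. By Eq.~\eqref{eq:maxent_distr_formula} each $p_n^*$ lies in the exponential family $\mathcal{E}$ and has the form $p_n^*(\x)=c(\boldsymbol{\lambda}_n)\exp(-\langle\boldsymbol{\lambda}_n,\phim(\x)\rangle)$ for a natural parameter vector $\boldsymbol{\lambda}_n\in\mathbb{R}^{\psi(m,N)}$. The mean map $\boldsymbol{\lambda}\mapsto\int\phim\, p^*_{\boldsymbol{\lambda}}$ is (the negative of) the gradient of the smooth, strictly convex log-partition function $-\log c(\boldsymbol{\lambda})$; strict convexity follows from the linear independence of $1,\phi_1,\ldots,\phi_{\psi(m,N)}$, which forces the Hessian, namely the covariance matrix of $\phim$ under $p^*_{\boldsymbol{\lambda}}$, to be positive definite. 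Hence the mean map is a $C^1$-diffeomorphism onto the interior of the attainable moment polytope. Since $\boldsymbol{\mu}_\infty$ is realised by $p_\infty^*$ (and so lies in that interior) and $\boldsymbol{\mu}_n\to\boldsymbol{\mu}_\infty$, continuity of the inverse diffeomorphism yields $\boldsymbol{\lambda}_n\to\boldsymbol{\lambda}_\infty$; continuity of $c(\cdot)$ and of the exponential then promote this to uniform convergence $p_n^*\to p_\infty^*$ on the compact cube $[0,1]^N$, and therefore in $L^1$. Combining the three estimates gives
\[
\limsup_{n\to\infty}\norm{p_n-p_\infty}_{L^1}\le 2\sqrt{2\epsilon}=\sqrt{8\epsilon}.
\]

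The principal obstacle is the continuity of the inverse mean map at $\boldsymbol{\mu}_\infty$, i.e.\ ruling out that $\boldsymbol{\lambda}_n$ escapes to infinity while $\boldsymbol{\mu}_n$ remains bounded. This amounts to a coercivity/properness property of the mean map on the interior of its image; it is standard for minimal exponential families on a compact base space but is the only nontrivial analytic input beyond what Subsection~\ref{subsec:maxent} already provides. A parameter-free alternative would replace this by Prokhorov weak compactness of $\{p_n^*\}$ on the compact cube, combined with upper semicontinuity of the differential entropy and uniqueness of the maximum entropy density with moments $\boldsymbol{\mu}_\infty$ to identify any weak limit point as $p_\infty^*$; upgrading weak to $L^1$ convergence along that route, however, still requires a further pointwise-convergence ingredient (e.g.\ via Scheffé's lemma), so in practice the parametric argument above is the cleanest.
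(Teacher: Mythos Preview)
Your proposal is correct and follows essentially the same route as the paper: the paper decomposes the argument into Lemma~\ref{lemma:basic_inequality} (triangle inequality through $p_n^*,p_\infty^*$ plus Pinsker's inequality, giving the two $\sqrt{2\epsilon}$ contributions) and Lemma~\ref{lemma:convergence_in_M} (moment convergence $\Rightarrow D(p_n^*\Vert p_\infty^*)\to 0$ via parameter convergence, established there by the Implicit Function Theorem applied to the gradient equation---exactly the local continuity of the inverse mean map you identify as the key analytic input). The only cosmetic difference is that the paper applies Pinsker once more to bound the middle term by $\sqrt{2D(p_n^*\Vert p_\infty^*)}$ and then shows the KL-divergence vanishes, whereas you pass directly from $\boldsymbol{\lambda}_n\to\boldsymbol{\lambda}_\infty$ to uniform (hence $L^1$) convergence of $p_n^*$.
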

	According to Eq.~\eqref{eq:simple_da_equation} a small misclassification risk in Eq.~\eqref{eq:misclassification_error} is implied by a small training error $\E_p[|f-l_p|]$, a small $L^1$-difference of the distributions and a small $\lambda^*$.
	According to Lemma~\ref{lemma:L1_convergence_in_M} this is the case if the densities $p,q\in\M$ have $\epsilon$-close maximum entropy and if the moment vectors $\int \phim p$ and $\int\phim q$ are similar.
	Unfortunately, the convergence in Eq.~\eqref{eq:convergence_in_M} can be very slow for sequences in $\M$ which is shown by the following example.
	
	\begin{example}%[No uniform $L^1$-bound for non uniformly bounded densities]
		\label{ex:truncated_normal}
		Consider the vector $\boldsymbol{\phi}_2=(x,x^2)^\text{T}\in\mathbb{R}_2[x]$ and two one-dimensional Truncated Normal Distributions with densities $p,q\in\M$ with equal variance but different means.
		These distributions are maximum entropy distributions constrained at the moments $\int \boldsymbol{\phi}_2 p$ and $\int\boldsymbol{\phi}_2 q$ and therefore satisfy Eq.~\eqref{eq:high_entropy_density} with $\epsilon=0$.
		It holds that for every moment difference $\norm{\int \phi_2 p-\int\phi_2 q}_1$ one can always find a small enough variance such that $\norm{p-q}_{L^1}$ is large.
		%		Consequently, the convergence in Lemma~\ref{lemma:L1_convergence_in_M} can be slow.
	\end{example}
	
	Example~\ref{ex:truncated_normal} shows that additional properties besides Eq.~\eqref{eq:high_entropy_density} are required to obtain fast convergence rates for sequences in $\M$.
	
	\subsection{Convergence of Smooth High-Entropy Distributions}
	\label{subsec:smooth_high_entropy_distr}
	In this subsection we introduce additional smoothness conditions
	motivated by approximation results of exponential families~\cite{barron1991approximation} and Legendre polynomials~\cite{cox1988approximation}. More precisely, we consider the following set of densities.

% \begin{defi}
% 		\label{def:Hold}
% % 		Consider some moment order $m\geq2$, some polynomial vector  and some $\epsilon>0$.
		
% 		For some $\epsilon\geq 0$ and some $m\geq 2$, we define the set of smooth high-entropy densities by
% 		\begin{align*}
% 		\mathcal{H}_{m,\epsilon}=\Big\{ p\in \M \mid&~ h_{\phim}(p) - h(p) \leq \epsilon, \norm{\log p}_\infty \leq \frac{3m-6}{2},\\
% 		&~ \norm{\partial^m_{x_i} \log p_i}_{L^2}\leq 5^{m-4}~\forall i\in\{1,\ldots,N\} \Big\},
% 		\end{align*}
% 		where $p_i:=\int_0^1\cdots\int_0^1 p\, d x_1\cdots d x_{i-1} d x_{i+1}\cdots d x_N$ denote the marginal densities of $p$ and $\phim=(\phi_1,\ldots,\phi_{m N})^\text{T}$ is a vector of polynomials such that $1,\phi_1,\ldots,\phi_{m N}$ is an orthonormal basis of $\mathrm{Span}()$.
% 	\end{defi}
	
	\begin{defi}
		\label{def:H}
    Let $\epsilon\geq 0$, $m\in\mathbb{N}$, $m\geq 2$ and $\phim=(\phi_1,\ldots,\phi_{m N})^\text{T}$ be a vector of polynomials such that $1,\phi_1,\ldots,\phi_{m N}$ is an orthonormal basis of $\mathrm{Span}(\mathbb{R}_m[x_1]\cup\ldots\cup\mathbb{R}_m[x_N])$.
    We call ${p\in\M}$ a \textit{smooth high-entropy density} iff the following three conditions are satisfied:\\
        $\phantom{\quad\quad}$\textbf{\emph{(A1)}} $h_{\phim}(p)-h(p)\leq \epsilon$\\
        $\phantom{\quad\quad}$\textbf{\emph{(A2)}} $\norm{\log p}_\infty \leq \frac{3m-6}{2}$\\
        % $\phantom{\quad\quad}$\textbf{\emph{(A3)}} $\log p_i\in W_2^m\quad\forall i\in\{1,\ldots,N\}$\\
        $\phantom{\quad\quad}$\textbf{\emph{(A3)}} $\norm{\partial^m_{x_i} \log p_i}_{L^2}\leq 5^{m-4}\quad\forall i\in\{1,\ldots,N\}$,\\
    where $p_i=\int_0^1\cdots\int_0^1 p(x_1,\ldots,x_N)\diff x_1\cdots \diff x_{i-1} \diff x_{i+1}\cdots \diff x_d$ denote the marginal densities of $p$.
    We denote the set of all smooth high-entropy densities by $\mathcal{H}_{m,\epsilon}$.
	\end{defi}

	The set $\H_{m,\epsilon}$ in Definition~\ref{def:H} contains multivariate probability densities $p$ with loosely coupled marginals.
	The reason is the specification of the polynomial vector $\phim$ resulting in maximum entropy densities $p^*$ of densities $p\in\M$ with independent marginals (see Lemma~\ref{lemma:independence}).
	One advantage of this simplification is that no combinatorial explosion (curse of dimensionality) has to be taken into account.
	Such moment vectors have been shown to perform well in practice~\cite{zellinger2017central,zellinger2017robust} and distributions with loosely coupled marginals are created by many learning algorithms~\cite{comon1994independent,hyvarinen2001topographic,bach2002kernel}.
	%	In addition, such polynomial vectors allow to focus on the smoothness of only the marginal densities as additional properties for fast convergence rates.
	Note that the present analysis can be extended to general multi-dimensional polynomial vectors by the usual product basis functions for polynomials.
	However, the use of such expansions is precluded by an exponential growth of the number of moments with the dimension $N$ and the consideration of additional smoothness constraints, see also~\cite{barron1991approximation}.
	
	The definition of the set $\mathcal{H}_{m,\epsilon}$ is independent of the choice of the orthonormal basis $1,\phi_1,\ldots,\phi_{m N}$.
	This follows from properties of the information projection~\cite{barron1991approximation}.
	
	The upper bounds on the $L^\infty$-norm and $L^2$-norm of the derivatives of the log-density functions restrict the smoothness of the densities.
	These bounds can be enlarged at the cost that more complicated dependencies on the shape of the log-density functions have to be considered in the subsequent analysis (see Subsection~\ref{subsec:convergence_of_smooth_functions}).
	It is interesting to observe that, when a density is bounded away from zero, assumptions on the log-densities are not too different from the assumptions on derivatives of the densities itself, see e.g.~\cite[Remark~2]{barron1991approximation}.
	
	$\H_{m,\epsilon}$ contains densities that are well approximable (in KL-divergence) by exponential families:
	For each $\epsilon>0$ and each density $p\in\M$ satisfying the smoothness constraints in $\mathcal{H}_{m,\epsilon}$ (i.e. log-density function bounded by $\frac{3m-6}{2}$ with derivative bounded in $L^2$-norm by $5^{m-1}$) there exists a number of moments $m$ such that $\min_{q\in\mathcal{E}} D(p\Vert q)\leq \epsilon$ for the exponential family $\mathcal{E}$ with sufficient statistic $\phim$.
	This follows from the fact that $h_{\phim}(p)\to h(p)$ for $m\to\infty$.
	
	The following Theorem~\ref{thm:bound_for_smooth_functions} (see Subsection~\ref{subsec:convergence_of_smooth_functions} for its proof) gives an uniform bound for the $L^1$-norm of the difference of densities in $\mathcal{H}_{m,\epsilon}$ in terms of differences of moments.
	
	\begin{restatable}{thmrep}{convergence}
		\label{thm:bound_for_smooth_functions}		
		Consider some $m$, $ \epsilon$, $\phim$ and $\mathcal{H}_{m,\epsilon}$ as in Definition~\ref{def:H} and let $p,q\in\mathcal{H}_{m,\epsilon}$ with moments denoted by $\boldsymbol{\mu}_p=\int\phim p$ and $\boldsymbol{\mu}_q=\int \phim q$. Then the following holds:
		\begin{gather*}
		\norm{\boldsymbol{\mu}_p-\boldsymbol{\mu}_q}_1 \leq \frac{1}{2 C \left(m+1\right)}\\
		\quad\implies\quad\\
		\norm{p-q}_{L^1} \leq \sqrt{2 C}\cdot \norm{\boldsymbol{\mu}_p-\boldsymbol{\mu}_q}_1 + \sqrt{8 \epsilon}
		\end{gather*}
		with the constant $C=2 e^{(3m-1)/2}$.
	\end{restatable}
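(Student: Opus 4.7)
The plan is to interpose the maximum-entropy densities $p^{*}$ and $q^{*}$ associated with $p$ and $q$ and apply the triangle inequality
\[
\norm{p-q}_{L^1} \;\leq\; \norm{p-p^{*}}_{L^1} + \norm{p^{*}-q^{*}}_{L^1} + \norm{q^{*}-q}_{L^1}.
\]
Assumption \emph{(A1)} says $D(p\Vert p^{*})=h_{\phim}(p)-h(p)\leq\epsilon$, and likewise for $q$, so Pinsker's inequality immediately bounds the outer two terms by $\sqrt{2\epsilon}$ each, producing the additive $\sqrt{8\epsilon}$ term in the conclusion. What remains is the purely parametric inequality $\norm{p^{*}-q^{*}}_{L^1} \leq \sqrt{2C}\,\norm{\boldsymbol{\mu}_p-\boldsymbol{\mu}_q}_1$; every assumption other than \emph{(A1)} must pay for this on the exponential-family side.

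To attack the middle term I would use the form \eqref{eq:maxent_distr_formula}, writing $p^{*}=c(\boldsymbol\lambda_p)\exp(-\langle\boldsymbol\lambda_p,\phim\rangle)$ and analogously for $q^{*}$. A direct calculation that uses $\int\phim\, p^{*}=\boldsymbol{\mu}_p$ (and similarly for $q^{*}$) yields
\[
D(p^{*}\Vert q^{*})+D(q^{*}\Vert p^{*}) \;=\; \langle\boldsymbol\lambda_q-\boldsymbol\lambda_p,\,\boldsymbol{\mu}_p-\boldsymbol{\mu}_q\rangle,
\]
since the log-partition terms cancel pairwise. Hölder's inequality then gives $D(p^{*}\Vert q^{*})\leq \norm{\boldsymbol\lambda_p-\boldsymbol\lambda_q}_\infty\cdot \norm{\boldsymbol{\mu}_p-\boldsymbol{\mu}_q}_1$, and Pinsker's inequality applied again delivers
\[
\norm{p^{*}-q^{*}}_{L^1} \;\leq\; \sqrt{2\,\norm{\boldsymbol\lambda_p-\boldsymbol\lambda_q}_\infty\cdot\norm{\boldsymbol{\mu}_p-\boldsymbol{\mu}_q}_1}.
\]
The conclusion with $C=2e^{(3m-1)/2}$ then reduces to proving the \emph{stability estimate} $\norm{\boldsymbol\lambda_p-\boldsymbol\lambda_q}_\infty \leq C\,\norm{\boldsymbol{\mu}_p-\boldsymbol{\mu}_q}_1$ uniformly on $\mathcal{H}_{m,\epsilon}$.

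The hard part will be exactly this stability estimate, which is a quantitative local inversion of the moment map $\boldsymbol\lambda\mapsto\int\phim\, p^{*}_{\boldsymbol\lambda}$. Its Jacobian is the Fisher information matrix / covariance of $\phim$ under $p^{*}_{\boldsymbol\lambda}$, whose smallest eigenvalue controls the Lipschitz constant of the inverse. I would establish a lower bound on this eigenvalue using: a pointwise lower bound on $p^{*}$, which comes from \emph{(A2)} (propagated from $\log p$ to $\log p^{*}$ using the $\sqrt{\epsilon}$-slack already absorbed above) in the form $p^{*}\geq e^{-(3m-6)/2}$; the orthonormality built into the choice of $1,\phi_1,\ldots,\phi_{mN}$, which gives free control on second moments of $\phim$; and the Legendre-polynomial approximation estimates of Cox~\cite{cox1988approximation}, which under \emph{(A3)} bound the uniform approximation error of $\log p_i$ by elements of $\mathbb{R}_m[x_i]$ and thereby translate density-level information into bounds on $\boldsymbol\lambda$. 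The univariate-block structure of $\phim$ decouples the problem: by the independence lemma referenced after Definition~\ref{def:H} the density $p^{*}$ factorises as $\prod_i p^{*}_i$, so both $D(p^{*}\Vert q^{*})=\sum_i D(p^{*}_i\Vert q^{*}_i)$ and the Fisher computation split into $N$ uncoupled one-dimensional problems, neatly avoiding the curse of dimensionality. The smallness hypothesis $\norm{\boldsymbol\mu_p-\boldsymbol\mu_q}_1\leq 1/(2C(m+1))$ is then precisely what is needed to keep the straight-line path $t\mapsto\boldsymbol\lambda_p+t(\boldsymbol\lambda_q-\boldsymbol\lambda_p)$ inside the ball on which the eigenvalue bound holds, so that the local Lipschitz constant extends along the whole path. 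Book-keeping all exponential constants through the Barron--Sheu~\cite{barron1991approximation} / Cox machinery so that exactly $C=2e^{(3m-1)/2}$ materialises is the most delicate part of the argument.
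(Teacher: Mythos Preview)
Your plan coincides with the paper's proof at the structural level: the paper also interposes $p^{*},q^{*}$, uses Pinsker and assumption~(A1) to get the $\sqrt{8\epsilon}$ term (Lemma~\ref{lemma:basic_inequality}), factorises $p^{*}$ into independent marginals so that $D(p^{*}\Vert q^{*})=\sum_i D(p_i^{*}\Vert q_i^{*})$ (Lemma~\ref{lemma:independence}), and then bounds each one-dimensional $D(p_i^{*}\Vert q_i^{*})$ by $C\norm{\boldsymbol{\mu}_{p_i}-\boldsymbol{\mu}_{q_i}}_2^2$ with the explicit constant pinned down by the Cox approximation bounds and the simplification Lemma~\ref{lemma:simplification}. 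The only real difference is packaging: the paper obtains the quadratic KL bound directly from Barron--Sheu's Lemma~5 (Lemma~\ref{lemma:barron_and_sheu_lemma5} here) with $p_0^{*}:=p_i^{*}$ and $\tilde q$ the uniform density, whereas you re-derive that lemma via the Bregman identity plus a Fisher-eigenvalue/stability argument. That is essentially how Barron and Sheu prove their Lemma~5, so the routes converge.

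One genuine slip to correct: you cannot propagate (A2) from $\log p$ to $\log p^{*}$ ``using the $\sqrt{\epsilon}$-slack''. The bound $D(p\Vert p^{*})\leq\epsilon$ yields only $L^1$-closeness via Pinsker and gives no pointwise control on $p^{*}$ whatsoever; in particular $p^{*}\geq e^{-(3m-6)/2}$ does not follow from (A1)+(A2). The paper gets $\norm{\log p^{*}}_\infty$ (equivalently your eigenvalue lower bound) from (A2)+(A3) via the chain $\norm{\log p^{*}}_\infty\leq\norm{\log p}_\infty+\norm{\log p/p^{*}}_\infty$, where the second term is bounded by Corollary~\ref{lemma:barron_proof_thm3} once Cox's estimates (Lemma~\ref{lemma:cox}) have turned the Sobolev bound (A3) into the uniform and $L^2$ polynomial-approximation errors $\gamma,\xi$. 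You already invoke Cox and (A3) later in your plan, so the ingredients are all there; just discard the $\sqrt{\epsilon}$-propagation step and use the Barron--Sheu corollary to transfer the $L^\infty$ bound from $\log p$ to $\log p^{*}$.
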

	
	The more moments we consider in Theorem~\ref{thm:bound_for_smooth_functions}, i.e. the higher $m$ is, the richer is the class $\mathcal{H}_{m,\epsilon}$.
	However, with increasing $m$, the constant $C$ also increases.
	This constant depends exponentially on $m$ which is induced by the definition of the upper bounds on the norms of the derivatives in the Definition~\ref{def:H}.
	However, it is interesting to consider more general upper bounds $c_\infty\geq\norm{\log p}_\infty$ and $c_r\geq \norm{\partial^m_{x_i}\log p_i}_{L^2}$ instead.
	This leads to the constant $C$ as in Lemma~\ref{lemma:convergence_in_H} which depends double exponentially on the upper bounds $c_\infty$ and $c_r$.
	However, the double exponential dependency weakens when considering higher numbers $r$ of derivatives or numbers $m$ of moments (see Remark~\ref{remark:dependency}).
	Thus, the main influence is an exponential dependency on the upper log-density bound $c_\infty$.
	
	The considered dimension $N$ of the unit cube effects the number of moment differences considered in the $1$-norms in Theorem~\ref{thm:bound_for_smooth_functions}.
	By the specification of the vector $\phim$, this number increases only linearly with the dimension.
	
	Theorem~\ref{thm:bound_for_smooth_functions} together with Eq.~\eqref{eq:simple_da_equation} give a first result towards the goal of the moment adaptation problem: An upper bound on the misclassification risk of the discriminative model based on differences of moments:

	\begin{restatable}{correp}{corabsbound}
		\label{cor:bound_for_smooth_functions}
		Consider the set of high-entropy distributions $\mathcal{H}_{m,\epsilon}$ with $m$, $\epsilon$ and $\phim$ as in Definition~\ref{def:H}. Let $p,q\in\mathcal{H}_{m,\epsilon}$ with moments denoted by ${\boldsymbol{\mu}_p=\int\phim p}, {\boldsymbol{\mu}_q=\int \phim q}$ and let ${l_p,l_q:[0,1]^N\to [0,1]}$ be two labeling functions.
        Then the following holds for all $f:[0,1]^N\to\{0,1\}$:
		\begin{gather*}
		\norm{\boldsymbol{\mu}_p-\boldsymbol{\mu}_q}_1 \leq \frac{1}{2 C \left(m+1\right)}\\
		\quad\implies\quad\\
		\E_{q}\big[|f-l_{q}|\big] \leq \E_{p}\big[|f-l_{p}|\big] + \sqrt{2 C}\cdot \norm{\boldsymbol{\mu}_p-\boldsymbol{\mu}_q}_1 + \sqrt{8 \epsilon} + \lambda^*
		\end{gather*}
		with $C=2 e^{(3m-1)/2}$ and $\lambda^* = \inf_{h\in\mathcal{F}}\big(\E_p[|h-l_p|]+\E_q[|h-l_q|]\big)$.
	\end{restatable}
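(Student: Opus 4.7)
The plan is to combine two ingredients that are already available in the excerpt: the domain adaptation inequality of Ben-David et al., restated as Eq.~\eqref{eq:simple_da_equation}, and the moment-based $L^1$ bound provided by Theorem~\ref{thm:bound_for_smooth_functions}. Since the hypothesis of the corollary places $p$ and $q$ in $\mathcal{H}_{m,\epsilon}$, both tools apply verbatim; the remaining work is only a concatenation of inequalities.

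First I would instantiate Eq.~\eqref{eq:simple_da_equation} with the given $p$, $q$, $l_p$, $l_q$ and the fixed $f:[0,1]^N\to\{0,1\}$. This yields
\begin{equation*}
\E_{q}\big[|f-l_{q}|\big]\;\leq\;\E_{p}\big[|f-l_{p}|\big]+\norm{p-q}_{L^1}+\lambda^*,
\end{equation*}
where $\lambda^*=\inf_{h\in\mathcal{F}}\bigl(\E_p[|h-l_p|]+\E_q[|h-l_q|]\bigr)$, exactly as in the corollary statement. Note that the upper bound holds for every integrable $f:[0,1]^N\to\{0,1\}$, so it applies in particular to those in $\mathcal{F}$ but also more generally.

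Second, because both densities lie in $\mathcal{H}_{m,\epsilon}$ with the polynomial vector $\phim$ as in Definition~\ref{def:H}, and because the hypothesis $\norm{\boldsymbol{\mu}_p-\boldsymbol{\mu}_q}_1\leq\frac{1}{2C(m+1)}$ with $C=2e^{(3m-1)/2}$ matches the smallness assumption of Theorem~\ref{thm:bound_for_smooth_functions}, I can invoke that theorem to conclude
\begin{equation*}
\norm{p-q}_{L^1}\;\leq\;\sqrt{2C}\cdot\norm{\boldsymbol{\mu}_p-\boldsymbol{\mu}_q}_1+\sqrt{8\epsilon}.
\end{equation*}

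Substituting this bound into the display from the first step immediately gives the conclusion of the corollary. There is no genuine obstacle here: the corollary is essentially a packaged form of Theorem~\ref{thm:bound_for_smooth_functions} plugged into the standard domain adaptation inequality. The only care required is to verify that the smallness condition on $\norm{\boldsymbol{\mu}_p-\boldsymbol{\mu}_q}_1$ carries through unchanged (it does, since the constants $C$ and $m$ are the same in both statements) and that the bound from Eq.~\eqref{eq:simple_da_equation} is applicable to arbitrary $f:[0,1]^N\to\{0,1\}$ as required by the corollary.
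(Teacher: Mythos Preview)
Your proposal is correct and follows precisely the approach indicated in the paper: the corollary is obtained by substituting the $L^1$ bound of Theorem~\ref{thm:bound_for_smooth_functions} into the domain adaptation inequality Eq.~\eqref{eq:simple_da_equation}. The paper does not give a separate formal proof, stating only that the result follows from these two ingredients, which is exactly what you carry out.
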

	Corollary~\ref{cor:bound_for_smooth_functions} gives an error bound on the target error that is relative to the error $\lambda^*$ of some optimal discriminative function.
	This is similar to the assumption in \textit{probably approximately correct learning theory} that there exists a perfect discriminative model in the underlying model class~\cite{quionero2009dataset}.
	The error $\lambda^*$ can be eliminated in the case of equal labeling functions, i.e. $l_p=l_q$, by using the bound of~\cite[Theorem~1]{ben2010theory} instead of Eq.~\eqref{eq:simple_da_equation}.

	Further implications of Corollary~\ref{cor:bound_for_smooth_functions} are discussed in more detail (together with the sample case) in Section~\ref{sec:main_result_on_learning_bounds}.
	
	\subsection{Relationship to other Probability Metrics}
	\label{subsec:contrib_to_picture_of_prob_metrics}
	
	Before stating our main result on learning bounds, let us establish an inequality relating the difference between moment vectors to the probability metrics considered in~\cite{gibbs2002choosing}, one of which being the L\'evy metric.
	
	\begin{defi}
	\label{def:levys_metric}
	    The L\'evy metric $d_L$ between two cumulative distribution functions $P,Q$ on the real line is defined by~\cite{levy1925probability}
	    \begin{align*}
	        d_\text{L}(P,Q) &= \inf{\left\{ \varepsilon\mid P(x-\varepsilon) - \varepsilon\leq Q(x)\leq P(x+\varepsilon) + \varepsilon, \forall x\in\mathbb{R} \right\}}.
	    \end{align*}
	\end{defi}
	The L\'evy metric assumes values in $[0,1]$, see e.g.~\cite{gibbs2002choosing}.
	
	\begin{restatable}{lemmarep}{lemmarachevbound}
	\label{lemma:rachev_bound}
	Let $m\in\mathbb{N}$ with $m\geq 2$, $\boldsymbol{\phi}\in\left(\mathbb{R}_m[x]\right)^K$ be a vector of moments with maximum degree $m$ and let $p,q\in\mathcal{M}([0,1])$ with cumulative distribution functions $P,Q$ and moments denoted by $\boldsymbol{\mu}_p:=\int \boldsymbol{\phi} p$ and $\boldsymbol{\mu}_q:=\int \boldsymbol{\phi} q$. Then there exist some constants $C_\text{L},M_L\in\mathbb{R}$ such that
	\begin{align}
	    \label{eq:rachev_inequality}
	    d_\text{L}(P,Q)\leq M_L\quad\implies\quad\norm{\boldsymbol{\mu}_p - \boldsymbol{\mu}_q}_1\leq C_\text{L}\cdot d_\text{L}(P,Q)^{\frac{1}{2 m+2}}.
	\end{align}
	\end{restatable}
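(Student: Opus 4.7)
The plan is to reduce moment differences to an integral against the CDF difference via Stieltjes integration by parts, then to bound the $L^1$-distance between the CDFs by the Lévy metric, and finally to downgrade the resulting linear bound to the stated weaker power $1/(2m+2)$.

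First, for each component $\phi_i$ of $\boldsymbol{\phi}$ I apply Lebesgue--Stieltjes integration by parts:
\begin{equation*}
\mu_{p,i}-\mu_{q,i}=\int_0^1 \phi_i(x)\,d(P-Q)(x)=-\int_0^1 \phi_i'(x)\bigl(P(x)-Q(x)\bigr)\,dx,
\end{equation*}
where the boundary term $[\phi_i(x)(P(x)-Q(x))]_{0^-}^{1}$ vanishes because $P,Q$ are CDFs of distributions on $[0,1]$, so $P(1)=Q(1)=1$ and $P(0^-)=Q(0^-)=0$. Since $\phi_i\in\mathbb{R}_m[x]$ is a fixed polynomial on the compact interval $[0,1]$, its derivative is uniformly bounded by some $B_i$ depending only on $\phi_i$ and $m$. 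Hence $|\mu_{p,i}-\mu_{q,i}|\leq B_i\cdot \norm{P-Q}_{L^1([0,1])}$, and summing over the $K$ coordinates yields $\norm{\boldsymbol{\mu}_p-\boldsymbol{\mu}_q}_1\leq B\cdot \norm{P-Q}_{L^1([0,1])}$ with $B=\sum_i B_i$.

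Second, I translate $d_\text{L}(P,Q)$ into an $L^1$-bound on the CDF difference. Writing $\varepsilon:=d_\text{L}(P,Q)$, Definition~\ref{def:levys_metric} gives, for every $x\in\mathbb{R}$,
\begin{equation*}
|P(x)-Q(x)|\leq \max\bigl\{P(x+\varepsilon)-P(x),\,P(x)-P(x-\varepsilon)\bigr\}+\varepsilon.
\end{equation*}
Integrating over $[0,1]$ and using the substitutions $u=x\pm\varepsilon$, together with the fact that $P$ is monotone with values in $[0,1]$ and vanishes on $(-\infty,0)$, controls the right-hand side by $3\varepsilon$. Combined with the previous step, this produces a \emph{linear} bound $\norm{\boldsymbol{\mu}_p-\boldsymbol{\mu}_q}_1\leq 3B\cdot d_\text{L}(P,Q)$.

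Finally, to obtain the stated (weaker) inequality with exponent $1/(2m+2)$, I fix any $M_L\in(0,1]$ and observe that whenever $d_\text{L}(P,Q)\leq M_L$ one has $d_\text{L}(P,Q)\leq M_L^{(2m+1)/(2m+2)}\cdot d_\text{L}(P,Q)^{1/(2m+2)}$; hence the linear bound upgrades to the claimed form with $C_\text{L}:=3B\cdot M_L^{(2m+1)/(2m+2)}$. The only delicate step is the Stieltjes integration by parts, since $P,Q$ may carry atoms and are only right-continuous; once the boundary values at $0,1$ are handled correctly, the rest is routine. I note that our argument actually yields the strictly stronger linear dependence on $d_\text{L}$; the exponent $1/(2m+2)$ in the statement matches the classical form of Rachev's inequality~\cite{rachev2013methods} for distributions on all of $\mathbb{R}$, where tail contributions preclude a linear bound.
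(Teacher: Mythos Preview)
Your proof is correct and takes a genuinely different, more elementary route than the paper's. The paper does not use integration by parts at all; instead it invokes a theorem of Rachev (Theorem~10.3.6 in~\cite{rachev2013methods}) which bounds raw-moment differences by suprema of the characteristic-function difference, then passes through Zolotarev's $\lambda$-metric $d_\text{Z}$ and the inequality $d_\text{Z}(P,Q)\leq\sqrt{(2K+24\sqrt{d_\text{L}}+1/2)\,d_\text{L}}$ from~\cite{zolotarev1975two} to reach the Lévy distance. The exponent $1/(2m+2)$ there is inherited from Rachev's theorem, which is formulated for distributions on all of $\mathbb{R}$ with only moment-growth assumptions; in that generality, tail contributions genuinely force the sub-linear rate. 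Your argument exploits the compact support $[0,1]$ directly: Stieltjes integration by parts reduces the moment difference to $\int_0^1\phi'_i\,(P-Q)$, and the Lévy sandwich plus monotonicity of $P$ gives $\norm{P-Q}_{L^1([0,1])}\leq 3\,d_\text{L}$, yielding a bound that is even \emph{linear} in $d_\text{L}$. You then artificially downgrade this to the stated exponent, which is harmless. So your approach is both shorter and strictly sharper on $[0,1]$; the paper's route, while heavier, has the advantage of being a direct instantiation of machinery that applies without compact support.

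One small remark: your caveat about atoms is unnecessary here, since by the paper's convention $\mathcal{M}([0,1])$ consists of probability \emph{densities} with respect to Lebesgue measure, so $P$ and $Q$ are absolutely continuous and the boundary terms vanish without any subtlety.
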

	As a consequence of Eq.~\eqref{eq:rachev_inequality}, the value of $\norm{\boldsymbol{\mu}_p - \boldsymbol{\mu}_q}_1$ can be upper bounded by most other common probability metrics.
	
	Theorem~\ref{thm:bound_for_smooth_functions} upper bounds the $L^1$-difference between smooth high-entropy densities, or equivalently, upper bounds the total variation distance (see~\cite{gibbs2002choosing} for its definition and Lemma~\ref{lemma:motivation} for the equivalence proof).
	
	Figure~\ref{fig:metrics} shows how the herein applied moment-based metric relates to other probability metrics.
	
	\section{Main Result on Learning Bounds}
	\label{sec:main_result_on_learning_bounds}
	
	Theorem~\ref{thm:problem_solution} gives a first solution to the moment adaptation problem as described in Section~\ref{sec:problem}. Its proof is outlined in Subsection~\ref{subsec:problem_solution_proof}.
	
	\begin{restatable}{thmrep}{mainresult}
		\label{thm:problem_solution}
		Consider some $m$, $\epsilon$, $\phim$ and $\mathcal{H}_{m,\epsilon}$ as in Definition~\ref{def:H} and a function class $\mathcal{F}$ with finite VC-dimension $d$.
		Consider two probability densities $p,q\in\mathcal{H}_{m,\epsilon}$ and two (integrable) labeling functions ${l_p,l_q:[0,1]^N\to [0,1]}$.
		
		Let $X_p$ and $X_q$ be two arbitrary $k$-sized samples drawn from $p$ and $q$, respectively, and denote by $\widehat{\boldsymbol{\mu}}_p=\frac{1}{k}\sum_{\x\in X_p}\boldsymbol{\phi}_m(\x)$ and $\widehat{\boldsymbol{\mu}}_q=\frac{1}{k}\sum_{\x\in X_q}\boldsymbol{\phi}_m(\x)$ the corresponding sample moment vectors.
		
		Then, for every $\delta\in (0,1)$ and all $f\in\mathcal{F}$ the following holds with probability at least $1-\delta$ (over the choice of samples):
		If
		\begin{flalign}
		%		\label{eq:large_enough_sample_size}
		4 C^2(m+1)^2 m \delta^{-1} \leq k
		\end{flalign}
		and
		\begin{flalign}
		%		\label{eq:small_enough_moment_diff}
		\norm{\widehat{\boldsymbol{\mu}}_p-\widehat{\boldsymbol{\mu}}_q}_1 \leq \left(2 (m+1) e C\right)^{-1}
		\end{flalign}
		then
		\begin{align}
		\label{eq:moment_adapt_result_bound}
		\begin{split}
		\E_{q}\big[|f-l_q|\big]\leq\, &\frac{1}{k}\sum_{\x\in X_p}|f(\x)-l_p(\x)| + 
		\sqrt{\frac{4}{k} \left( d\log \frac{2 e k}{d} + \log\frac{4}{\delta} \right)}+ \lambda^*\\
		&+ \sqrt{2 e C} \norm{\widehat{\boldsymbol{\mu}}_p-\widehat{\boldsymbol{\mu}}_q}_1 + \sqrt{8 C} \sqrt{\frac{N m}{k\delta}} + \sqrt{8\epsilon}
		\end{split}
		%		\label{eq:vapnik_bound}
		\end{align}
		where $C=2 e^{(3m-1)/2}$ and $\lambda^* = \inf_{h\in\mathcal{F}}\big(\E_p[|h-l_p|]+\E_q[|h-l_q|]\big)$.
	\end{restatable}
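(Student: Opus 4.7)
The plan is to chain together three earlier results — the Ben-David et al.\ decomposition (Eq.~\eqref{eq:simple_da_equation}), the Vapnik uniform deviation bound (Eq.~\eqref{eq:vapnik_bound}), and the moment-based $L^1$ bound (Theorem~\ref{thm:bound_for_smooth_functions}) — and then close the gap between true and sample moments by a concentration argument. First, applying Eq.~\eqref{eq:simple_da_equation} gives
\[
\E_q\big[|f-l_q|\big] \le \E_p\big[|f-l_p|\big] + \norm{p-q}_{L^1} + \lambda^*.
\]
Next, Eq.~\eqref{eq:vapnik_bound} applied to $X_p$ with a share of the failure budget (say $\delta/2$) replaces $\E_p\big[|f-l_p|\big]$ by the empirical source risk plus the VC-complexity term that already appears in Eq.~\eqref{eq:moment_adapt_result_bound}. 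Theorem~\ref{thm:bound_for_smooth_functions} applied to the true moments $\boldsymbol{\mu}_p = \int\phim p$ and $\boldsymbol{\mu}_q = \int\phim q$ then yields $\norm{p-q}_{L^1} \le \sqrt{2C}\,\norm{\boldsymbol{\mu}_p - \boldsymbol{\mu}_q}_1 + \sqrt{8\epsilon}$, provided the precondition $\norm{\boldsymbol{\mu}_p - \boldsymbol{\mu}_q}_1 \le (2C(m+1))^{-1}$ holds.

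The remaining step is to pass from true moments $\boldsymbol{\mu}$ to sample moments $\widehat{\boldsymbol{\mu}}$, via the triangle inequality
\[
\norm{\boldsymbol{\mu}_p - \boldsymbol{\mu}_q}_1 \le \norm{\widehat{\boldsymbol{\mu}}_p - \widehat{\boldsymbol{\mu}}_q}_1 + \norm{\boldsymbol{\mu}_p - \widehat{\boldsymbol{\mu}}_p}_1 + \norm{\boldsymbol{\mu}_q - \widehat{\boldsymbol{\mu}}_q}_1,
\]
controlling each sampling-error term by concentration. Because $1,\phi_1,\ldots,\phi_{mN}$ is orthonormal in $L^2([0,1]^N)$, one has $\int\phi_j^2 = 1$; assumption \textbf{\emph{(A2)}} of Definition~\ref{def:H} forces $\norm{p}_\infty \le e^{(3m-6)/2}$, so $\mathrm{Var}_p(\phi_j(\x)) \le \E_p[\phi_j(\x)^2] \le e^{(3m-6)/2}$ for every coordinate. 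Markov's inequality applied to $\norm{\widehat{\boldsymbol{\mu}} - \boldsymbol{\mu}}_2^2$ (whose expectation is at most $mN\,e^{(3m-6)/2}/k$), combined with the Cauchy--Schwarz conversion $\norm{v}_1 \le \sqrt{mN}\,\norm{v}_2$ over the $mN$ coordinates, yields a bound of the form $\norm{\widehat{\boldsymbol{\mu}} - \boldsymbol{\mu}}_1 \lesssim \sqrt{mN\,e^{(3m-6)/2}/(k\delta)}$ with probability at least $1-\delta/4$ on each of the two samples, which, after multiplication by the $\sqrt{2C}$ from Theorem~\ref{thm:bound_for_smooth_functions}, is precisely the source of the $\sqrt{8C}\,\sqrt{Nm/(k\delta)}$ summand in Eq.~\eqref{eq:moment_adapt_result_bound}.

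The main obstacle is a consistency check: one must verify that the two stated hypotheses — the sample-size bound $k \ge 4C^2(m+1)^2 m \delta^{-1}$ and the sample-moment smallness $\norm{\widehat{\boldsymbol{\mu}}_p - \widehat{\boldsymbol{\mu}}_q}_1 \le (2(m+1)eC)^{-1}$ — together imply, on the high-probability event, the precondition $\norm{\boldsymbol{\mu}_p - \boldsymbol{\mu}_q}_1 \le (2C(m+1))^{-1}$ required for Theorem~\ref{thm:bound_for_smooth_functions}. This is where the slack factor of $e$ between the two reciprocals exactly absorbs the two concentration errors, and where the quadratic appearance of $C(m+1)$ in the sample-size bound ensures that $\sqrt{mN\,e^{(3m-6)/2}/(k\delta)}$ is small enough relative to $1/(C(m+1))$ to keep the triangle-inequality sum below threshold. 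The proof then concludes by a union bound over the three failure events (the VC deviation and the two sample-moment deviations), whose probabilities sum to at most $\delta$, followed by a bookkeeping step collecting constants into the exact form of Eq.~\eqref{eq:moment_adapt_result_bound}.
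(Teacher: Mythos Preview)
Your high-level chain (Ben-David decomposition, then VC bound, then control of $\norm{p-q}_{L^1}$) is the same as the paper's, but the crucial third step is handled quite differently, and your version does not recover the stated bound.

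The paper does \emph{not} apply Theorem~\ref{thm:bound_for_smooth_functions} to the true moments and then close the gap to sample moments by concentration. Instead it introduces the maximum-entropy densities $\widehat p,\widehat q$ constrained at the \emph{sample} moments $\widehat{\boldsymbol\mu}_p,\widehat{\boldsymbol\mu}_q$, decomposes at the density level
\[
\norm{p-q}_{L^1}\le \norm{\widehat p-\widehat q}_{L^1}+\norm{\widehat p-p^*}_{L^1}+\norm{\widehat q-q^*}_{L^1}+\norm{p^*-p}_{L^1}+\norm{q^*-q}_{L^1},
\]
uses Pinsker and Lemma~\ref{lemma:independence} to reduce to marginals, and then invokes Lemma~\ref{lemma:sample_convergence_in_H} (which rests on Barron--Sheu, Corollary~\ref{cor:barron_sample}) to bound $D(p_i^*\Vert\widehat p_i)\le C e^{-c_\infty}\tfrac{m}{k\delta}$ and $D(\widehat p_i\Vert\widehat q_i)\le eC\norm{\widehat{\boldsymbol\mu}_{p_i}-\widehat{\boldsymbol\mu}_{q_i}}_2^2$. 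Summing the former over $i=1,\dots,N$ and taking $\sqrt{2\,\cdot}$ is exactly what produces the $\sqrt{8C}\sqrt{Nm/(k\delta)}$ term; the latter yields the $\sqrt{2eC}\,\norm{\widehat{\boldsymbol\mu}_p-\widehat{\boldsymbol\mu}_q}_1$ term. No separate concentration of $\widehat{\boldsymbol\mu}$ around $\boldsymbol\mu$ is performed.

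Your route has concrete gaps. First, the Cauchy--Schwarz step $\norm{v}_1\le\sqrt{mN}\,\norm{v}_2$ combined with $\E\norm{\widehat{\boldsymbol\mu}-\boldsymbol\mu}_2^2\le mN\,e^{(3m-6)/2}/k$ gives $\norm{\widehat{\boldsymbol\mu}-\boldsymbol\mu}_1\lesssim mN\sqrt{e^{(3m-6)/2}/(k\delta)}$, not $\sqrt{mN\,e^{(3m-6)/2}/(k\delta)}$; you have lost a factor $\sqrt{mN}$. Second, even granting your claimed bound, multiplying by $\sqrt{2C}$ and adding the two samples gives $\sqrt{8C\cdot mN\cdot e^{(3m-6)/2}/(k\delta)}$, which carries an extra $e^{(3m-6)/4}$ compared to the target $\sqrt{8C}\sqrt{Nm/(k\delta)}$ (and misses the $e$ in $\sqrt{2eC}$ on the main term). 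Third, the ``slack factor of $e$'' argument for verifying the precondition of Theorem~\ref{thm:bound_for_smooth_functions} breaks down in $N$: the sample-size hypothesis $k\ge 4C^2(m{+}1)^2 m\,\delta^{-1}$ contains no $N$, whereas your moment-vector concentration error grows with $N$, so you cannot guarantee $\norm{\boldsymbol\mu_p-\boldsymbol\mu_q}_1\le (2C(m{+}1))^{-1}$ for large $N$. The paper sidesteps this because Lemma~\ref{lemma:sample_convergence_in_H} verifies the needed condition coordinatewise (in one dimension) and $N$ enters only through summation of the KL terms afterwards.
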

	
% 	Theorem~\ref{thm:problem_solution} directly extends the bounds on the target error in the statistical learning theory proposed by Vapnik and Chervonenkis~\cite{vapnik2015uniform} (see Subsection~\ref{subsec:learning_theory}) and the domain adaptation theory proposed by Ben-David et al.~\cite{ben2010theory} (see Subsection~\ref{subsec:domain_adaptation}) and gives a solution to the moment adaptation problem.
	
 Theorem~\ref{thm:problem_solution} directly extends the bounds on the target error (compare also Eq.~\eqref{eq:vapnik_bound}) in the statistical learning theory proposed by Vapnik and Chervonenkis~\cite{vapnik2015uniform} and the domain adaptation theory (compare also Eq.~\eqref{eq:simple_da_equation}) proposed by Ben-David et al.~\cite{ben2010theory} and gives a solution to the moment adaptation problem.
	
	Note that according to Vapnik and Chervonenkis~\cite{vapnik2015uniform}, a small misclassification risk of a discriminative model is induced by a small training error, if the sample size is large enough. Due to Ben-David et al.~\cite{ben2010theory}, this statement still holds for a test sample with a distribution different from the training sample, if the $L^1$-difference of the distributions is small and if there exists a model that can perform well on both distributions (error $\lambda^*$ in Eq.~\eqref{eq:simple_da_equation} is small).
	
	According to Theorem~\ref{thm:problem_solution}, a small misclassification risk of a model on a test sample with moments $\widehat{\boldsymbol{\mu}}_q$ is induced by a small error on a training sample with moments $\widehat{\boldsymbol{\mu}}_p$ being similar to $\widehat{\boldsymbol{\mu}}_q$, if the the following holds: The sample size is large enough, the densities $p$ and $q$ are smooth high-entropy densities with loosely coupled marginals, i.e. $p,q\in\mathcal{H}_{m,\epsilon}$, and there exists a model that can perform well on both densities.
	
	See Lemma~\ref{lemma:sample_convergence_in_H} in Subsection~\ref{subsec:problem_solution_proof} for improved assumptions and an improved constant $C$ with the drawback of some additional and more complicated assumptions on the smoothness of the densities.
	
	It is interesting to investigate in more detail the terms in Eq.~\eqref{eq:moment_adapt_result_bound} that depend on the sample size $k$ (chosen equally for both samples for better readability):
	%	For fixed moment order $m$, dimension $N$, VC-dimension $d$ and probability $1-\delta$, both terms are of complexity $O(k^{-1/2})$.
	Let us therefore assume a fixed number of moments $m$ and a given probability ${1-\delta}$.
	For model classes with VC-dimension $d\geq N$ (i.e. supra-linear models) and for a large sample size $k > d$, the complexity of the proposed term is bounded by $O(\sqrt{d/k})$ which is smaller than the complexity $O(\sqrt{d/k \log( 2ek/d)})$ of the classical error bound in the first line of Eq.~\eqref{eq:moment_adapt_result_bound} as proposed in~\cite{vapnik2015uniform}.
	However, the classical term decreases faster with complexity $O(\sqrt{\log(1/\delta)})$ as the probability $1-\delta$ decreases compared to the proposed term which decreases only with complexity $O(\sqrt{1/\delta})$.
	
	%\subsection{Application to unsupervised domain adaptation}
	\section{Application to Unsupervised Domain Adaptation}
	\label{sec:application}
	
	In the following, we show how to analyze the generalization ability of moment-based algorithms as proposed in~\cite{zellinger2017central,zellinger2017robust,nikzad2018domain,peng2018cross,ke2018identity,Wei2018GenerativeAG,xing2018adaptive} for the problem of unsupervised domain adaptation under the covariate shift assumption (Subsection~\ref{subsec:domain_adaptation}).
	
	Therefore, let us consider an open set $\Omega\subseteq\mathbb{R}^M$, two densities $p,q\in\mathcal{M}(\Omega)$, a labeling function $l:\Omega\to [0,1]$ (covariate shift), a $k$-sized sample $X_p$ drawn from $p$ with labels $Y_p=l(X_p)$ and an unlabeled $k$-sized sample $X_q$ drawn from $q$ as defined in Subsection~\ref{subsec:domain_adaptation}.
	
	The considered approaches search for a function $g:\Omega\to [0,1]^N\in \mathcal{G}$ and a function $f:[0,1]^N\to\{0,1\}\in\mathcal{F}$ such that the differences of finitely many sample moments of the mapped samples $g(X_p)$ and $g(X_q)$ are similar and such that the model $f\circ g$ has a small misclassification risk on the sample $X_p$.
	This is done by minimizing the following objective function:
	\begin{align}
	\label{eq:objective}
	\min_{f\in\mathcal{F},g\in\mathcal{G}} \frac{1}{k}\sum_{\x\in X_p} |f(g(\x))-l(\x)| + d_m(g(X_p),g(X_q))
	\end{align}
	where
	\begin{align*}
	%	\label{eq:cmd}
	d_m(X,X')=\sum_{j=1}^{m} \norm{c_j(X)-c_j(X')]}_2
	\end{align*}
	is the Central Moment Discrepancy regularizer~\cite{zellinger2017central,zellinger2017robust} with empirical expectation vector $c_1(X)=\E[X]=\frac{1}{k}\sum_{\x\in X} \x$ and sampled central moment $c_j(X)=\E[(X-c_1(X))^j]$ where $\x^j$ denotes element-wise power.
	The term $d_m(g(X_p),g(X_q))$ in Eq.~\eqref{eq:objective} is a simple aggregation of finitely many differences of sampled central moments~\cite{zellinger2017robust} from the marginal densities of $p$ and $q$.
% 	The function classes $\mathcal{F}$ and $\mathcal{G}$ in the considered approaches include neural networks and linear models.
% 	, and, the moment order is chosen as $m=5$ which is empirically supported by extensive experiments on various datasets~\cite{zellinger2017central,zellinger2017robust}.
	
% 	In the following we give an example application of the learning bound proposed in Theorem~\ref{thm:problem_solution} to the method described above.
    Our example is based on a function class $\mathcal{F}$ with finite VC-dimension $d$ and the function class
    \begin{align}
	\label{eq:G}
	\mathcal{G} &=\{g\in C^r(\Omega,[0,1]^N)\mid r\geq M-N+1, \mathrm{rank}\, \mathbf{J}_g=N\,\text{a.e.}\},
	\end{align}
    % where
    % $\Omega\subseteq\mathbb{R}^N$ is an open set,
    where $C^r(\Omega,[0,1]^N)$ refers to the set of functions $g:\Omega\to[0,1]^N$ with continuous derivatives up to order $r$, $\mathrm{rank}\, \mathbf{J}_g$ refers to the rank of the Jacobian matrix $\vec J_g$ of the function $g$ and $a.\,e.$~abbreviates {\it almost everywhere}.
    The definition of $\mathcal{G}$ in Eq.~\eqref{eq:G} together with the openness of $\Omega$ ensures that the pushforward measures $\mu\circ g^{-1}$ and $\nu\circ g^{-1}$ of two Borel probability measures $\mu$ and $\nu$ with densities $p$ and $q$, respectively, have probability densities $\tilde{p}$ and $\tilde{q}$, respectively, see~\cite{ponomarev1987submersions} for a proof.
    
    Consider some $\epsilon\geq 0$ and let the maximum order of moments be $m=5$ as it is appropriate for many practical tasks, see e.g.~\cite{zellinger2017central,zellinger2017robust,peng2018cross,ke2018identity,xing2018adaptive,peng2019weighted,Wei2018GenerativeAG}.
    Let us further denote by
    \begin{align}
        \label{eq:phim_application}
        \phim=\left(\eta_1(x_1),\ldots,\eta_5(x_1),\eta_1(x_2),\ldots,\eta_5(x_2),\ldots,\eta_1(x_N),\ldots,\eta_1(x_N),\ldots,\eta_5(x_N)\right)^\text{T}
    \end{align}
    the vector of polynomials such that
    \begin{align*}
        &\eta_1(x)=\sqrt{3} (2 x-1)\\
        &\eta_2(x)=\sqrt{5} \left(6 x^2-6 x+1\right)\\
        &\eta_3(x)=\sqrt{7} \left(20 x^3-30 x^2+12 x-1\right)\\
        &\eta_4(x)=3 \left(70 x^4-140 x^3+90 x^2-20 x+1\right)\\ &\eta_5(x)=\sqrt{11} \left(252 x^5-630 x^4+560 x^3-210 x^2+30 x-1\right)
    \end{align*}
    are the orthonormal Legendre polynomials in the variable $x$ up to order $5$.
    
    Let $g\in\mathcal{G}$ be such that the latent densities fulfill
    \begin{align*}
    h_{\phim}(\tilde p)-h(\tilde p)\leq \epsilon\quad\text{and}\quad h_{\phim}(\tilde q)-h(\tilde q)\leq \epsilon
    \end{align*}
    and have log-density functions $\log \tilde p, \log \tilde q\in W_2^5$ such that
    \begin{align*}
    \norm{\log\tilde p}_\infty\leq 5, \norm{\log \tilde q}_\infty\leq 5\quad\text{and}\quad\norm{\partial^5_{x_i}\log \tilde p_i}\leq 10, \norm{\partial^5_{x_i}\log \tilde q_i}\leq 10
    \end{align*}
    for all $i\in\{1,\ldots,N\}$.
    
    Following~\cite{ben2007analysis}, we define the labeling function ${l_p:[0,1]^N\to [0,1]}$ by
    \begin{align*}
    l_p(\vec a)=\frac{\int_{\{\x\mid g(\x) =\vec a\}} l(\x) p(\x) \diff\x }{\int_{\{\x\mid g(\x) =\vec a\}} p(\x)\diff\x}
    \end{align*}
    and $l_q$ analogously.
    Let the sample size $k\geq 6.3\cdot 10^9$ and $\norm{\widehat{\boldsymbol{\mu}}_p-\widehat{\boldsymbol{\mu}}_q}_1\leq 2.3\cdot 10^{-5}$ (or equivalently $d_m(g(X_p),g(X_q))\leq 6.7\cdot 10^{-12}$) with $\widehat{\boldsymbol{\mu}}_p=\frac{1}{k}\sum_{\x\in g(X_p)}\boldsymbol{\phi}_m(\x)$ and $\widehat{\boldsymbol{\mu}}_q=\frac{1}{k}\sum_{\x\in g(X_q)}\boldsymbol{\phi}_m(\x)$ denoting the corresponding sample moment vectors for $\boldsymbol{\phi}_m$ as in Eq.~\eqref{eq:phim_application}.
    Then, by applying Theorem~\ref{thm:problem_solution} on the domains $(\tilde p,l_p)$ and $(\tilde q,l_q)$ with the improved assumptions and constants of Lemma~\ref{lemma:sample_convergence_in_H}, the following holds with probability at least $0.8$:
    \begin{align}
    \label{eq:ex_wo_cmd}
    \begin{split}
    \int \left|f-l_q\right|\tilde q \leq\, &\frac{1}{k}\sum_{\x \in X_p}\left|f(g(\x))-l(\x)\right| + 
    \sqrt{\frac{4}{k} \left( d\log \frac{2 e k}{d} + 3 \right)}+ \lambda^*\\
    &+ 84.6 \norm{\widehat{\boldsymbol{\mu}}_p-\widehat{\boldsymbol{\mu}}_q}_1 + 513 \sqrt{\frac{N}{k}} + \sqrt{8\epsilon}.
    \end{split}
    \end{align}
    If $\rho_{i j}=\E[X_p]$ and $\nu_{i j}=\E[X_q]$ denote the $i$-th empirical raw moments of $\tilde p$ and $\tilde q$ in the variable $x_j$, then
	\begin{align}
	\norm{\widehat{\boldsymbol{\mu}}_p-\widehat{\boldsymbol{\mu}}_q}_1 &= \sum_{j=1}^{N} \sum_{i=1}^{5} \left\lvert \E[\eta_i(X_p)]-\E[\eta_i(X_q)]\right\rvert\nonumber\\
	&\leq C_5 \cdot \sum_{j=1}^{N} \sum_{i=1}^{5}\left\lvert \rho_{i j}-\nu_{i j}\right\rvert\nonumber\\
	&\leq C_5\cdot \sum_{j=1}^{N} \sum_{i=1}^{5} \sum_{t=0}^{i} \binom{i}{t}  \left\lvert \rho_{t j}' \rho_{1 j}^{i-t}-\nu_{t j}' \nu_{1 j}^{i-t}\right\rvert\nonumber\\
	&\leq C_5\cdot \sum_{j=1}^{N} \sum_{i=1}^{5} \sum_{t=0}^{i} \binom{i}{t}
	\left(\left\lvert \rho_{1 j}^{i-t}-\nu_{1 j}^{i-t}\right\rvert+\left\lvert \rho_{t j}'-\nu_{t j}' \right\rvert\right)\nonumber\\
	\nonumber
	&\leq C_5\cdot \sum_{j=1}^{N} \sum_{i=1}^{5} \sum_{t=0}^{i} \binom{i}{t}
	\left(\left(i-t\right) \left\lvert \rho_{1 j}-\nu_{1 j}\right\rvert+\left\lvert \rho_{t j}'-\nu_{t j}' \right\rvert\right),
	\end{align}
    where $C_5=\max_{i\in\{1,\ldots,N\}} r_i$ and $r_i=\sum_{t=1}^5 |l_t|$ is the sum of the absolute values of the coefficients $l_t$ of all terms in the orthonormal Legendre polynomials $\eta_1(x_j),\ldots,\eta_5(x_j)$ which contain the monomial $x_j^i$.
    The term $\rho_{i j}'=\E[(X_j-E[X_j])^i], i\in\mathbb{N}$ denotes the $i$-th sampled central moment of the marginal density $p_j$, especially $\rho_{0 j}'=1$ and $\rho_{1 j}'=0$.
    The terms $\nu_{i j}'$ analogously denote the sampled central moments of the marginal densities of $q$.
    The second inequality follows from the Binomial theorem, the third inequality follows from the fact that
	\begin{align*}
	|x_1 y_1-x_2 y_2|\leq |x_1-x_2|+|y_1-y_2|\quad\forall x_1,x_2,y_1,y_2\in[-1,1]
	\end{align*}
	and the fourth inequality follows from
    \begin{align*}
	|x_1^k-x_2^k|\leq k\cdot |x_1-x_2|\quad\forall x_1,x_2\in[-1,1], k\in\mathbb{N}.
	\end{align*}
	It further holds that
	\begin{align}
    \norm{\widehat{\boldsymbol{\mu}}_p-\widehat{\boldsymbol{\mu}}_q}_1 &\leq C_5\cdot \sum_{j=1}^{N} \sum_{i=1}^{5} \sum_{t=0}^{i} \binom{i}{t}
	\left(\left(i-t\right) \left\lvert \rho_{1 j}-\nu_{1 j}\right\rvert+\left\lvert \rho_{t j}'-\nu_{t j}' \right\rvert\right)\nonumber\\
	&\leq C_5\cdot \sum_{j=1}^{N} \sum_{i=1}^{5} \sum_{t=0}^{i} i \binom{i}{t}
	\left(\left\lvert \rho_{1 j}-\nu_{1 j}\right\rvert+\left\lvert \rho_{t j}'-\nu_{t j}' \right\rvert\right)\nonumber\\
	&\leq C_5\cdot \sum_{j=1}^{N} \sum_{i=1}^{5} \sum_{t=0}^{5} 5 \binom{5}{t}
	\left(\left\lvert \rho_{1 j}-\nu_{1 j}\right\rvert+\left\lvert \rho_{t j}'-\nu_{t j}' \right\rvert\right)\nonumber\\
	&\leq C_5\cdot 5^2\cdot \max_{t\in\{0,1,\ldots,5\}} \left\{\binom{5}{t}\right\}\cdot \sum_{j=1}^{N} \sum_{t=0}^{5} \left(\left\lvert \rho_{1 j}-\nu_{1 j}\right\rvert+\left\lvert \rho_{t j}'-\nu_{t j}' \right\rvert\right)\nonumber\\
% 	&\leq C_5\cdot 5^2\cdot (5+1)\cdot \max_{t\in\{0,1,\ldots,5\}} \left\{\binom{5}{t}\right\}\cdot \sum_{t=2}^{5} \norm{c_t(p)-c_t(q)}_1\nonumber\\
	\label{eq:lone_cmd_bound}
	&\leq C_5\cdot 5^2\cdot (5+1)\cdot \max_{t\in\{0,1,\ldots5\}} \left\{\binom{5}{t}\right\} \cdot\sqrt{N}\cdot d_5(p,q),
	\end{align}
	where the last inequality follows from $\norm{\x}_2\leq \sqrt{N}\cdot \norm{\x}_1$.
	From the ``change of variables'' Theorem~4.1.11 in~\cite{dudley2002real} we obtain
    \begin{align}
    \label{eq:change_of_vars}
        \int \left| f-l_q \right| \tilde q
        =\int \left| f-l_q \right| \diff(Q\circ g^{-1})
        = \int \left|f-l_q\right|\circ g\diff Q
        = \int \left| f\circ g-l\right| q.
    \end{align}
    In particular, if the dimension of the latent space is taken to be $N=5$, the sample size $k= 6.3\cdot 10^9$ and if the function class $\mathcal{F}$ is the class of neural networks with one layer, $5$ nodes and signum activation function for each node, i.e.~the VC-dimension is $d=6$, then the following holds by Eq.~\eqref{eq:change_of_vars}, Eq.~\eqref{eq:lone_cmd_bound} and Eq.~\eqref{eq:ex_wo_cmd}, with probability at least $0.8$:
    \begin{align*}
    \begin{split}
    \int \left|f\circ g-l\right| q \leq\, &\frac{1}{k}\sum_{\x_\in X_p}\left|f(g(\x))-l(\x)\right| + 2.96\cdot 10^8\cdot d_5({X_p,X_q}) + 0.0148 + \sqrt{8\epsilon} + \lambda^*,
    \end{split}
    \end{align*}
    where the error originating from the application of statistical learning theory is approximately $2.95\cdot 10^{-4}$ and the sampling error originating from our analysis is approximately $1.44\cdot 10^{-2}$.

    % Combining Eq.~\eqref{eq:change_of_vars}, Eq.~\eqref{eq:lone_cmd_bound} and Eq.~\eqref{eq:ex_wo_cmd} yields, with probability at least $0.8$:

% 	where $c_t(p)$ and $c_t(q)$ for $t\in\{1,\ldots,m\}$ are defined as in Theorem~\ref{thm:dual_cmd}.
	
% 	Note that the elements of $\phim$, together with one, form an orthonormal basis of the linear space $\mathrm{Span}(\mathbb{R}_m[x_1]\cup\ldots\cup\mathbb{R}_m[x_d])$.
	
% 	Denote $C_\mathrm{cmd}=C_m\cdot m^2\cdot (m+1)\cdot \max_{t\in\{0,1,\ldots,m\}} \left\{\binom{m}{t}\right\} \cdot\sqrt{N}$ and $\tilde{C}=2 e^{(3m-1)/2}$.
	
	\section{Proofs}
	\label{sec:details}
	
	All proofs are summarized in this section together with additional remarks and comments.
	\subsection{Proofs of Subsection~\ref{subsec:from_moment_convergence_to_l1} on Moment Similarity and $L^1$-Similarity}
	
	\totalvariation*
	
	\begin{proof}%[of Lemma~\ref{lemma:motivation}]
	    Let us define the labeling function ${l^*:[0,1]^N\to [0,1]}$ by
	    \begin{align}
	    l^*(\x)=
	        \begin{cases}
               1~\text{if}~f(\x)=1~\text{and}~p(\x) < q(\x)\\
               1~\text{if}~f(\x)=0~\text{and}~p(\x)\geq q(\x)\\
               0~\text{if}~f(\x)=1~\text{and}~p(\x)\geq q(\x)\\
               0~\text{if}~f(\x)=0~\text{and}~p(\x) < q(\x)
             \end{cases}
	    \end{align}
	    By this construction the following holds:
	    \begin{align}
	    \label{eq:lstar_identity}
	        |f-l^*|=\mathbbm{1}_A
	    \end{align} where $\mathbbm{1}_A(\x)=\begin{cases}1:\x\in A\\0:\text{else}\end{cases}$ and $A:=\{\x\in [0,1]^N\mid p(\x)\geq q(\x)\}$.
	    From Eq.~\eqref{eq:lstar_identity} we obtain
	    \begin{align}
	    \label{eq:lstar_identity_integral}
	    \begin{split}
	        \int_{[0,1]^N} |f-l^*|\, (p-q) &=
	        \int_{[0,1]^N} \mathbbm{1}_A (p-q)\\
	        &= \int_{[0,1]^N} \mathbbm{1}_A\, p - \int_{[0,1]^N} \mathbbm{1}_A\, q\\
	        &= 1 - \int_{[0,1]^N} \mathbbm{1}_{A^c}\, p - 1 + \int_{[0,1]^N} \mathbbm{1}_{A^c}\, q\\
	        &= \int_{[0,1]^N} \mathbbm{1}_{A^c} (q-p)
	        \end{split}
	    \end{align}
	    where $A^c:=[0,1]^N\setminus A$ denotes the complement of $A$.
	    
	   % By the construction of $l^*$ and the corresponding identities in Eq.~\eqref{eq:lstar_identity} and Eq.~\eqref{eq:lstar_identity_integral}
	    For all $l:[0,1]^N\to [0,1]$, it holds that
	    \begin{align*}
	        \left| \E_{q}\big[|f-l|\big] - \E_{p}\big[|f-l|\big] \right| &= \left| \int_{[0,1]^N} |f-l|\, (p-q) \right|\\
	        &\leq \left|\sup_{\x\in[0,1]^N} \big\{|f(\x)-l(\x)|\big\} \int_{[0,1]^N} (p-q) \right|\\
	        &\leq \left| \int_{[0,1]^N} (p-q) \right|\\
	        &\leq \max\left\{ \int_{[0,1]^N} (p-q), \int_{[0,1]^N} (q-p) \right\}\\
	        &\leq \max\left\{ \int_{[0,1]^N}\mathbbm{1}_A\, (p-q), \int_{[0,1]^N}\mathbbm{1}_{A^c}\, (q-p) \right\}\\
	        &= \int_{[0,1]^N} |f-l^*|\, (p-q)
	    \end{align*}
	    where the last line is obtained from Eq.~\eqref{eq:lstar_identity_integral}.
	    It follows that
	    \begin{align*}
	        \sup_{l:[0,1]^N\to [0,1]} \left| \E_{q}\big[|f-l|\big] - \E_{p}\big[|f-l|\big] \right|\leq \int_{[0,1]^N} |f-l^*|\, (p-q).
	    \end{align*}
	    Since $l^*:[0,1]^N\to [0,1]$, it also holds that
	    \begin{align*}
	        \int_{[0,1]^N} |f-l^*|\, (p-q) &\leq \sup_{l:[0,1]^N\to [0,1]} \left| \E_{q}\big[|f-l|\big] - \E_{p}\big[|f-l|\big] \right|
	    \end{align*}
	    and therefore
	    \begin{align}
	        \max_{l:[0,1]^N\to [0,1]} \left| \E_{q}\big[|f-l|\big] - \E_{p}\big[|f-l|\big] \right| = \int_{[0,1]^N} |f-l^*|\, (p-q).
	    \end{align}
	    Using Eq.~\eqref{eq:lstar_identity} and Eq.~\eqref{eq:lstar_identity_integral} yields
	    \begin{align*}
	        2 \int_{[0,1]^N} |f-l^*|\, (p-q) &= 2 \int_{[0,1]^N}\mathbbm{1}_A\, (p-q)\\
	        &= \int_{[0,1]^N}\mathbbm{1}_A\, (p-q) + \int_{[0,1]^N}\mathbbm{1}_{A^c}\, (q-p)\\
	        &= \int_{[0,1]^N}|p-q|
	    \end{align*}
	    which finalizes the proof.\qed
	\end{proof}

	\subsection{Proofs of Subsection~\ref{subsec:high_entropy_distr} on the Convergence of High-Entropy Distributions}
	\label{subsec:class}
	
	For the rest of this subsection consider some $\phim=(\phi_1,\ldots,\phi_{\psi(m,N)})^\text{T}$ such that $1,\phi_1,\ldots,\phi_{\psi(m,N)}$ is a basis of $\mathbb{R}_m[x_1,\ldots,x_N]$.
	We further consider the set $\M$ of probability distributions on the unit cube, the differential entropy $h$, the maximum entropy $h_{\boldsymbol{\phi}}$ and the KL-divergence $D(.\Vert.)$ as defined in Section~\ref{sec:preliminaries}.
	We denote by $p^*$ the maximum entropy density of some $p\in\M$ constrained at the moments $\int \phim p$.
	
	The following Lemma~\ref{lemma:basic_inequality} provides a key relationship allowing to focus on differences of distributions in exponential families.
	
	\begin{restatable}{lemmarep}{basicinequ}
		\label{lemma:basic_inequality}
		Consider some $\epsilon\geq 0$ and some $p, q\in\M$ having $\epsilon$-close maximum entropy.
		Then the following holds:
		\begin{align}
		\label{eq:basic_epsilon_bound}
		\norm{p-q}_{L^1} \leq \sqrt{2 D(p^*\Vert q^*)} + \sqrt{8 \epsilon}.
		\end{align}
	\end{restatable}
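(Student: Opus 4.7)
The plan is to decompose the $L^1$-distance between $p$ and $q$ via the triangle inequality through their maximum entropy projections $p^*$ and $q^*$, and then to bound each of the three resulting pieces by an appropriate Pinsker inequality.

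First I would invoke the identity $D(p \Vert p^*) = h_{\phim}(p) - h(p)$ recalled at the end of Subsection~\ref{subsec:maxent}. By the $\epsilon$-close maximum entropy assumption~\eqref{eq:high_entropy_density}, this immediately yields $D(p \Vert p^*) \leq \epsilon$ and, symmetrically, $D(q \Vert q^*) \leq \epsilon$. Applying Pinsker's inequality in the form $\norm{\mu - \nu}_{L^1} \leq \sqrt{2 D(\mu \Vert \nu)}$ to each of these pairs gives
\begin{equation*}
\norm{p - p^*}_{L^1} \leq \sqrt{2\epsilon} \quad \text{and} \quad \norm{q - q^*}_{L^1} \leq \sqrt{2\epsilon}.
\end{equation*}

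Next I would estimate the middle term $\norm{p^* - q^*}_{L^1}$ by one more application of Pinsker's inequality, obtaining $\norm{p^* - q^*}_{L^1} \leq \sqrt{2 D(p^* \Vert q^*)}$. Finally, by the triangle inequality
\begin{equation*}
\norm{p - q}_{L^1} \leq \norm{p - p^*}_{L^1} + \norm{p^* - q^*}_{L^1} + \norm{q^* - q}_{L^1},
\end{equation*}
the three bounds combine to $\sqrt{2 D(p^*\Vert q^*)} + 2\sqrt{2\epsilon} = \sqrt{2 D(p^*\Vert q^*)} + \sqrt{8\epsilon}$, which is the claim.

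There is no genuine obstacle here: the argument is a two-step triangle inequality, and all the analytic content is already packaged in the identity $D(p\Vert p^*) = h_{\phim}(p) - h(p)$ together with Pinsker's inequality. The only subtle point worth stating explicitly is that $p^*$ and $q^*$ are well defined and lie in $\M$, which follows from the existence and uniqueness results for maximum entropy densities cited in Subsection~\ref{subsec:maxent} (based on the Lebesgue reference measure and the fact that the components of $\phim$ together with the constant $1$ span $\mathbb{R}_m[x_1,\ldots,x_N]$), so that the KL-divergences appearing in the argument are all meaningful.
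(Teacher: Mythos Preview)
Your proposal is correct and matches the paper's proof essentially line for line: triangle inequality through $p^*$ and $q^*$, Pinsker's inequality on each of the three pieces, and the identity $D(p\Vert p^*)=h_{\phim}(p)-h(p)$ to invoke the $\epsilon$-closeness assumption. The only cosmetic difference is the order of presentation; the paper applies the triangle and Pinsker inequalities first and then inserts the $\epsilon$-bound, while you establish the $\epsilon$-bound first.
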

	
	\begin{proof}%[Proof of Lemma~\ref{lemma:basic_inequality}]
		Applying the Triangle Inequality and Pinsker's Inequality yields
		\begin{align*}
		\norm{p-q}_{L^1} &\leq \norm{p^*-q^*}_{L^1}
		+ \norm{p^*-p}_{L^1} + \norm{q^*-q}_{L^1}\\
		&\leq \sqrt{2 D(p^*\Vert q^*)}
		+ \sqrt{2 D(p\Vert p^*)} + \sqrt{2 D(q\Vert q^*)}.
		\end{align*}
		The exponential form of the maximum entropy distribution Eq.~\eqref{eq:maxent_distr_formula} implies that $D(p\Vert p^*)=h_{\boldsymbol{\phi}}(p)-h(p)$ and therefore $D(p\Vert p^*)\leq\epsilon$ such that Eq.~\eqref{eq:basic_epsilon_bound} follows.\qed
	\end{proof}
	
	Lemma~\ref{lemma:convergence_in_M} analyzes the convergence in KL-divergence of sequences of distributions in exponential families in terms of the convergence of respective moment vectors.
	
	\begin{restatable}{lemmarep}{convinM}
		\label{lemma:convergence_in_M}		
		Let $(p_n)_{n\in\mathbb{N}}\subset \mathcal{M}([0,1]^N)$ and $p_\infty\in\mathcal{M}([0,1]^N)$ such that $p_n$ is an $\epsilon$-close maximum entropy density for all $n\in\{1,\ldots,\infty\}$ and denote its respective moments by $\boldsymbol\mu_n=\int \phim p_n$.
		Then the following holds: 
		\begin{gather*}
		%		\boldsymbol\mu_n\rightarrow\boldsymbol\mu_\infty
		\lim_{n\to\infty}\norm{\boldsymbol{\mu}_n-\boldsymbol{\mu}_\infty}_1=0
		\quad
		\implies
		\quad
		\lim_{n\to\infty}D(p_n^*\Vert p_\infty^*)=0.
		%		p_n^*\xrightarrow{\text{KL}} p_\infty^*
		\end{gather*}
		%		where $\rightarrow$ denotes pointwise convergence and $\xrightarrow{\text{KL}}$ denotes convergence in KL-divergence for $n\to\infty$.
	\end{restatable}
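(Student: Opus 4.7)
The strategy is to exploit the exponential family representation of the maximum entropy densities $p_n^*$ and $p_\infty^*$, combined with the continuity of the moment-to-natural-parameter map for exponential families on compact support. Let $\boldsymbol\lambda_n$ and $\boldsymbol\lambda_\infty$ denote the natural parameters of $p_n^*$ and $p_\infty^*$, respectively, given by Eq.~\eqref{eq:maxent_distr_formula}. These are well defined since each $p_n$ is absolutely continuous on the compact cube $[0,1]^N$, so its moment vector $\boldsymbol\mu_n$ lies in the interior of the moment space and the information projection onto $\mathcal{E}$ exists uniquely.

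The first step is to derive a Bregman-type identity for the symmetrized KL-divergence. Since $\log p_n^*(\vec x) = \log c(\boldsymbol\lambda_n) - \langle\boldsymbol\lambda_n, \phim(\vec x)\rangle$ and $\int p_n^*\phim = \boldsymbol\mu_n$, integrating $\log p_n^* - \log p_\infty^*$ against both $p_n^*$ and $p_\infty^*$ and adding the results makes the log-partition terms cancel pairwise, leaving
\[
D(p_n^* \Vert p_\infty^*) + D(p_\infty^* \Vert p_n^*) = \langle \boldsymbol\lambda_\infty - \boldsymbol\lambda_n,\; \boldsymbol\mu_n - \boldsymbol\mu_\infty\rangle.
\]
Since both KL-divergences are nonnegative, Hölder's inequality yields
\[
D(p_n^* \Vert p_\infty^*) \leq \norm{\boldsymbol\lambda_\infty - \boldsymbol\lambda_n}_\infty \cdot \norm{\boldsymbol\mu_n - \boldsymbol\mu_\infty}_1.
\]

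The second step is to show that $\norm{\boldsymbol\lambda_n - \boldsymbol\lambda_\infty}_\infty$ remains bounded as $\boldsymbol\mu_n \to \boldsymbol\mu_\infty$. Consider the log-partition function $A(\boldsymbol\lambda) := -\log c(\boldsymbol\lambda)$. A direct computation shows that its gradient is $\nabla A(\boldsymbol\lambda) = -\boldsymbol\mu(\boldsymbol\lambda)$ and its Hessian equals the covariance of $\phim$ under the corresponding exponential family density. Because $1,\phi_1,\ldots,\phi_{\psi(m,N)}$ is a basis of $\mathbb{R}_m[x_1,\ldots,x_N]$, this covariance is positive definite, so $A$ is strictly convex and smooth. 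Consequently the map $\boldsymbol\lambda \mapsto \boldsymbol\mu(\boldsymbol\lambda) = -\nabla A(\boldsymbol\lambda)$ is a diffeomorphism from $\mathbb{R}^{\psi(m,N)}$ onto the interior of the moment space, and the continuity of its inverse at the interior point $\boldsymbol\mu_\infty$ gives $\boldsymbol\lambda_n \to \boldsymbol\lambda_\infty$; in particular $\norm{\boldsymbol\lambda_n - \boldsymbol\lambda_\infty}_\infty$ is bounded for large $n$. Combining this bound with the hypothesis $\norm{\boldsymbol\mu_n-\boldsymbol\mu_\infty}_1\to 0$ in the displayed inequality above yields $D(p_n^* \Vert p_\infty^*)\to 0$.

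The main obstacle is the second step: verifying that the natural parameters $\boldsymbol\lambda_n$ stay bounded as the moments converge. This is a classical property of regular exponential families on compact support, but a fully self-contained verification requires either invoking the inverse function theorem through the strict convexity of $A$, or a compactness argument extracting convergent subsequences of $\boldsymbol\lambda_n$ and ruling out escape to infinity via lower semicontinuity of $A$. The remaining algebra (the Bregman identity and Hölder bound) is routine and uses only the definitions already collected in Subsection~\ref{subsec:maxent}.
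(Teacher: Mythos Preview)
Your proposal is correct and shares the same backbone as the paper's proof: both rely on the exponential-family form of $p_n^*$ and $p_\infty^*$ and on the continuity of the moment-to-parameter map $\boldsymbol\mu\mapsto\boldsymbol\lambda$, which the paper obtains via the Implicit Function Theorem applied to $(\boldsymbol\mu,\boldsymbol\lambda)\mapsto\boldsymbol\mu-\int\phim\,c(\boldsymbol\lambda)\exp(-\langle\boldsymbol\lambda,\phim\rangle)$ and you obtain via the diffeomorphism property of $\nabla A$ for the strictly convex log-partition $A$.

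The genuine difference is in how the KL-divergence itself is controlled. The paper expands $D(p_n^*\Vert p_\infty^*)$ directly as
\[
D(p_n^*\Vert p_\infty^*)=\bigl(\log c(\boldsymbol\lambda_n)-\log c(\boldsymbol\lambda_\infty)\bigr)+\langle\boldsymbol\mu_n,\boldsymbol\lambda_\infty-\boldsymbol\lambda_n\rangle,
\]
then bounds this by $|\log c(\boldsymbol\lambda_n)-\log c(\boldsymbol\lambda_\infty)|+\norm{\boldsymbol\lambda_n-\boldsymbol\lambda_\infty}_1$ (after reducing to a monomial basis so that $\boldsymbol\mu_n\in[0,1]^{\psi(m,N)}$), and must therefore invoke \emph{two} continuity facts: $\boldsymbol\lambda_n\to\boldsymbol\lambda_\infty$ and continuity of the cumulant $\boldsymbol\lambda\mapsto\log c(\boldsymbol\lambda)$. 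Your symmetrized-KL identity
\[
D(p_n^*\Vert p_\infty^*)+D(p_\infty^*\Vert p_n^*)=\langle\boldsymbol\lambda_\infty-\boldsymbol\lambda_n,\boldsymbol\mu_n-\boldsymbol\mu_\infty\rangle
\]
makes the log-partition terms cancel exactly, so a single H\"older step gives $D(p_n^*\Vert p_\infty^*)\le\norm{\boldsymbol\lambda_\infty-\boldsymbol\lambda_n}_\infty\norm{\boldsymbol\mu_n-\boldsymbol\mu_\infty}_1$, and mere \emph{boundedness} of $\boldsymbol\lambda_n$ (rather than convergence, and with no separate cumulant-continuity step) already suffices to conclude. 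This is a cleaner route; the paper's version, on the other hand, writes out the gradient and Hessian of the dual problem explicitly, which makes the positive-definiteness of the covariance (and hence the applicability of the Implicit Function Theorem) fully transparent.
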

	
	\begin{proof}%[Proof of Lemma~\ref{lemma:convergence_in_M}]
		The maximum entropy density $p_n^*$ of $p_n$ is independent of the choice of the basis $1,\phi_1,\ldots,\phi_{\psi(m,N)}$~\cite{barron1991approximation}.
		Therefore, we may assume without loss of generality that the elements of $\phim$ are solely positive monomials.
		
		According to Eq.~\eqref{eq:maxent_distr_formula}, the maximum entropy distributions $p_n^*$ are of the form $p^*_n=c(\boldsymbol\lambda_n) \exp\left(-\langle\boldsymbol\lambda_n,\phim\rangle\right)$ with parameter vectors $\boldsymbol\lambda_n\in\mathbb{R}^{\psi(m,N)}$.
		Using Eq.~\eqref{eq:KL_divergence} and the fact that $p_n^*\in\M$ yields
		\begin{align*}
		D(p^*_n\Vert p^*_\infty) &=\int p_n^* \log \frac{p_n^*}{p_\infty^*}\\
		&=\int p_n^* \log\frac{c(\boldsymbol{\lambda}_n) \exp(-\langle\boldsymbol{\lambda}_n,\phim\rangle)}{c(\boldsymbol{\lambda}_\infty) \exp(-\langle\boldsymbol{\lambda}_\infty,\phim\rangle)}\\
		&=\int p_n^* (\log c(\boldsymbol{\lambda}_n) - \log c(\boldsymbol{\lambda}_\infty)) + \int p_n^* (-\langle\boldsymbol{\lambda}_n,\phim\rangle + \langle\boldsymbol{\lambda}_\infty,\phim\rangle)\\
		&= \left(\log c(\boldsymbol{\lambda}_n) - \log c(\boldsymbol{\lambda}_\infty)\right) + (-\langle\boldsymbol{\lambda}_n,\int p_n^* \phim\rangle + \langle\boldsymbol{\lambda}_\infty,\int p_n^*\phim\rangle)\\
		&= \left(\log c(\boldsymbol{\lambda}_n) - \log c(\boldsymbol{\lambda}_\infty)\right) + (-\langle\boldsymbol{\lambda}_n,\boldsymbol{\mu}_n\rangle + \langle\boldsymbol{\lambda}_\infty,\boldsymbol{\mu}_n\rangle)\\
		&= \left(\log c(\boldsymbol{\lambda}_n) - \log c(\boldsymbol{\lambda}_\infty)\right) + \langle\boldsymbol{\mu}_n,\boldsymbol{\lambda}_\infty-\boldsymbol{\lambda}_n\rangle\\
		&\leq \left|\log c(\boldsymbol\lambda_n)-\log c(\boldsymbol\lambda_\infty)\right| + \left\langle\boldsymbol\mu_n,|\boldsymbol\lambda_n-\boldsymbol\lambda_\infty|\right\rangle\\
		&\leq \left|\log c(\boldsymbol\lambda_n)-\log c(\boldsymbol\lambda_\infty)\right| + \norm{\boldsymbol\lambda_n-\boldsymbol\lambda_\infty}_1.
		\end{align*}
		where the last inequality follows from the choice of the basis $1,\phi_1,\ldots,\phi_{\psi(m,N)}$.
		
		In the following we show that $\log c(\boldsymbol\lambda_n)\to \log c(\boldsymbol\lambda_\infty)$ and $\boldsymbol\lambda_n\to\boldsymbol\lambda_\infty$ as $\boldsymbol\mu_n\to\boldsymbol\mu_\infty$:
		The elements of the parameter vector $\boldsymbol\lambda_*$ of the maximum entropy distribution $p^*=c(\boldsymbol\lambda_*) \exp\left(-\langle\boldsymbol\lambda_*,\phim\rangle\right)$ in Eq.~\eqref{eq:maxent_distr_formula} correspond to the Lagrange multipliers solving the optimization problem $\min_{\boldsymbol\lambda\in\mathbb{R}^{\psi(m,N)}}\Gamma(\boldsymbol\lambda) $ where $\Gamma(\boldsymbol\lambda)=\langle \boldsymbol\lambda, \boldsymbol\mu_*\rangle-\log(c(\boldsymbol\lambda))$
		and $\boldsymbol\mu_p=\int\phim p=(\int \phi_1 p,\ldots,\phi_{\psi(m,N)} p)^\text{T}$, see e.g.~\cite{agmon1979algorithm,batou2013calculation,wainwright2008graphical}.
		Let $q=c(\boldsymbol\lambda_q)\exp\left(-\langle\boldsymbol\lambda_q,\phim\rangle\right)$ be a probability density of an exponential family with moments $\boldsymbol\mu_q=\int\phim q$ and parameter vector $\boldsymbol{\lambda}_q:=(\lambda_1,\ldots,\lambda_{\psi(m,N)})^\text{T}$.
		Then the partial derivative of the function $\boldsymbol\lambda_q\mapsto\Gamma(\boldsymbol\lambda_q)$ \wrt~the variable $\lambda_i$ is given by
		\begin{align*}
		    \partial_{\lambda_i} \Gamma(\boldsymbol{\lambda}_q) &= \int \phi_i p - \partial_{\lambda_i} \log c(\boldsymbol{\lambda}_q)\\
		    &= \int \phi_i p - \frac{1}{c(\boldsymbol{\lambda}_q)}\partial_{\lambda_i} c(\boldsymbol{\lambda}_q)\\
		    &= \int \phi_i p + \frac{\partial_{\lambda_i} \int \exp\left(-\langle \boldsymbol{\lambda}_q,\phim\rangle\right) }{c(\boldsymbol{\lambda}_q) \left(\int \exp\left(-\langle \boldsymbol{\lambda}_q,\phim\rangle\right)\right)^{2}} \\
		    &= \int \phi_i p + c(\boldsymbol{\lambda}_q) \int \exp\left(-\langle \boldsymbol{\lambda}_q,\phim\rangle\right) ( - \partial_{\lambda_i} \langle \boldsymbol{\lambda}_q,\phim\rangle)\\
		    &= \int \phi_i p - \int c(\boldsymbol{\lambda}_q) \exp\left(-\langle \boldsymbol{\lambda}_q,\phim\rangle\right) \phi_i\\
		    &= \int \phi_i p - \int \phi_i q
		\end{align*}
		and the gradient vector $\nabla \Gamma(\boldsymbol\lambda_q)$ can therefore be computed by
		\begin{align*}
		    \nabla \Gamma(\boldsymbol\lambda_q) &= \boldsymbol\mu_p - \int \phim q.
		\end{align*}
		Consequently, the second partial derivative \wrt~the variables $\lambda_i$ and $\lambda_j$ is given by
		\begin{align*}
		    \partial_{\lambda_i,\lambda_j}^2 \Gamma(\boldsymbol{\lambda}_q) &= \partial_{\lambda_j} (\int \phi_i p - \int \phi_i q)\\
		    &= \int c(\boldsymbol{\lambda}_q) \exp\left(-\langle \boldsymbol{\lambda}_q,\phim\rangle\right) \phi_i \phi_j - \int \exp\left(-\langle \boldsymbol{\lambda}_q,\phim\rangle\right) \phi_i\, \partial_{\lambda_j} c(\boldsymbol{\lambda}_q)\\
		    &= \int \phi_i\phi_j q - \int \exp\left(-\langle \boldsymbol{\lambda}_q,\phim\rangle\right) \phi_i\, c(\boldsymbol{\lambda}_q)^2 \int \exp\left(-\langle \boldsymbol{\lambda}_q,\phim\rangle\right)\phi_j\\
		    &= \int \phi_i\phi_j q - \int q\phi_i (\int q \phi_j)
		\end{align*}
		and the Hessian matrix $H_\Gamma(\boldsymbol\lambda_q)$ can be computed by
		\begin{align*}
% 		\nabla \Gamma(\boldsymbol\lambda_q) &= \boldsymbol\mu_p - \int \phim q\\
		H_\Gamma(\boldsymbol\lambda_q) &= \int (\phim\cdot \phim^\text{T}) q - \int \phim q\cdot(\int\phim q)^\text{T}.
		\end{align*}
		The Hessian matrix $H_\Gamma$ equals the covariance matrix of a random variable with density $q$. It is assumed that the elements of $\phim$ are independent. $H_\Gamma$ is therefore positive definite and the function $\boldsymbol\lambda_q\mapsto \Gamma(\boldsymbol\lambda_q)$ reaches its minimum at a vector with $\nabla\Gamma(\boldsymbol\lambda_*)=0$, especially at $\boldsymbol\lambda_*$.
		The Implicit Function Theorem can be applied to the function $I:(\boldsymbol\mu,\boldsymbol\lambda)\mapsto\boldsymbol\mu - \int \phim c(\boldsymbol\lambda)\exp\left(-\langle\boldsymbol\lambda,\phim\rangle\right)$ guaranteeing the existence af an open set $U\subset\mathbb{R}^{\psi(m,N)}$ (containing $\boldsymbol{\mu}_p$) and a unique continuous function $g:\boldsymbol\mu\mapsto\boldsymbol\lambda$ with $I(\boldsymbol\mu,g(\boldsymbol\mu))=0$ for all $\boldsymbol\mu\in U$.
		Consequently the convergence of the moment vector $\boldsymbol\mu_n\to\boldsymbol\mu_\infty$ implies the convergence of the corresponding parameter vectors $\boldsymbol\lambda_n\to\boldsymbol\lambda_\infty$ as $n$ tends to infinity.
% 		\begin{align*}
% 		    \left(\int \exp\left(-\langle \boldsymbol{\lambda}_q,\phim\rangle\right)\right)^{-1}
% 		\end{align*}

        The convergence of $\log c(\boldsymbol\lambda_n)$ to $\log c(\boldsymbol\lambda_\infty)$ follows from the continuity of the \textit{cumulant function} $\boldsymbol\lambda \mapsto -\log c(\boldsymbol\lambda) = \log \left(\int \exp\left(-\langle \boldsymbol{\lambda},\phim\rangle\right)\right)$, see e.g.~\cite[Proposition~3.1]{wainwright2008graphical}.\qed
% 		If $\boldsymbol\lambda_n\to\boldsymbol\lambda_\infty$ for $n\to\infty$ then $\exp\left(-\langle \boldsymbol{\lambda}_n,\phim(\x)\rangle\right)$ converges to $\exp\left(-\langle \boldsymbol{\lambda}_\infty,\phim(\x)\rangle\right)$ for all $\x\in[0,1]^N$.
% 		It holds that $\left|\exp\left(-\langle \boldsymbol{\lambda}_n,\phim(\x)\rangle\right)\right|\leq 1$ for all $\x\in[0,1]^N$ and the dominated convergence theorem implies that $\int \exp\left(-\langle \boldsymbol{\lambda}_n,\phim\rangle\right)$ converges to $\int \exp\left(-\langle \boldsymbol{\lambda}_\infty,\phim\rangle\right)$.
% 		The convergence of $\log c(\boldsymbol{\lambda}_n)=-\log \left(\int \exp\left(-\langle \boldsymbol{\lambda}_n,\phim\rangle\right)\right)$ to $\log c(\boldsymbol{\lambda}_\infty)=-\log \left(\int \exp\left(-\langle \boldsymbol{\lambda}_\infty,\phim\rangle\right)\right)$ follows.\qed
	\end{proof}
	Lemma~\ref{lemma:basic_inequality} together with Lemma~\ref{lemma:convergence_in_M} motivate to focus on densities with $\epsilon$-close maximum entropy and together prove Lemma~\ref{lemma:L1_convergence_in_M}.
	
	In the following Subsection~\ref{subsec:approximation_theory} we recall additional properties on the densities, such that fast convergence rates can be obtained.
	
	%	
	%	For the rest of this section consider some $\phim=(\phi_1,\ldots,\phi_{m})^\text{T}$ such that $1,\phi_1,\ldots,\phi_{m}$ is an orthonormal basis of $\mathbb{R}_m[x]$, the set $\mathcal{M}([0,1])$ of probability distributions on the unit interval, the differential entropy $h$, the maximum entropy $h_{\boldsymbol{\phi}}$ and the KL-divergence $D(.\Vert.)$ as defined in Section~\ref{sec:preliminaries}.
	
	\subsection{Preliminaries from Approximation Theory}
	\label{subsec:approximation_theory}
	
	Smoothness conditions on densities appropriate for our goal are established in~\cite{barron1991approximation} and~\cite{cox1988approximation}.
	The following serves as a key lemma.
	
	\begin{restatable}{lemmarep}{barronandshoulemmafive}%[Barron and Sheu~\cite{barron1991approximation}]
		\label{lemma:barron_and_sheu_lemma5}
		Consider some $\phim=(\phi_1,\ldots,\phi_m)^\text{T}$ such that $1,\phi_1,\ldots,\phi_m$ is a basis of $\mathbb{R}_m[x]$ orthonormal with respect to some probability density $q$ for which $\norm{\log q}_\infty<\infty$ and consider some $A_q\in\mathbb{R}$ such that $\norm{f_m}_\infty\leq A_q \norm{f_m}_{L^2(q)}$ for all $f_m\in \mathbb{R}_m[x]$.
		Consider some vector of moment values $\boldsymbol{\mu}\in [0,1]^m$.
		
		Let $p_0\in\mathcal{M}([0,1])$ and denote by $\boldsymbol{\mu}_0=\int \phim p_0$ its moments and by $p_0^*$ the corresponding maximum entropy density.
		%	$\lambda_0\in\mathbb{R}^m$ the parameter vector of the corresponding maximum entropy density $p_0^*$ according to Eq.~\eqref{eq:maxent_distr_formula}.
		Let further $b=e^{\norm{\log q/p_0^*}_\infty}$.
		
		If
		\begin{align}
		\label{eq:valid_moment_diff}
		\norm{\boldsymbol{\mu}-\boldsymbol{\mu}_0}_2\leq \frac{1}{4 A_q e b}
		\end{align}
		then the maximum entropy density $p^*\in\M$ fulfilling $\int\phi_m p^*=\boldsymbol{\mu}$ exists and satisfies
		\begin{align}
		\label{eq:log_ration}
		\norm{\log{p_0^*}/{p^*}}_\infty &\leq 4 e^t b A_q \norm{\boldsymbol{\mu}-\boldsymbol{\mu}_0}_2 \leq t\\
		\label{eq:barron_inequ}
		D(p_0^*\Vert p^*) &\leq 2 e^t b \norm{\boldsymbol{\mu}-\boldsymbol{\mu}_0}_2^2
		\end{align}
		for $t$ satisfying $4 e b A_q \norm{\boldsymbol{\mu}-\boldsymbol{\mu}_0}_2\leq t\leq 1$ with Euler's number $e$.
	\end{restatable}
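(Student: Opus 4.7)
Every maximum-entropy density constrained at a moment vector has the exponential form $c(\boldsymbol{\lambda})\exp(-\langle\boldsymbol{\lambda},\phim(x)\rangle)$, so if $\boldsymbol{\lambda}_{0}$ and $\boldsymbol{\lambda}$ are the parameter vectors corresponding to $p_{0}^{*}$ and a candidate $p^{*}$, then
\[
r(x):=\log\frac{p_{0}^{*}(x)}{p^{*}(x)} \;=\; \bigl(\log c(\boldsymbol{\lambda}_{0})-\log c(\boldsymbol{\lambda})\bigr) + \langle\boldsymbol{\lambda}-\boldsymbol{\lambda}_{0},\phim(x)\rangle
\]
lies in $\mathbb{R}_{m}[x]$. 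Hence Eq.~\eqref{eq:log_ration} and Eq.~\eqref{eq:barron_inequ} both become statements about a single polynomial determined by the parameter shift $\boldsymbol{\gamma}=\boldsymbol{\lambda}-\boldsymbol{\lambda}_{0}$, and the hypothesis $\norm{f_{m}}_{\infty}\le A_{q}\norm{f_{m}}_{L^{2}(q)}$ combined with the ratio bound $b=e^{\norm{\log q/p_{0}^{*}}_{\infty}}$ lets us move between $L^{2}(q)$-, $L^{2}(p_{0}^{*})$- and sup-norm controls on $r$.

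\textbf{Existence of $p^{*}$.} I would consider the mean-value map $F\colon\boldsymbol{\lambda}\mapsto\int\phim p_{\boldsymbol{\lambda}}^{*}$, whose Jacobian equals $-\mathrm{Cov}_{p_{\boldsymbol{\lambda}}^{*}}(\phim)$. Using $b^{-1}q\le p_{0}^{*}\le b q$ and the $L^{2}(q)$-orthonormality of $\phim$, this Jacobian stays within a factor $b$ of the identity on a small parameter ball around $\boldsymbol{\lambda}_{0}$. Newton-type iterates $\boldsymbol{\lambda}_{k+1}=\boldsymbol{\lambda}_{k}-DF(\boldsymbol{\lambda}_{0})^{-1}(F(\boldsymbol{\lambda}_{k})-\boldsymbol{\mu})$ then converge, provided the moment-perturbation hypothesis $\norm{\boldsymbol{\mu}-\boldsymbol{\mu}_{0}}_{2}\le 1/(4 A_{q} e b)$ is strong enough to keep the iterates inside the ball $\{\boldsymbol{\lambda}:\norm{r}_{\infty}\le t\le 1\}$, so that the densities $p_{\boldsymbol{\lambda}_k}^{*}$ stay comparable to $q$ and the Jacobian bound is preserved along the iteration. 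This produces $\boldsymbol{\lambda}$ with $F(\boldsymbol{\lambda})=\boldsymbol{\mu}$, i.e. the sought $p^{*}$.

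\textbf{Sup-norm and KL estimates.} For Eq.~\eqref{eq:log_ration} I would apply $\norm{r}_{\infty}\le A_{q}\norm{r}_{L^{2}(q)}$ and then bound $\norm{r}_{L^{2}(q)}$ via moment differences. The identity $\int(e^{-r}-1)p_{0}^{*}=0$ (from $\int p^{*}=1$) together with the coordinate-wise version $\int\phi_{i}(e^{-r}-1)p_{0}^{*}=\mu_{i}-\mu_{0,i}$ yields, after a Taylor expansion $e^{-r}=1-r+\tfrac{r^{2}}{2}e^{-\xi}$ controlled by the invariant $|r|\le t$, an essentially linear relationship $\boldsymbol{\mu}-\boldsymbol{\mu}_{0}=-\mathrm{Cov}_{p_{0}^{*}}(\phim)\,\boldsymbol{\gamma}+\mathrm{remainder}$; inverting and using the ratio bound gives $\norm{r}_{L^{2}(q)}\le 2 e^{t} b\,\norm{\boldsymbol{\mu}-\boldsymbol{\mu}_{0}}_{2}$ and hence Eq.~\eqref{eq:log_ration}. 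For Eq.~\eqref{eq:barron_inequ} I would use the Bregman identity $D(p_{0}^{*}\Vert p^{*})=\Phi(\boldsymbol{\lambda})-\Phi(\boldsymbol{\lambda}_{0})+\langle\boldsymbol{\mu}_{0},\boldsymbol{\lambda}-\boldsymbol{\lambda}_{0}\rangle$ with $\Phi(\boldsymbol{\lambda})=\log\int\exp(-\langle\boldsymbol{\lambda},\phim\rangle)$; Taylor's theorem gives $D(p_{0}^{*}\Vert p^{*})=\tfrac{1}{2}\boldsymbol{\gamma}^{\mathrm{T}}\mathrm{Cov}_{p_{\xi}^{*}}(\phim)\boldsymbol{\gamma}$ for an intermediate parameter $\xi$, and substituting the Step~2 bound on $\boldsymbol{\gamma}$ produces the claimed $2 e^{t} b\,\norm{\boldsymbol{\mu}-\boldsymbol{\mu}_{0}}_{2}^{2}$.

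\textbf{Main obstacle.} The delicate point is the circular dependence between the existence argument and the sup-norm bound on $r$: I need $\norm{r}_{\infty}\le t\le 1$ to keep $p_{\boldsymbol{\lambda}_{k}}^{*}$ comparable to $q$ and hence to control the covariances driving the contraction, yet the bound on $\norm{r}_{\infty}$ is exactly what the lemma asserts. Closing this loop requires a simultaneous bootstrap that maintains $\norm{r}_{\infty}\le t$ as an invariant of the iteration, and the quantitative hypothesis $\norm{\boldsymbol{\mu}-\boldsymbol{\mu}_{0}}_{2}\le 1/(4 A_{q} e b)$ is calibrated so that the contraction factor stays below $1/2$ and the iterates never leave the required ball; the $e^{t}$ factors in the conclusions are exactly the Taylor remainders produced at the intermediate parameter along this bootstrap.
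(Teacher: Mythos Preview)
The paper does not prove this lemma at all: its proof consists of the single line ``See~\cite[Lemma~5]{barron1991approximation}.'' Your sketch is therefore not competing with anything in the paper itself, but with the original Barron--Sheu argument that the paper imports wholesale.

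That said, your outline is essentially the Barron--Sheu proof. They too parametrize by $\boldsymbol{\lambda}$, use that $\log(p_0^*/p^*)\in\mathbb{R}_m[x]$, control the covariance (Jacobian of the moment map) via the ratio bounds $b^{-1}q\le p_0^*\le bq$ and the $L^2(q)$-orthonormality, and run a Newton-type iteration whose invariant is precisely $\norm{\log p_0^*/p_{\boldsymbol{\lambda}_k}^*}_\infty\le t$; the constant $4eA_qb$ is calibrated so the iteration contracts and stays in that ball. Your identification of the ``circular'' bootstrap as the crux is exactly right, and your proposed resolution (maintain the sup-norm bound as an invariant of the iteration, so the covariance comparison to the identity is available at every step) is what they do. The KL bound via the second-order Bregman/Taylor expansion of the cumulant function is also their route. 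So your strategy is correct and faithful to the source; for the purposes of this paper you could equally well just cite Barron and Sheu, as the authors do.
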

	
	\begin{proof}
		See~\cite[Lemma~5]{barron1991approximation}.\qed
	\end{proof}
	The following Corollary~\ref{lemma:barron_proof_thm3} follows from Lemma~\ref{lemma:barron_and_sheu_lemma5} and shows the relation between results on the approximation by exponential families and results on the approximation by polynomials.
	
	\begin{restatable}{correp}{barronproofthmthree}
		\label{lemma:barron_proof_thm3}
		Consider some polynomial vector $\phim=(\phi_1,\ldots,\phi_m)^\text{T}$ such that $1,\phi_1,\ldots,\phi_m$ is an orthonormal basis of $\mathbb{R}_m[x]$.
		
		Let $p\in\mathcal{M}([0,1])$ such that $\log p\in W_2^r$ with the Sobolev space $W_2^r$ and some $A_p\in\mathbb{R}$ such that $\norm{f_m}_\infty\leq A_p \norm{f_m}_{L^2(p)}$ for all $f_m\in \mathbb{R}_m[x]$.
		
		Denote by $f=\log p$ and by $p^*$ the maximum entropy density constrained at the moments $\int \phim p$. Further denote by $\gamma=\min_{f_m\in\mathbb{R}_m[x]} \norm{f-f_m}_\infty$ and $\xi=\min_{f_m\in\mathbb{R}_m[x]} \norm{f-f_m}_{L^2(p)}$ minimal errors of approximating $f$ by polynomials $f_m\in \mathbb{R}_m[x]$.
		Then the following holds:
		\begin{align*}
		4 e^{4\gamma + 1} A_p \xi \leq 1\quad\implies\quad
		%		\norm{\log{p/p^*}}_\infty \leq 2 \gamma + 1
		\norm{\log{p/p^*}}_\infty \leq 2 \gamma + 4 e^{4 \gamma + 1} \xi A_p.
		\end{align*}
	\end{restatable}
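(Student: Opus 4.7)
The plan is to factor the target bound through an exponential-family intermediate density $p_m$ that approximates $p$ in sup norm, and then use Lemma~\ref{lemma:barron_and_sheu_lemma5} to compare $p_m$, which is its own maximum-entropy density, with the desired $p^*$.

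First I would fix a polynomial $f_m \in \mathbb{R}_m[x]$ nearly attaining $\|f - f_m\|_\infty = \gamma$ and set $p_m := e^{f_m}/Z_m$ with $Z_m := \int_0^1 e^{f_m}\diff x$. The density $p_m$ has the exponential form of Eq.~\eqref{eq:maxent_distr_formula}, hence is a maximum-entropy density with its own moments as sufficient statistic. From $|f - f_m|\leq \gamma$ and $\int p = \int p_m = 1$ one immediately obtains $|\log Z_m|\leq \gamma$, and therefore
\begin{equation*}
\|\log(p/p_m)\|_\infty \leq \|f - f_m\|_\infty + |\log Z_m| \leq 2\gamma,
\end{equation*}
which will contribute the summand $2\gamma$ in the final bound.

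Next I would invoke Lemma~\ref{lemma:barron_and_sheu_lemma5} with reference density $q := p$ (legitimate since $\log p \in W_2^r$ embeds into $L^\infty([0,1])$), initial density $p_0 := p_m$ so that $p_0^* = p_m$, and target moment vector $\boldsymbol{\mu} := \int \phim p$ producing exactly the $p^*$ of the statement. The basis $1,\phi_1,\ldots,\phi_m$ may be re-chosen orthonormal with respect to $p$ without loss of generality, since $p^*$ and $\|\log p/p^*\|_\infty$ depend only on the span of the polynomials while the moment coordinates transform linearly. With this choice $A_q = A_p$ and $b = e^{\|\log p/p_m\|_\infty}\leq e^{2\gamma}$; taking $t = 1$ in Eq.~\eqref{eq:log_ration} yields
\begin{equation*}
\|\log(p_m/p^*)\|_\infty \;\leq\; 4\,e^{2\gamma+1}\,A_p\;\Bigl\|\int\phim(p - p_m)\Bigr\|_2.
\end{equation*}

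The remaining step, which I expect to be the main obstacle, is a moment-discrepancy estimate of the form $\|\int \phim(p - p_m)\|_2 \leq e^{2\gamma}\xi$ (up to an absolute constant). My plan is to exploit $\int \phi_i\,p = 0$ from orthonormality to rewrite $\int \phi_i (p - p_m) = -\int \phi_i\,p\,(e^{-h} - 1)$ with $h := f - f_m + \log Z_m$ and $\|h\|_\infty \leq 2\gamma$. Splitting $e^{-h}-1 = -h + (e^{-h}-1+h)$ and applying Bessel's inequality in $L^2(p)$ separates the estimate into a linear piece controlled by $\|f - f_m\|_{L^2(p)} \leq \xi$ and a quadratic remainder controlled through $|e^{-h}-1+h| \leq h^2 e^{|h|}/2$ together with $\|h\|_\infty \leq 2\gamma$. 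The delicate point is that $f_m$ was chosen near-optimal in $L^\infty$ rather than in $L^2(p)$; this I would reconcile either by a separate triangle-inequality comparison against the $L^2(p)$-best approximation (absorbing the gap through the norm-equivalence constant $A_p$) or by picking a single $f_m$ that is simultaneously near-optimal in both norms, which is possible at the cost of an $e^{O(\gamma)}$ factor that is already absorbed into the prefactor. Finally, combining via $\|\log p/p^*\|_\infty \leq \|\log p/p_m\|_\infty + \|\log p_m/p^*\|_\infty$ produces the advertised bound $2\gamma + 4 e^{4\gamma+1}\xi A_p$; the hypothesis $4 e^{4\gamma+1}A_p \xi \leq 1$ is exactly the admissibility condition Eq.~\eqref{eq:valid_moment_diff} together with $4ebA_q\|\boldsymbol{\mu}-\boldsymbol{\mu}_0\|_2 \leq t \leq 1$ needed to invoke Lemma~\ref{lemma:barron_and_sheu_lemma5}.
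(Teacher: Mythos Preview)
The paper does not give its own proof here; it simply refers to ``the first part of the proof of Theorem~3 in~\cite{barron1991approximation}''. Your outline reproduces exactly that architecture---introduce an exponential-family intermediate $p_m=e^{f_m}/Z_m$, re-orthonormalize $\phim$ with respect to $p$ so that Lemma~\ref{lemma:barron_and_sheu_lemma5} applies with $q=p$, $p_0=p_m$, $A_q=A_p$, $b\le e^{2\gamma}$, and finish via $\|\log p/p^*\|_\infty\le\|\log p/p_m\|_\infty+\|\log p_m/p^*\|_\infty$---so structurally you are on the intended track.

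Where your sketch is not yet a proof is the moment-discrepancy estimate $\|\boldsymbol\mu_p-\boldsymbol\mu_{p_m}\|_2\le e^{2\gamma}\xi$, which is precisely what is needed for the prefactor $4e^{4\gamma+1}$. You write that the linear piece is ``controlled by $\|f-f_m\|_{L^2(p)}\le\xi$'', but that inequality points the wrong way: $\xi=\min_{g\in\mathbb R_m[x]}\|f-g\|_{L^2(p)}$ is attained at the $L^2(p)$-projection $P_mf$, whereas your $f_m$ is the $L^\infty$-minimizer, for which one only has $\|f-f_m\|_{L^2(p)}\le\gamma$. You flag this as ``the delicate point'', but neither proposed repair recovers the stated constants. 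Taking $f_m=P_mf$ does give $\|f-f_m\|_{L^2(p)}=\xi$, yet then $\|f-f_m\|_\infty\le\gamma+A_p\|P_mf-f_m^{\infty}\|_{L^2(p)}\le\gamma(1+A_p)$, inflating $b$ to $e^{2\gamma(1+A_p)}$ and destroying the $e^{4\gamma}$ factor. And there is in general no polynomial simultaneously within $O(\gamma)$ in sup-norm and within $O(\xi)$ in $L^2(p)$ without an $A_p$-dependent penalty, so your option~(b) does not close the gap either.

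The actual Barron--Sheu argument is sharper than a naive Bessel bound on $\|1-e^{-h}\|_{L^2(p)}$: only the projected quantity $\|P_m'(1-p_m/p)\|_{L^2(p)}$ enters the moment difference, and this is controlled by combining the Pythagorean identity $D(p\Vert p_m)=D(p\Vert p^*)+D(p^*\Vert p_m)$ for the information projection with a change of measure between $p$ and $p_m$ (cost $e^{2\gamma}$). That refinement is what produces $\xi$ rather than $\gamma$ in the second summand, and it is the piece you would still need to supply.
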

	
	\begin{proof}
		See the first part of the proof of Theorem~3 in~\cite{barron1991approximation}.\qed
	\end{proof}
	The following Corollary gives some insights in the case of maximum entropy densities constrained at sample moments.
	
	\begin{restatable}{correp}{barronsample}
	    \label{cor:barron_sample}
	    Let $p,\phim,A_p,\gamma,\xi$ as in Corollary~\ref{lemma:barron_proof_thm3}, $b:=e^{2 \gamma + 4 e^{4 \gamma + 1} \xi A_p}$ and denote by $\widehat{\boldsymbol{\mu}}_p=\frac{1}{k}\sum_{\x\in X_p}\phim(\x)$ the sample moments of a $k$-sized sample $X_p$ drawn from $p$.
	    
	    If $4 e^{4\gamma + 1} A_p \xi \leq 1$ then for all $\delta\in (0,1)$ such that $(4 e b A_p)^2 m \leq \delta k$ with probability at least $1-\delta$ the maximum entropy density $\widehat p$ constrained at the moments $\widehat{\boldsymbol{\mu}}_p$ exists and the following holds:
	   \begin{align}
	   \label{eq:barron_sample1}
	        D(p^*\Vert \widehat{p})\leq 2 e b \frac{m}{k\delta}\\
	        \label{eq:barron_sample2}
	        \norm{\log p/\widehat{p}}_\infty\leq 1
	    \end{align}
	\end{restatable}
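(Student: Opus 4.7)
The plan is to apply the Barron--Sheu Lemma~\ref{lemma:barron_and_sheu_lemma5} to the pair $(p^*,\widehat{p})$, treating $p^*$ as the reference maximum entropy density $p_0^*$ and $\widehat{p}$ as the new maximum entropy density whose moment vector $\widehat{\boldsymbol{\mu}}_p$ differs from $\boldsymbol{\mu}_p = \int \phim p^*$ only by sampling fluctuation. The orthonormality anchor in Lemma~\ref{lemma:barron_and_sheu_lemma5} will be taken to be $p$ itself, so the constant $A_q$ becomes $A_p$, and the exponential ratio constant of that lemma is $e^{\norm{\log p/p^*}_\infty}$, which by Corollary~\ref{lemma:barron_proof_thm3} (whose hypothesis $4 e^{4\gamma+1}A_p\xi\le 1$ we are assuming) is bounded exactly by $b = e^{2\gamma + 4 e^{4\gamma+1}\xi A_p}$.

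The first step is a probabilistic bound on the sample moment deviation. Since $1,\phi_1,\dots,\phi_m$ is an orthonormal basis, each coordinate $\widehat{\mu}_{p,i}-\mu_{p,i}$ is a centered average of $k$ i.i.d.\ bounded random variables with variance controlled by $\int \phi_i^2\, p$, which is at most a constant absorbed into $A_p^2$ (for orthonormality taken with respect to $p$ this constant is simply $1$). Summing over the $m$ coordinates gives
\[
\mathbb{E}\,\norm{\widehat{\boldsymbol{\mu}}_p-\boldsymbol{\mu}_p}_2^2 \;\le\; \frac{m}{k},
\]
and Markov's inequality then yields $\norm{\widehat{\boldsymbol{\mu}}_p-\boldsymbol{\mu}_p}_2 \le \sqrt{m/(k\delta)}$ with probability at least $1-\delta$.

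The second step is to verify the applicability of Lemma~\ref{lemma:barron_and_sheu_lemma5}. The assumption $(4ebA_p)^2 m \le \delta k$ of the Corollary rewrites as $\sqrt{m/(k\delta)} \le 1/(4 A_p e b)$, which is exactly the threshold \eqref{eq:valid_moment_diff}. Hence the maximum entropy density $\widehat{p}$ exists, and we may choose $t=1$ in \eqref{eq:log_ration}--\eqref{eq:barron_inequ} (the admissibility $4ebA_p\norm{\widehat{\boldsymbol{\mu}}_p-\boldsymbol{\mu}_p}_2 \le 1$ being the same inequality rearranged). Substituting $t=1$ gives the KL bound
\[
D(p^*\Vert \widehat{p}) \;\le\; 2 e b \,\norm{\widehat{\boldsymbol{\mu}}_p-\boldsymbol{\mu}_p}_2^2 \;\le\; 2 e b\,\frac{m}{k\delta},
\]
which is \eqref{eq:barron_sample1}, together with $\norm{\log p^*/\widehat{p}}_\infty \le 1$, from which \eqref{eq:barron_sample2} is read off (or, if the log-ratio is against $p$ rather than $p^*$, combined via the triangle inequality with Corollary~\ref{lemma:barron_proof_thm3}).

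The main obstacle is purely one of bookkeeping: one must reconcile the orthonormality convention used in the definition of $A_p$ with the variance estimate in the Markov step, making sure any discrepancy between an orthonormality reference measure (Lebesgue or $p$) and the sampling measure $p$ is absorbed into the constant $A_p$ and propagated consistently through the threshold $(4ebA_p)^2 m \le \delta k$. Once this is checked, invoking Lemma~\ref{lemma:barron_and_sheu_lemma5} with the choices above is essentially mechanical.
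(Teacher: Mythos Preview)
Your approach is correct and is essentially the argument the paper defers to (the paper's own proof is just a pointer to Theorem~3 and Lemma~5 of \cite{barron1991approximation}): bound the sample moment deviation via $\mathbb{E}\norm{\widehat{\boldsymbol{\mu}}_p-\boldsymbol{\mu}_p}_2^2\le m/k$ and Markov, use Corollary~\ref{lemma:barron_proof_thm3} to control $e^{\norm{\log p/p^*}_\infty}\le b$, and then invoke Lemma~\ref{lemma:barron_and_sheu_lemma5} with $t=1$. The bookkeeping point you flag is exactly the one that matters: Barron--Sheu handle it by passing to a basis orthonormal in $L^2(p)$ (which leaves $p^*$, $\widehat p$ and the KL-divergence unchanged), so that the variance of each coordinate is at most $1$ and $A_q=A_p$ is the right constant throughout. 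One remark on \eqref{eq:barron_sample2}: what Lemma~\ref{lemma:barron_and_sheu_lemma5} actually yields is $\norm{\log p^*/\widehat p}_\infty\le 1$, and this is precisely the form used downstream in the proof of Lemma~\ref{lemma:sample_convergence_in_H}; the triangle-inequality variant you mention would give a larger bound, so read \eqref{eq:barron_sample2} as stated for $p^*$ rather than $p$.
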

	
	\begin{proof}
		For the proof of Eq.~\eqref{eq:barron_sample1} see the second part of the proof of Theorem~3 in~\cite{barron1991approximation}.
		The proof of Eq.~\eqref{eq:barron_sample2} follows immediately by applying the full Lemma~5, i.e. including Eq.(5.7), of~\cite{barron1991approximation} in the same proof of Theorem~3 in~\cite{barron1991approximation}.\qed
	\end{proof}
	
	Note that the approximation error $\xi$ in Corollary~\ref{lemma:barron_proof_thm3} is in terms of $L^2(p)$-norm instead of $L^2(\nu)$ with uniform weight function $\nu$.
	To obtain concrete values for the constant $A_p$ in Corollary~\ref{lemma:barron_proof_thm3}, the following result can be applied.
	
	\begin{restatable}{lemmarep}{barronAbound}
		\label{lemma:barron_A_bound}
		Consider a polynomial $f_m\in\mathbb{R}_m[x]$ with degree less than or equal to $m$ on $[0,1]$. Then the following holds:
		\begin{align}
		\norm{f_m}_\infty \leq (m+1) \norm{f_m}_{L^2}
		\end{align}
	\end{restatable}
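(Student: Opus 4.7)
The plan is to bound $|f_m(x)|$ pointwise by expanding $f_m$ in an orthonormal polynomial basis of $\mathbb{R}_m[x]$ and then applying the Cauchy--Schwarz inequality. Concretely, I would take the shifted normalized Legendre polynomials $\{P_0,P_1,\ldots,P_m\}$, orthonormal on $[0,1]$ with respect to the Lebesgue measure, i.e. $P_k(x)=\sqrt{2k+1}\,L_k(2x-1)$ where $L_k$ denotes the standard Legendre polynomial on $[-1,1]$.

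Given $f_m\in\mathbb{R}_m[x]$, write $f_m=\sum_{k=0}^m c_k P_k$ with $c_k=\int_0^1 f_m(x) P_k(x)\diff x$. Parseval's identity gives $\norm{f_m}_{L^2}^2=\sum_{k=0}^m c_k^2$. Fix an arbitrary $x\in[0,1]$. By Cauchy--Schwarz,
\begin{align*}
|f_m(x)|^2 = \Bigl|\sum_{k=0}^m c_k P_k(x)\Bigr|^2 \leq \Bigl(\sum_{k=0}^m c_k^2\Bigr)\Bigl(\sum_{k=0}^m P_k(x)^2\Bigr) = \norm{f_m}_{L^2}^2\cdot \sum_{k=0}^m P_k(x)^2.
\end{align*}
It then remains to bound the Christoffel-type quantity $\sum_{k=0}^m P_k(x)^2$ uniformly in $x\in[0,1]$.

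For this I would use the classical estimate $|L_k(y)|\leq 1$ for all $y\in[-1,1]$ and all $k\geq 0$, which gives $P_k(x)^2\leq 2k+1$ on $[0,1]$. Summing,
\begin{align*}
\sum_{k=0}^m P_k(x)^2 \leq \sum_{k=0}^m (2k+1) = (m+1)^2,
\end{align*}
so $|f_m(x)|\leq (m+1)\norm{f_m}_{L^2}$, and taking the supremum over $x$ finishes the proof.

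There is no real obstacle here: the argument is the standard Nikolskii-type inequality relating $L^\infty$ and $L^2$ norms on a compact interval. The only ingredient that is not completely elementary is the uniform bound $|L_k|\leq 1$ on $[-1,1]$, which is a textbook fact about Legendre polynomials; if one preferred to avoid invoking it, one could instead argue via the reproducing kernel $K_m(x,y)=\sum_{k=0}^m P_k(x)P_k(y)$ of $\mathbb{R}_m[x]$ in $L^2([0,1])$, noting that $\sum_k P_k(x)^2=K_m(x,x)$ and bounding $K_m(x,x)$ directly via the Christoffel--Darboux formula.\qed
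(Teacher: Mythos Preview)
Your argument is correct: expanding in the shifted orthonormal Legendre basis, applying Cauchy--Schwarz, and using the classical bound $|L_k|\le 1$ on $[-1,1]$ to get $\sum_{k=0}^m P_k(x)^2\le\sum_{k=0}^m(2k+1)=(m+1)^2$ is exactly the standard proof of this Nikolskii-type inequality. The paper does not give its own proof but simply cites~\cite[Lemma~6]{barron1991approximation}, whose argument is precisely the one you have written out, so there is nothing to contrast.
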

	
	\begin{proof}
		See e.g.~\cite[Lemma~6]{barron1991approximation}.\qed
	\end{proof}
% 	Note that, for some $p\in\M$ with $c_\infty=\norm{\log p}_\infty<\infty$ it follows that $\norm{f_m}_\infty \leq (m+1) e^{c_\infty/2} \norm{f_m}_{L^2}$.
	The following result from the theory of approximation by orthonormal polynomials can be used to obtain concrete values for the approximation errors $\gamma$ and $\xi$ in Corollary~\ref{lemma:barron_proof_thm3}.
	
% 	\begin{restatable}{lemmarep}{lemmacox}
% 		\label{lemma:cox}
% 		Consider some $m\geq r\geq 2$ and some $f\in W_2^r$ with Sobolev space $W_2^r$.
% 		Then the following holds:
% 		\begin{align}
% 		\min_{f_m\in\mathbb{R}_m[x]}\norm{f-f_m}_\infty &\leq \frac{e^r}{(r-1)^{1/2} (m+r)^{r-1}} \left(\frac{1}{2}\right)^r \norm{f^{(r)}}_2\\
% 		\min_{f_m\in\mathbb{R}_m[x]}\norm{f-f_m}_2^2 &\leq \frac{1}{(m+r+1)\cdots (m-r+2)}\left( \frac{1}{4} \right)^r \norm{f^{(r)}}_2^2
% 		\end{align}
		
% 	\end{restatable}
	
	\begin{restatable}{lemmarep}{coxlemma}
\label{lemma:cox}
Consider some $m\geq r\geq 2$ and some $f\in W_2^r$ with Sobolev space $W_2^r$.
Let further $\sum_{i=0}^\infty b_i \phi_i(x)=f(x)$ be the representation of $f$ by real numbers $b_0,b_1,b_2,\ldots\in\mathbb{R}$ and normalized Legendre polynomials $\phi_0,\phi_1,\phi_2,\ldots\in\mathbb{R}[x]$ which are assumed to be orthonormal for the scalar product $\langle \phi_i,\phi_j\rangle :=\int_0^1 \phi_i(x)\phi_j(x)\diff x$.
Denote by $f_k:=\sum_{i=0}^k b_i \phi_i$. Then the following holds:
\begin{align}
    \label{eq:cox_infty_bound}
    \norm{f-f_m}_\infty &\leq \frac{e^r \norm{f^{(r)}}_2}{2^r \sqrt{r-1} (m+r)^{r-1}}\\
    \label{eq:cox_L2_bound}
    \norm{f-f_m}^2_2 &\leq \frac{\norm{f^{(r)}}^2_2}{4^r (m+r+1)\cdots (m-r+2)} 
\end{align}
\end{restatable}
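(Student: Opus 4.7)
The plan is to prove both bounds by integration by parts applied to the Fourier--Legendre coefficients $b_i = \int_0^1 f(x)\phi_i(x)\diff x$. First I would exploit the classical identity $(2i+1)\phi_i(x) = \kappa_i\bigl(\phi_{i+1}'(x) - \phi_{i-1}'(x)\bigr)$, with a constant $\kappa_i$ arising from shifting the Legendre polynomials from $[-1,1]$ to $[0,1]$, in order to construct iterated antiderivatives $\Psi_i^{[r]}$ of $\phi_i$ that vanish (together with enough derivatives) at the endpoints $0$ and $1$. Iterating the recursion $r$ times yields $\Psi_i^{[r]}$ as a short linear combination of $\phi_{i-r},\phi_{i-r+2},\ldots,\phi_{i+r}$; by Parseval, the squared $L^2$-norm of this telescoping combination works out to
\begin{equation*}
\norm{\Psi_i^{[r]}}_2^2 \;\leq\; \frac{1}{4^r(i+r)(i+r-1)\cdots(i-r+1)}.
\end{equation*}

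After $r$ integrations by parts the boundary terms vanish by construction, giving $b_i = (-1)^r \int_0^1 f^{(r)}(x)\Psi_i^{[r]}(x)\diff x$. Cauchy--Schwarz then yields $b_i^2 \leq \norm{f^{(r)}}_2^2\cdot\norm{\Psi_i^{[r]}}_2^2$. The $L^2$ bound \eqref{eq:cox_L2_bound} follows from Parseval,
\begin{equation*}
\norm{f-f_m}_2^2 \;=\; \sum_{i>m} b_i^2 \;\leq\; \norm{f^{(r)}}_2^2 \sum_{i>m} \frac{1}{4^r(i+r)\cdots(i-r+1)},
\end{equation*}
by observing that the summand is decreasing in $i$ and that a further telescoping (using $1/[(i+r)\cdots(i-r+1)] - 1/[(i+r+1)\cdots(i-r+2)]$ factoring out the right quantity) collapses the tail to the single term at $i=m+1$, namely $1/[(m+r+1)(m+r)\cdots(m-r+2)]$.

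For the $L^\infty$ bound I would combine the coefficient estimate with the pointwise bound $\norm{\phi_i}_\infty \leq \sqrt{2i+1}$, standard for normalized shifted Legendre polynomials on $[0,1]$. This gives
\begin{equation*}
\norm{f-f_m}_\infty \;\leq\; \sum_{i>m}|b_i|\sqrt{2i+1} \;\leq\; \frac{\norm{f^{(r)}}_2}{2^r}\sum_{i>m} \frac{\sqrt{2i+1}}{\sqrt{(i+r)\cdots(i-r+1)}}.
\end{equation*}
Bounding the general summand by $(i+r)^{-(r-1)}\cdot(\text{constant})$ and comparing the tail with the integral $\int_m^\infty (x+r)^{-(r-1)}\diff x = (m+r)^{-(r-2)}/(r-2)$ (and absorbing the resulting factors into $\sqrt{r-1}$ via a slightly sharper geometric comparison) produces the claimed $(m+r)^{-(r-1)}$ decay.

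The main obstacle is not the qualitative estimate but rather tracking the multiplicative constants precisely through the telescoping identity and the tail comparison so as to land on the exact prefactors $e^r/\bigl(2^r\sqrt{r-1}\bigr)$ and $1/4^r$ stated in the conclusion; the $e^r$ factor in particular requires sharpening the naive integral comparison via a Stirling-type estimate. Since this is precisely the content of a bound established in Cox~\cite{cox1988approximation}, the cleanest route is to simply cite his paper for the sharp constants, with the outline above serving to indicate why the stated dependence on $m$ and $r$ is the natural one.
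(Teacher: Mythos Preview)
Your proposal is correct and ultimately lands in the same place as the paper: the paper's proof is simply ``See~\cite{cox1988approximation}'' with no further elaboration, and you likewise defer to Cox for the sharp constants. The sketch you provide of the integration-by-parts mechanism and the telescoping tail sum is accurate and more informative than what the paper offers, but it is not required here since both you and the paper treat the lemma as an imported result.
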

	
	\begin{proof}
		See~\cite{cox1988approximation}.\qed
	\end{proof}
	
	\subsection{Proofs of Subsection~\ref{subsec:smooth_high_entropy_distr} on the Convergence of Smooth High-Entropy Distributions}
	\label{subsec:convergence_of_smooth_functions}
	
	In this Subsection, we propose a uniform upper bound on the $L^1$-difference between two densities in the set $\mathcal{H}_{m,\epsilon}$ (see Definition~\ref{def:H}), that is linear in terms of the $\ell^1$-norm of the difference of finite moment vectors.
	Let us start with the following helpful statement.
	
	\begin{restatable}{lemmarep}{lemmaApbound}
	   \label{lemma:Ap_bound}
	   Let $f_m\in\mathbb{R}_m[x]$ be a polynomial of degree less than or equal to $m$ on $[0,1]$ and $p\in\mathcal{M}([0,1])$ such that $\norm{\log p}_\infty= c_\infty$ for some $c_\infty\in\mathbb{R}$. Then the following holds:
	   \begin{align*}
	       \norm{f_m}_\infty \leq (m+1) e^{c_\infty/2} \norm{f_m}_{L^2(p)}.
	   \end{align*}
	\end{restatable}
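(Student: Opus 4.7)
The plan is to combine Lemma~\ref{lemma:barron_A_bound}, which already gives a bound of the form $\|f_m\|_\infty \leq (m+1)\|f_m\|_{L^2}$ with uniform weight, with a standard comparison between the uniform $L^2$-norm and the weighted $L^2(p)$-norm. The only work is to absorb the factor between the two $L^2$-norms into the claimed exponential factor $e^{c_\infty/2}$.

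Concretely, I would first invoke Lemma~\ref{lemma:barron_A_bound} to obtain $\|f_m\|_\infty \leq (m+1) \|f_m\|_{L^2}$, where the right-hand norm is taken with respect to Lebesgue measure on $[0,1]$. Next, I would observe that the hypothesis $\|\log p\|_\infty = c_\infty$ implies the pointwise bound $p(x) \geq e^{-c_\infty}$ for almost every $x \in [0,1]$, and therefore $1/p(x) \leq e^{c_\infty}$ wherever $p$ is strictly positive (and the integrand vanishes on the exceptional null set). This gives
\begin{align*}
\|f_m\|_{L^2}^2 = \int_0^1 f_m(x)^2 \, dx = \int_0^1 \frac{f_m(x)^2}{p(x)} p(x)\, dx \leq e^{c_\infty} \|f_m\|_{L^2(p)}^2,
\end{align*}
so that $\|f_m\|_{L^2} \leq e^{c_\infty/2} \|f_m\|_{L^2(p)}$.

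Chaining the two inequalities yields the claimed bound $\|f_m\|_\infty \leq (m+1) e^{c_\infty/2} \|f_m\|_{L^2(p)}$. There is no genuine obstacle here; the only subtlety is handling the possibility that $p$ vanishes on a null set, which is harmless since $\log p$ being essentially bounded in $L^\infty$ implies $p \geq e^{-c_\infty}$ almost everywhere, so the weighted-to-unweighted estimate is valid in the Lebesgue sense.
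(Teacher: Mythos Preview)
Your proposal is correct and follows essentially the same approach as the paper: invoke Lemma~\ref{lemma:barron_A_bound} for the uniform-weight bound $\|f_m\|_\infty\leq(m+1)\|f_m\|_{L^2}$, then use $\|\log p\|_\infty=c_\infty$ to get $1/p\leq e^{c_\infty}$ and pass from $\|f_m\|_{L^2}$ to $\|f_m\|_{L^2(p)}$ by inserting $p/p$ into the integrand. The paper's proof is identical in structure, so there is nothing to add.
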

	
	\begin{proof}
		For all $f_m\in\mathbb{R}_m[x]$ the following holds by  Lemma~\ref{lemma:barron_A_bound}:
		\begin{align*}
		\norm{f_m}_\infty &\leq (m+1)\norm{f_m}_{L^2}\\
		&=(m+1)\sqrt{\int_0^1 |f_m|^2 \frac{p}{p}}\\
		&\leq (m+1)\sqrt{\sup\frac{1}{|p|} \int_0^1 |f_m|^2 p}
		\end{align*}
		Since ${c_\infty=\norm{\log p}_\infty}$, it holds that $-c_\infty\leq \log p\leq c_\infty$ and therefore also ${e^{-c_\infty}\leq 1/|p|\leq e^{c_\infty}}$ which yields the required result.\qed
	\end{proof}
	The following Lemma~\ref{lemma:convergence_in_H} serves as our anchor in the approximation theory recalled in Subsection~\ref{subsec:approximation_theory}.
	
	\newcommand{\ga}{\gamma}%{{\color{blue}\boldsymbol\gamma}}
	\newcommand{\de}{\xi}%{{\color{red}\boldsymbol\xi}}
	\begin{restatable}{lemmarep}{lemmaconinH}
		\label{lemma:convergence_in_H}
		Consider some $m\geq r\geq 2$ and some $\phim=(\phi_1,\ldots,\phi_m)^\text{T}$ such that $1,\phi_1,\ldots,\phi_m$ is an orthonormal basis of $\mathbb{R}_m[x]$.
		
		Let $p,q\in\mathcal{M}([0,1])$ such that $\log p,\log q\in W_2^r$ with Sobolev space $W_2^r$ and denote by $p^*$ and $q^*$ corresponding maximum entropy densities constrained at the moments $\boldsymbol{\mu}_p=\int\phim p$ and $\boldsymbol{\mu}_q=\int \phim q$.
		%		Denote by $p^*,q^*$ the maximum entropy distirbutions constrained at the moments $\mu_p,\mu_q$, respectively.
		
		If $4 e^{4\ga + 1} e^{c_\infty/2} (m+1) \de\leq 1$ then the following holds:
		\begin{gather}
		\norm{\boldsymbol{\mu}_p-\boldsymbol{\mu}_q}_2 \leq \frac{1}{2 C \left(m+1\right)}
		\quad\implies\quad
		D(p^*\Vert q^*) \leq C\cdot \norm{\boldsymbol{\mu}_p-\boldsymbol{\mu}_q}_2^2
		\end{gather}
		where
		\begin{align}
		\label{eq:ct}
		C &= 2 e^{1+c_\infty+2 \gamma + 4 e^{4 \gamma + 1} \xi e^{\nicefrac{c_\infty}{2}}(m+1)}
		\end{align}
		and
		\begin{align}
		\label{eq:gamma}
		\ga &= \frac{e^r}{\sqrt{r-1} (m+r)^{r-1}} \left(\frac{1}{2}\right)^r c_r\\
		\label{eq:xi}
		\de^2 &= \frac{e^{c_\infty}}{(m+r+1)\cdots (m-r+2)}\left( \frac{1}{4} \right)^r c_r^2\\
%		c_r &= \max\left\{ \norm{\frac{\partial^r \log p}{\partial x^r}}_{L^2},  \norm{\frac{\partial^r \log q}{\partial x^r}}_{L^2}\right\}\\
		c_r &= \norm{\partial^r_x \log p}_{L^2}\\
		c_\infty &= \norm{\log p}_\infty.
%		c_\infty &= \max\left\{\norm{\log p}_\infty,\norm{\log q}_\infty\right\}.
		\end{align}
	\end{restatable}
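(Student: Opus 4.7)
The plan is to chain together the approximation-theoretic ingredients assembled in Subsection~\ref{subsec:approximation_theory}, using Lemma~\ref{lemma:barron_and_sheu_lemma5} to bound $D(p^*\Vert q^*)$ in terms of $\|\boldsymbol{\mu}_p-\boldsymbol{\mu}_q\|_2$, but with a version of the constant $b$ that is controlled uniformly on the class $\H_{m,\epsilon}$. The applicable instance of Lemma~\ref{lemma:barron_and_sheu_lemma5} is with the reference density $q$ in that lemma taken to be the uniform density on $[0,1]$ (so that the given orthonormality of $1,\phi_1,\dots,\phi_m$ is with respect to $q$), with $p_0=p$ (so $\boldsymbol{\mu}_0=\boldsymbol{\mu}_p$ and $p_0^*=p^*$), and with $\boldsymbol{\mu}=\boldsymbol{\mu}_q$ (so the resulting maximum entropy density is $q^*$). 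Lemma~\ref{lemma:barron_A_bound} gives the admissible value $A_q=m+1$, while the hypothesis $\|\log p\|_\infty\le c_\infty$ together with Lemma~\ref{lemma:Ap_bound} gives $A_p=(m+1)e^{c_\infty/2}$ for use in Corollary~\ref{lemma:barron_proof_thm3}.

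First I would fix the approximation errors $\gamma$ and $\xi$ of $\log p$ by polynomials of degree at most $m$. Applying Lemma~\ref{lemma:cox} to $\log p\in W_2^r$ directly gives the $L^\infty$ bound of Eq.~\eqref{eq:gamma} and an $L^2$ bound which I upgrade to the required $L^2(p)$ bound of Eq.~\eqref{eq:xi} using $\|p\|_\infty\le e^{c_\infty}$. Then, under the assumption $4e^{4\gamma+1}e^{c_\infty/2}(m+1)\xi\le 1$ of the lemma (which is exactly the hypothesis of Corollary~\ref{lemma:barron_proof_thm3} for the value of $A_p$ just derived), Corollary~\ref{lemma:barron_proof_thm3} yields
\begin{equation*}
\|\log p/p^*\|_\infty\le 2\gamma+4e^{4\gamma+1}\xi(m+1)e^{c_\infty/2},
\end{equation*}
whence, by the triangle inequality together with $\|\log p\|_\infty\le c_\infty$,
\begin{equation*}
\|\log p^*\|_\infty\le c_\infty+2\gamma+4e^{4\gamma+1}\xi(m+1)e^{c_\infty/2}.
\end{equation*}
Since the uniform reference density is identically $1$ on $[0,1]$, we have $b=e^{\|\log(1/p^*)\|_\infty}=e^{\|\log p^*\|_\infty}$, and the displayed bound on $\|\log p^*\|_\infty$ gives exactly $2eb\le C$ with $C$ as in Eq.~\eqref{eq:ct}.

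Finally I apply Lemma~\ref{lemma:barron_and_sheu_lemma5} with parameter $t=1$. The admissibility condition $4ebA_q\|\boldsymbol{\mu}_p-\boldsymbol{\mu}_q\|_2\le t=1$ becomes $\|\boldsymbol{\mu}_p-\boldsymbol{\mu}_q\|_2\le 1/(4eb(m+1))$, which is precisely the hypothesis $\|\boldsymbol{\mu}_p-\boldsymbol{\mu}_q\|_2\le 1/(2C(m+1))$ rewritten via $C=2eb$, and Eq.~\eqref{eq:barron_inequ} then gives
\begin{equation*}
D(p^*\Vert q^*)\le 2eb\,\|\boldsymbol{\mu}_p-\boldsymbol{\mu}_q\|_2^2=C\,\|\boldsymbol{\mu}_p-\boldsymbol{\mu}_q\|_2^2,
\end{equation*}
which is the desired conclusion.

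The main bookkeeping obstacle will be matching the three differently flavored constants ($A_p$ with respect to $L^2(p)$, $A_q$ with respect to $L^2$ uniform, and $b$ in terms of $\|\log p^*\|_\infty$ rather than $\|\log p/q^*\|_\infty$) and verifying that the single scalar condition $\|\boldsymbol{\mu}_p-\boldsymbol{\mu}_q\|_2\le 1/(2C(m+1))$ simultaneously certifies admissibility in Lemma~\ref{lemma:barron_and_sheu_lemma5} and permits the choice $t=1$; the derivative-based bounds from Lemma~\ref{lemma:cox} then cleanly specialize $\gamma$ and $\xi$ to the explicit expressions stated in Eq.~\eqref{eq:gamma} and Eq.~\eqref{eq:xi}.
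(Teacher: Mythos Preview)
Your proposal is correct and follows essentially the same route as the paper: apply Lemma~\ref{lemma:barron_and_sheu_lemma5} with the uniform density on $[0,1]$ as reference (so $A_q=m+1$ via Lemma~\ref{lemma:barron_A_bound}), $p_0=p$, $\boldsymbol{\mu}=\boldsymbol{\mu}_q$, and $t=1$; identify $b=e^{\|\log p^*\|_\infty}$; bound $\|\log p^*\|_\infty$ by the triangle inequality together with Corollary~\ref{lemma:barron_proof_thm3} (using $A_p=(m+1)e^{c_\infty/2}$ from Lemma~\ref{lemma:Ap_bound}); and feed in the Cox estimates of Lemma~\ref{lemma:cox} (upgrading the $L^2$ bound to $L^2(p)$ via $\|p\|_\infty\le e^{c_\infty}$) to obtain the explicit $\gamma,\xi$ of Eqs.~\eqref{eq:gamma}--\eqref{eq:xi}. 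The only cosmetic slip is the phrase ``rewritten via $C=2eb$'': what you actually have is $2eb\le C$, so the hypothesis $\|\boldsymbol{\mu}_p-\boldsymbol{\mu}_q\|_2\le 1/(2C(m+1))$ \emph{implies} the admissibility condition $\|\boldsymbol{\mu}_p-\boldsymbol{\mu}_q\|_2\le 1/(4eb(m+1))$ rather than being equivalent to it, and likewise $2eb\,\|\boldsymbol{\mu}_p-\boldsymbol{\mu}_q\|_2^2\le C\,\|\boldsymbol{\mu}_p-\boldsymbol{\mu}_q\|_2^2$; the argument goes through unchanged.
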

	
	\begin{remark}
		\label{remark:dependency}
% 		Consider two densities $p,q\in\mathcal{M}([0,1])$ with log-densities in the Sobolev space $W_2^r$ as defined in Lemma~\ref{lemma:convergence_in_H} and let $m\geq r$. Then it holds that $\gamma,\xi\in O(\frac{1}{m^{r-1}})$ for $\gamma$ and $\xi$ as defined in Eq.~\eqref{eq:gamma} and Eq.~\eqref{eq:xi}.
% 		It follows that $C\to 2 e^{1+c_\infty}$ as $m,r\to\infty$.
		For $\gamma$ and $\xi$ as defined in Lemma~\ref{lemma:convergence_in_H} it holds that $\gamma,\xi\in O\left(\frac{1}{m^{r-1}}\right)$ and therefore $C\to 2 e^{1+c_\infty}$ as ${m,r\to\infty}$.
	\end{remark}
	
	\begin{proof}
		Let $m,r$ be such that $m\geq r\geq 2$.
		Consider some $\phim=(\phi_1,\ldots,\phi_m)^\text{T}$ with $1,\phi_1,\ldots,\phi_m$ forming an orthonormal basis of $\mathbb{R}_m[x]$, i.e. forming an orthonormal basis of $\mathbb{R}_m[x]$ \wrt~the uniform weight function $\tilde q$ which is $1$ if $x\in[0,1]$ and $0$ otherwise.
		For $\tilde q$ it holds that $\norm{\log\tilde q}_\infty < \infty$ and with $A_{\tilde q}:=m+1$, due to Lemma~\ref{lemma:barron_A_bound}, it also holds that
		\begin{align*}
		\norm{f_m}_\infty \leq A_{\tilde q} \norm{f_m}_{L^2(\tilde q)}
		\end{align*}
		for all  $f_m\in\mathbb{R}_m[x]$.
		Let $p,q\in\mathcal{M}([0,1])$ such that $\log p,\log q\in W_2^r$ and denote its moments by $\boldsymbol{\mu}_p=\int\phim p$ and $\boldsymbol{\mu}_q=\int \phim q$.
		Choose $\tilde{\boldsymbol{\mu}}:=\boldsymbol{\mu}_q$, $\tilde p_0:=p, \tilde b:=e^{\norm{\log \tilde q/\tilde p_0^*}_\infty}$ and note that
		\begin{align*}
		\tilde b:=e^{\norm{\log \tilde q/\tilde p_0^*}_\infty}
		= e^{\norm{\log \tilde q/p^*}_\infty}
		= e^{\norm{\log \tilde q-\log p*}_\infty}
		= e^{\norm{\log p^*}_\infty}.
		\end{align*}
		If
		\begin{align*}
		\norm{\boldsymbol{\mu}_p-\boldsymbol{\mu}_q}_2 \leq \frac{1}{4 (m+1) e^{1+\norm{\log p^*}_\infty}}
		\end{align*}
		then, due to Lemma~\ref{lemma:barron_and_sheu_lemma5}, the maximum entropy density $q^*\in\mathcal{M}([0,1])$ satisfies
		\begin{align*}
		D(p^*\Vert q^*)\leq 2 e^{\tilde t} e^{\norm{\log p^*}_\infty} \norm{\boldsymbol{\mu}_p-\boldsymbol{\mu}_q}_2^2
		\end{align*}
		for $\tilde t$ satisfying $4 (m+1) e^{1+\norm{\log p^*}_\infty} \norm{\boldsymbol{\mu}_p-\boldsymbol{\mu}_q}_2\leq \tilde t\leq 1$, in particular for $\tilde t=1$, such that
		\begin{align*}
		D(p^*\Vert q^*)\leq 2 e^{1+\norm{\log p^*}_\infty}\cdot \norm{\boldsymbol{\mu}_p-\boldsymbol{\mu}_q}_2^2.
		\end{align*}
		In the following, we aim at an upper bound on $\norm{\log p^*}_\infty$. It holds that
		\begin{align}
		\label{eq:log_p_in_proof}
		\norm{\log p^*}_\infty=\norm{\log p^* p/p}_\infty = \norm{\log p - \log p/p^*}_\infty \leq \norm{\log p}_\infty + \norm{\log p/p^*}_\infty
		\end{align}
		where the last inequality is due to the Triangle Inequality.
		Lemma~\ref{lemma:Ap_bound} yields
		\begin{align}
% 		\label{eq:Ap_proof}
		\norm{f_m}_\infty &\leq (m+1) e^{c_\infty/2} \norm{f_m}_{L^2(p)}.
		\end{align}
		Denote by $\tilde p:=p$, $\tilde f:=\log p$ and $A_{\tilde p}:=(m+1) e^{c_\infty/2}$. Further denote by $\tilde \gamma:=\min_{f_m\in\mathbb{R}_m[x]} \norm{f-f_m}_\infty$ and ${\tilde \xi:=\min_{f_m\in\mathbb{R}_m[x]} \norm{f-f_m}_{L^2(p)}}$ minimal errors of approximating $f$ by polynomials $f_m\in \mathbb{R}_m[x]$.
		From Corollary~\ref{lemma:barron_proof_thm3}, we obtain
		\begin{align*}
		4 e^{4\tilde\gamma + 1} A_{\tilde p} \tilde\xi \leq 1\quad\implies\quad
		\norm{\log{p/p^*}}_\infty \leq 2 \tilde\gamma + 4 e^{4 \tilde\gamma + 1} \tilde\xi A_{\tilde p}.
		\end{align*}
		Consider $\gamma$ and $\xi$ as defined in Eq.~\eqref{eq:gamma} and Eq.~\eqref{eq:xi}, respectively.
		Lemma~\ref{lemma:cox} yields $\tilde \gamma\leq \gamma, \tilde \xi\leq \xi$ and therefore also
		\begin{align*}
		4 e^{4\tilde\gamma + 1} A_{\tilde p} \tilde\xi\leq 4 e^{4\ga + 1} e^{c_\infty/2} (m+1) \de
		\end{align*}
		Consequently, if $4 e^{4\ga + 1} e^{c_\infty/2} (m+1) \de\leq 1$ then
		\begin{align}
		\label{eq:proof_bridge}
		\norm{\log{p/p^*}}_\infty \leq  2 \gamma + 4 e^{4 \gamma + 1} \xi (m+1) e^{c_\infty/2}
		\end{align}
		and together with Eq.~\eqref{eq:log_p_in_proof} we obtain
		\begin{align*}
		\norm{\log p^*}_\infty \leq c_\infty + 2 \gamma + 4 e^{4 \gamma + 1} \xi (m+1) e^{c_\infty/2}.
		\end{align*}
		\qed
	\end{proof}
	To obtain simpler statements and useful bounds for small moment orders, we consider specific upper bounds on the norms of the log-derivatives as defined in Definition~\ref{def:H} of the set $\in \mathcal{H}_{m,\epsilon}$ of smooth high-entropy densities.
% 	By this, the assumptions in Lemma~\ref{lemma:convergence_in_H} are siplified as follows.
	
% 	\begin{restatable}{lemmarep}
% 		\label{lemma:simplification}
% 		Let $m=r\geq 2,\gamma,\xi,c_r,c_\infty,C$ as defined in Lemma~\ref{lemma:convergence_in_H} and $\mathcal{H}_{m,\epsilon}$ as defined in Definition~\ref{def:H}.
% 		Then, for all  $p\in\mathcal{H}_{m,\epsilon}$, it holds that
% 		\begin{align*}
% 		4 e^{4\ga + 1} e^{c_\infty/2} (m+1) \de\leq 1\quad\text{and}\quad C\leq 2 e^{(3m-1)/2}.
% 		\end{align*}
% 	\end{restatable}
	
% 	\begin{restatable}{lemmarep}{lemmasimplify}
% 		\label{lemma:simplification}
% 		Let $\epsilon\geq 0$ and $p$, $m$, $r$, $\gamma$, $\xi$, $c_r$, $c_\infty$, $C$ as in Lemma~\ref{lemma:convergence_in_H}.
% 		If $p$ is in the set $\mathcal{H}_{m,\epsilon}$ of smooth high-entropy densities, then the following holds:
% 		\begin{align*}
% 		4 e^{4\ga + 1} e^{c_\infty/2} (m+1) \de\leq 1\quad\text{and}\quad C\leq 2 e^{(3m-1)/2}.
% 		\end{align*}
% 	\end{restatable}
	
	\begin{restatable}{lemmarep}{lemmasimplify}
		\label{lemma:simplification}
		Consider some $\epsilon\geq 0$, $m=r\geq 2$ and let $p\in\mathcal{H}_{m,\epsilon}$. Then the following holds:
		\begin{align*}
		4 e^{4\ga + 1} e^{c_\infty/2} (m+1) \de\leq 1\quad\text{and}\quad C\leq 2 e^{(3m-1)/2},
		\end{align*}
		where $\gamma$, $\xi$, $c_r$, $c_\infty$, $C$ are defined as in Lemma~\ref{lemma:convergence_in_H}.
	\end{restatable}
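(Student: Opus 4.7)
The strategy is to plug the three bounds from Definition~\ref{def:H} into the formulas for $\gamma$, $\xi$ and $C$ displayed in Lemma~\ref{lemma:convergence_in_H}, and then verify both inequalities by direct estimation. By (A2) and (A3), we have $c_\infty \leq (3m-6)/2$ and $c_r = c_m \leq 5^{m-4}$. With the choice $r=m$, the product appearing in the definition of $\xi$ telescopes to
\begin{align*}
(m+r+1)(m+r)\cdots(m-r+2) \;=\; (2m+1)(2m)\cdots 2 \;=\; (2m+1)!,
\end{align*}
and similarly $(m+r)^{r-1} = (2m)^{m-1}$ in the expression for $\gamma$. Substituting yields the explicit bounds
\begin{align*}
\gamma \;\leq\; \frac{e^{m}\, 5^{m-4}}{\sqrt{m-1}\, 2^{2m-1} m^{m-1}},
\qquad
\xi \;\leq\; \frac{e^{c_\infty/2}\, 5^{m-4}}{2^{m}\sqrt{(2m+1)!}}.
\end{align*}

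For the first claim, I combine these two estimates to get
\begin{align*}
4(m+1)\, e^{4\gamma+1}\, e^{c_\infty/2}\, \xi
\;\leq\; \frac{4(m+1)\, e^{4\gamma+1}\, e^{c_\infty}\, 5^{m-4}}{2^{m}\sqrt{(2m+1)!}}
\;\leq\; \frac{4(m+1)\, e^{4\gamma+1}\, e^{(3m-6)/2}\, 5^{m-4}}{2^{m}\sqrt{(2m+1)!}}.
\end{align*}
A base case check at $m=2$ gives $\gamma \leq e^{2}/400 < 10^{-1}$, so $e^{4\gamma+1} \leq e^{2}$ uniformly in $m\geq 2$ (one confirms $\gamma$ is decreasing in $m$ from its explicit form). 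Then the whole right-hand side is of order $m \cdot e^{3m/2}\, 5^{m}/\sqrt{(2m+1)!}$, which I plan to show is $\leq 1$ for every $m\geq 2$ by verifying the base cases $m=2,3,4$ by hand and then invoking the ratio test: the ratio of the $(m+1)$-th term to the $m$-th term is $\tfrac{m+2}{m+1} e^{3/2}\cdot 5 / \sqrt{(2m+2)(2m+3)}$, which is less than $1$ for all $m\geq 5$. This gives the required inequality $4 e^{4\gamma+1} e^{c_\infty/2}(m+1)\xi \leq 1$.

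For the second claim, writing out $C$ and using $c_\infty \leq (3m-6)/2$ gives $1 + c_\infty \leq (3m-4)/2$, so it suffices to check that
\begin{align*}
2\gamma + 4\, e^{4\gamma+1}\, \xi\, e^{c_\infty/2}(m+1) \;\leq\; \tfrac{3m-1}{2} - \tfrac{3m-4}{2} \;=\; \tfrac{3}{2}.
\end{align*}
The second term is already $\leq 1$ by what we just proved, and the first term is at most $2\gamma \leq 1/2$ since $\gamma \leq e^{2}/400 < 1/4$ uniformly in $m\geq 2$. Combining, the exponent of $C$ is at most $(3m-1)/2$, hence $C\leq 2 e^{(3m-1)/2}$.

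The only genuinely delicate step is the uniform-in-$m$ estimate $m e^{3m/2}\, 5^{m}/\sqrt{(2m+1)!}\leq c$ for a small enough constant; this is the calculation that really uses the $5^{m-4}$ and $(3m-6)/2$ normalisations chosen in Definition~\ref{def:H}, and it is essentially why those particular constants appear there. Everything else is bookkeeping and monotonicity.\qed
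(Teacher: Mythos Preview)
Your approach is essentially the same as the paper's: substitute the bounds from Definition~\ref{def:H} into the formulas of Lemma~\ref{lemma:convergence_in_H} and verify the two inequalities by a finite check plus a monotonicity/ratio argument. The second part of your argument (bounding the exponent of $C$ by $(3m-1)/2$ via $2\gamma\le 1/2$ and the first inequality) matches the paper's reasoning exactly.

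There is, however, a computational slip in your treatment of the first inequality. When you pass from
\[
\frac{4(m+1)\,e^{4\gamma+1}\,e^{(3m-6)/2}\,5^{m-4}}{2^{m}\sqrt{(2m+1)!}}
\]
to the ``order'' expression $m\,e^{3m/2}5^{m}/\sqrt{(2m+1)!}$, you drop the factor $2^{-m}$, which is not a constant in $m$. Your stated ratio
\[
\frac{m+2}{m+1}\cdot\frac{5\,e^{3/2}}{\sqrt{(2m+2)(2m+3)}}
\]
is therefore missing a factor $1/2$ in the second fraction. As written, the ratio first drops below $1$ only around $m\approx 12$, not at $m\geq 5$; with the missing $1/2$ restored it becomes $<1$ from $m\geq 6$. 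Either way the plan is salvageable---you just need to keep the $2^{m}$ and check a couple more base cases---but the threshold you state is incorrect. The paper organises the same verification by first proving three clean auxiliary inequalities by induction (base cases $m=2,\dots,7$, inductive step for $m\ge 8$), which avoids this kind of slip and makes the bound $\gamma\le 1/4$ drop out directly.
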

	
	\begin{proof}
		We start by proving the following inequalities inductively for $m\geq 2$ with $m\in\mathbb{N}$:
		\begin{align}
		\label{eq:induction1}
		5^{m-4} &\leq\frac{\sqrt{(2 m+1)!\, (m-1)}}{2 e^{m+2}}\\
		\label{eq:induction2}
		\frac{3m-6}{2} &\leq \log\left( \frac{e^m 2^{m-1}}{(m+1) \sqrt{m-1}} \right)\\
		\label{eq:induction3}
		\frac{\sqrt{(2 m+1)!}}{4^m m^{m-1} e^{2}}&\leq \frac{1}{4}
		\end{align}
		For ${m=2,\ldots,7}$ all inequalities are fulfilled.
		Note that for any $m\geq 8$ the non-negativeness of later considered terms is ensured. To continue our proof by induction we may therefore assume that Eqs.~\eqref{eq:induction1}--\eqref{eq:induction3} are fulfilled for some arbitrary but fixed $m\in\mathbb{N}$ with $m\geq 8$.
        
        Since
        $$
        \frac{(2 m+3)(2m +2) m}{(m-1)} - 25 e^2
        $$ is a positive and monotonic increasing sequence for $m\geq 8$  (as can be proven with any computer algebra system), it follows that
        \begin{align*}
            5\leq \sqrt{\frac{(2\cdot 8+3)(2\cdot 8 +2) 8}{(8-1) e^2}}\leq 
            \sqrt{\frac{(2 m+3)(2m +2) m}{(m-1) e^2}}
        \end{align*}
        such that
        \begin{align*}
            5^{m+1-4} &=5^{m-4}\cdot 5\\
            &\leq 5^{m-4}\sqrt{\frac{(2 m+3)(2m +2) m}{(m-1) e^2}}\\
            &\leq \frac{\sqrt{(2 m+1)!\, (m-1)}}{2 e^{m+2}} \sqrt{\frac{(2 m+3)(2m +2) m}{(m-1) e^2}}\\
            &= \frac{\sqrt{(2 (m+1)+1)!\, ((m+1)-1)}}{2 e^{(m+1)+2}}.
        \end{align*}
        
        Since
        \begin{align*}
            \log\left(\frac{e\, 2\, (m+1)\sqrt{m-1}}{(m+2)\sqrt{m}}\right) - \frac{3}{2}
        \end{align*}
        is a positive and monotonic increasing sequence for $m\geq 8$  (as can be proven with any computer algebra system), it follows that
        \begin{align*}
            \frac{3}{2}\leq \log\left(\frac{e\, 2\, (8+1)\sqrt{8-1}}{(8+2)\sqrt{8}}\right)
            \leq \log\left(\frac{e\, 2\, (m+1)\sqrt{m-1}}{(m+2)\sqrt{m}}\right)
        \end{align*}
        such that
        \begin{align*}
            \frac{3 (m+1)-6}{2} &= \frac{3m-6}{2} + \frac{3}{2}\\
            &\leq \frac{3m-6}{2} + \log\left(\frac{e\, 2\, (m+1)\sqrt{m-1}}{(m+2)\sqrt{m}}\right)\\
            &\leq \log\left( \frac{e^m 2^{m-1}}{(m+1) \sqrt{m-1}} \right) + \log\left(\frac{e\, 2\, (m+1)\sqrt{m-1}}{(m+2)\sqrt{m}}\right)\\
            &=\log\left( \frac{e^{(m+1)} 2^{(m+1)-1}}{((m+1)+1) \sqrt{(m+1)-1}} \right).
        \end{align*}
        
        Since
        \begin{align*}
            \frac{\sqrt{(2 m+3)(2 m+2)}}{4 m} - 1
        \end{align*}
        is a negative and monotonic decreasing sequence for $m\geq 8$  (as can be proven with any computer algebra system), it follows that
        \begin{align*}
            \frac{\sqrt{(2 m+3)(2 m+2)}}{4 m}\leq \frac{\sqrt{(2\cdot 8+3)(2\cdot 8+2)}}{4\cdot 8}\leq 1
        \end{align*}
        such that
        \begin{align*}
            \frac{\sqrt{(2 (m+1)+1)!}}{4^{(m+1)} (m+1)^{(m+1)-1} e^{2}} &\leq
            \frac{\sqrt{(2 m+1)!}}{4^m m^{m-1} e^{2}} \frac{\sqrt{(2 m+3)(2 m+2)}}{4 m}\\
            &\leq \frac{\sqrt{(2 m+1)!}}{4^m m^{m-1} e^{2}}\cdot 1\\
            &\leq \frac{1}{4}.
        \end{align*}
		
		According to Definition~\ref{def:H} and the verified Eq.~\eqref{eq:induction1} it holds that
		\begin{align}
		    \label{eq:cr_proof}
		    c_r &\leq 5^{m-4} \leq \frac{\sqrt{(2 m+1)!\, (m-1)}}{2 e^{m+2}}
		\end{align}
		which, together with Eq.~\eqref{eq:induction3}, implies that
		\begin{align}
		\label{eq:gamma_proof}
		    \ga &\leq \frac{e^m}{\sqrt{m-1} (2 m)^{m-1}} \left(\frac{1}{2}\right)^m \frac{\sqrt{(2 m+1)!\, (m-1)}}{2 e^{m+2}}=
		    \frac{\sqrt{(2 m+1)!}}{4^m m^{m-1} e^{2}}\leq \frac{1}{4}.
% 		\de^2 &\leq \frac{e^{c_\infty}}{(2 m+1)\cdots (m-m+2)}\left( \frac{1}{4} \right)^m \frac{(2 m+1)!\, (m-1)}{4 e^{2 m+4}}
% 		\leq
% 		\left( \frac{1}{4} \right)^m \frac{2^{m-1} \sqrt{m-1}}{4 e^{m+4} (m+1)}
		\end{align}
		Applying Eq.~\eqref{eq:gamma_proof}, the definition of $\xi$ and Eq.~\eqref{eq:cr_proof}, we obtain
		\begin{align*}
		    4 e^{4\ga + 1} &e^{c_\infty/2} (m+1) \de\\
		    &\leq
		    4 e e (m+1) e^{c_\infty/2} \frac{e^{c_\infty/2}}{\sqrt{(2 m+1)!}}\left( \frac{1}{2} \right)^m \frac{\sqrt{(2 m+1)!\, (m-1)}}{2 e^{m+2}}\\
		    &=
		    e\frac{4 e (m+1) \sqrt{(m-1)}}{2 e^{m+2} 2^m} e^{c_\infty}
	    \end{align*}
	    From Definition~\ref{def:H} and Eq.~\eqref{eq:induction2} we know that
		\begin{align}
		    \label{eq:cinfty_proof}
		    c_\infty &\leq \frac{3 m -6}{2} \leq \log\left( \frac{e^m 2^{m-1}}{(m+1) \sqrt{m-1}} \right)
		\end{align}
		which further gives
	    \begin{align*}
		    e\frac{4 e (m+1) \sqrt{(m-1)}}{2 e^{m+2} 2^m} e^{c_\infty} \leq e\frac{4 e (m+1) \sqrt{(m-1)}}{2 e^{m+2} 2^m} \left( \frac{e^m 2^{m-1}}{(m+1) \sqrt{m-1}} \right)=1
		\end{align*}
		and therefore
		\begin{align*}
		    4 e^{4\ga + 1} &e^{c_\infty/2} (m+1) \de\leq 1.
		\end{align*}
		From Eq.~\eqref{eq:ct} we obtain
		\begin{align*}
		  %  \label{eq:ct}
		    C &= 2 e^{1+c_\infty+2 \gamma + 4 e^{4 \gamma + 1} \xi e^{\nicefrac{c_\infty}{2}}(m+1)}
		    \leq 2 e^{2 + 2\ga + c_\infty}
		\end{align*}
		and by applying Eq.~\eqref{eq:cinfty_proof} and Eq.~\eqref{eq:gamma_proof} it holds that
		\begin{align*}
		    C \leq 2 e^{\frac{5}{2} + \frac{3 m -6}{2}}
		    \leq 2 e^{\frac{3 m -1}{2}}.
		\end{align*}\qed
	\end{proof}
	The following lemma allows to focus on distributions from exponential families with independent marginals by considering specific vectors of polynomials.
	
% 	\begin{restatable}{lemmarep}
% 		\label{lemma:independence}
% 		Consider some $p\in\M$ and some $\phim=(\phi_1,\ldots,\phi_{m N})^\text{T}$ such that $1,\phi_1,\ldots,\phi_{m N}$ is an orthonormal basis of $\mathrm{Span}(\mathbb{R}_m[x_1]\cup\ldots\cup\mathbb{R}_m[x_N])$.
		
% 		Then it holds that
% 		\begin{align*}
% 		p^*=p_1^*\cdots p_N^*\quad\text{and}\quad D(p^*\Vert q^*)=\sum_{i=1}^N D(p_i^*\Vert q_i^*)
% 		\end{align*}
% 		where $p_i^*$ denotes the maximum entropy density constrained at the moments $\int \phi_{ik} p$ where $\{\phi_{ik}\}_{k=1}^m$ is the set of all elements of $\phim$ that contain the variable $x_i$.
% 	\end{restatable}
	
% 	\begin{restatable}{lemmarep}
% 		\label{lemma:independence}
% 		 Let $p,q\in\M$ and $\phim=(\phi_1,\ldots,\phi_{m N})^\text{T}$ be a polynomial vector such that $1,\phi_1,\ldots,\phi_{m N}$ is an orthonormal basis of ${\mathrm{Span}()}$.
		
% 		Then, the maximum entropy density $p^*$ of $p$ constrained at the moments $\int\phim p$ is of the form
% 		\begin{align*}
% 		p^*=p_1^*\cdots p_N^*
% % 		\quad\text{and}\quad D(p^*\Vert q^*)=\sum_{i=1}^N D(p_i^*\Vert q_i^*)
% 		\end{align*}
% 		where $p_i^*$ denotes the maximum entropy density of $p$ constrained at the moments $\int \boldsymbol{\phi}_m^{(i)} p$ for some vector $\boldsymbol{\phi}_m^{(i)}=(\phi_{i1},\ldots,\phi_{im})$ such that $1,\phi_{i1},\ldots,\phi_{im}$ is a basis of $\mathbb{R}_m[x_i]$.
% 		Moreover, it holds that
% 		\begin{align}
% 		    D(p^*\Vert q^*)=\sum_{i=1}^N D(p_i^*\Vert q_i^*).
% 		\end{align}
% 	\end{restatable}

\begin{restatable}{lemmarep}{lemmaindependence}
		\label{lemma:independence}
		Consider some polynomial vector $\phim=(\phi_1,\ldots,\phi_{m N})^\text{T}$ such that $1,\phi_1,\ldots,\phi_{m N}$ is an orthonormal basis of ${\mathrm{Span}()}$.
		
		 Let $p^*,q^*$ be two maximum entropy densities constrained at the moments $\int\phim p,\int\phim q$ for some $p,q\in\M$. 
		 Then the following holds:
		 \begin{align}
		    D(p^*\Vert q^*)=\sum_{i=1}^N D(p_i^*\Vert q_i^*)
		\end{align}
		where $p_i^*$ denotes the maximum entropy density of $p$ constrained at the moments $\int \boldsymbol{\phi}_m^{(i)} p$ for some vector $\boldsymbol{\phi}_m^{(i)}=(\phi_{i1},\ldots,\phi_{im})$ such that $1,\phi_{i1},\ldots,\phi_{im}$ is an orthonormal basis of $\mathbb{R}_m[x_i]$.
	\end{restatable}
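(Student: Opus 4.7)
The plan is to exploit the very special structure of the polynomial vector $\phim$: since $1,\phi_1,\ldots,\phi_{m N}$ is a basis of $\mathrm{Span}(\mathbb{R}_m[x_1]\cup\ldots\cup\mathbb{R}_m[x_N])$, every $\phi_k$ is a polynomial in a single variable $x_{i(k)}$. Hence the inner product in the exponential form Eq.~\eqref{eq:maxent_distr_formula} splits additively,
\begin{equation*}
\langle\boldsymbol\lambda,\phim(\x)\rangle = \sum_{i=1}^N G_i(x_i),
\end{equation*}
for suitable univariate functions $G_i$, and the maximum entropy density factorizes as $p^*(\x)=\prod_{i=1}^N \tilde p_i(x_i)$ after normalization (and similarly $q^*(\x)=\prod_{i=1}^N \tilde q_i(x_i)$).

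Second, I would verify that each factor $\tilde p_i$ coincides with the univariate maximum entropy density $p_i^*$ constrained at $\int \boldsymbol{\phi}_m^{(i)} p$. The argument is: because $\boldsymbol{\phi}_m^{(i)}$ depends only on $x_i$, the multivariate moment constraint $\int \boldsymbol{\phi}_m^{(i)} p^* = \int \boldsymbol{\phi}_m^{(i)} p$ reduces, by Fubini, to the marginal constraint $\int \boldsymbol{\phi}_m^{(i)} \tilde p_i = \int \boldsymbol{\phi}_m^{(i)} p_i$ on $[0,1]$. Since $\tilde p_i$ lies in the univariate exponential family with sufficient statistic $\boldsymbol{\phi}_m^{(i)}$ and matches these moments, uniqueness of the maximum entropy density (recalled in Section~\ref{sec:preliminaries}) yields $\tilde p_i = p_i^*$, and likewise $\tilde q_i = q_i^*$.

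Third, with $p^* = \prod_i p_i^*$ and $q^* = \prod_i q_i^*$ in hand, the additivity of KL-divergence under products follows by a direct calculation:
\begin{equation*}
D(p^*\Vert q^*) = \int \Bigl(\prod_i p_i^*(x_i)\Bigr) \sum_{j=1}^N \log\frac{p_j^*(x_j)}{q_j^*(x_j)}\, d\x
= \sum_{j=1}^N \int p_j^*(x_j)\log\frac{p_j^*(x_j)}{q_j^*(x_j)}\, dx_j,
\end{equation*}
where the marginalizations use $\int_{[0,1]} p_i^* = 1$ for $i\neq j$. This is exactly $\sum_i D(p_i^*\Vert q_i^*)$.

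I expect the only delicate point to be the identification step: one has to be careful that the orthonormal basis $\phim$ need not contain each $\boldsymbol{\phi}_m^{(i)}$ literally, only that their linear spans together generate the same space. However, since the maximum entropy density is invariant under reparametrization of the sufficient statistics by any other basis of the same linear span (as already noted in the paper via properties of the information projection), one may without loss of generality choose $\phim$ to be the concatenation $(\boldsymbol{\phi}_m^{(1)},\ldots,\boldsymbol{\phi}_m^{(N)})$, which makes the factorization and the matching of marginal constraints immediate. The remaining computation is then routine.
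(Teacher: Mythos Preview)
Your proposal is correct and follows essentially the same approach as the paper: reduce via basis invariance of the information projection to the concatenated univariate basis, establish the product factorization $p^*=\prod_i p_i^*$ (the paper builds the product $p_1^*\cdots p_N^*$ from the marginals and identifies it with $p^*$, you go the other direction and factor $p^*$), and then obtain KL-additivity by Fubini. Your opening assertion that every $\phi_k$ is single-variable is not literally true for an arbitrary orthonormal basis of $\mathrm{Span}(\mathbb{R}_m[x_1]\cup\ldots\cup\mathbb{R}_m[x_N])$, but you catch and repair this at the end with the basis-invariance argument, which is exactly the device the paper uses as well.
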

	
	\begin{proof}
	    According to Eq.~\eqref{eq:maxent_distr_formula} it holds that $p_i^*$ is of the form
	    \begin{align*}
	        p_i^*(x_i) = c_i(\boldsymbol\lambda_i) \exp\left(-\langle \boldsymbol\lambda_i,\boldsymbol{\phi}_m^{(i)}(x_i)\rangle\right)
    	\end{align*}
    	where $c_i(\boldsymbol\lambda_i)=\left(\int_0^1 \exp\left(-\langle \boldsymbol\lambda_i,\boldsymbol{\phi}_m^{(i)}(x_i)\rangle\right)dx_i\right)^{-1}$ is the constant of normalization and $\boldsymbol\lambda_i\in\mathbb{R}^{m}$ is a parameter vector.
	    It follows that
	    \begin{align*}
	   % \nonumber
	        \tilde p^* &:= p_1^*\cdots p_N^*\\
	        &= c_1(\boldsymbol\lambda_1) \exp\left(-\langle \boldsymbol\lambda_1,\boldsymbol{\phi}_m^{(1)}(x_1)\rangle\right) \cdots c_N(\boldsymbol\lambda_N) \exp\left(-\langle \boldsymbol\lambda_N,\boldsymbol{\phi}_m^{(N)}(x_N)\rangle\right)\\
	       % \label{eq:p1pN}
	        &= \left(\int_{[0,1]^N} \exp\left(-\langle \boldsymbol{\tilde\lambda},\boldsymbol{\tilde\phi}_m(\x)\rangle\right)d\x\right)^{-1} \exp\left(-\langle \boldsymbol{\tilde\lambda},\boldsymbol{\tilde\phi}_m(\x)\rangle\right)
	    \end{align*}
	    where $\boldsymbol{\tilde\lambda}\in\mathbb{R}^{m N}$ is the concatenation of the vectors $\boldsymbol\lambda_1,\ldots,\boldsymbol\lambda_N$ and $\boldsymbol{\tilde\phi}_m\in\mathbb{R}_m[x_1,\ldots,x_N]$ is the vector of polynomials obtained as the concatenation of $\boldsymbol{\phi}_m^{(1)},\ldots,\boldsymbol{\phi}_m^{(N)}$.
	    It holds that $\tilde p^*$ is a probability density of exponential form with sufficient statistic $\boldsymbol{\tilde\phi}_m$.
	    The elements of $\boldsymbol{\tilde\phi}_m$, together with the unit $1$, form an orthonormal basis of ${\mathrm{Span}()}$.
	    The uniqueness and the exponential form of the maximum entropy density $p^*$ implies that $\tilde p^*=p^*$ and the following holds:
	    \begin{align*}
	        D(p^*\Vert q^*)&=\int_{[0,1]^N} p^*\log\frac{p^*}{q^*}d\x\\
	        &=\int_0^1\ldots\int_0^1 p_1^*\cdots p_N^*\log\frac{p_1^*\cdots p_N^*}{q_1^*\cdots q_N^*} dx_1\ldots dx_N\\
	        &= \sum_{i=1}^N \int_0^1\ldots\int_0^1 p_1^*\cdots p_N^* \log\frac{p_i^*}{q_i^*}dx_1\ldots dx_N\\
	       % &= \sum_{i=1}^N\Big( \int_0^1 p_i^*\log\frac{p_i^*}{q_i^*} dx_i\cdot \int_0^1 p_1^* dx_1\\
	       % &\phantom{= \sum_{i=1}^N}\cdots\int_0^1 p_{i-1}^*dx_{i-1}\cdot \int_0^1 p_{i+1}^* dx_{i+1}\cdots \int_0^1 p_N^* dx_N\Big)\\
	       &= \sum_{i=1}^N\left( \int_0^1 p_i^*\log\frac{p_i^*}{q_i^*} dx_i\prod_{j\neq i} \int_0^1 p_j^* dx_j\right)\\
	        &= \sum_{i=1}^N \int_0^1 p_i^*\log\frac{p_i^*}{q_i^*} dx_i\\
	        &= \sum_{i=1}^N D(p_i^*\Vert q_i^*).
	    \end{align*}\qed
	\end{proof}
	We are now ready to prove Theorem~\ref{thm:bound_for_smooth_functions}.
	
	\convergence*
	
	\begin{proof}
	    Consider some $m, \epsilon,\phim$ and $\mathcal{H}_{m,\epsilon}$ as in Definition~\ref{def:H} and some $p, q\in \mathcal{H}_{m,\epsilon}$.
	    Then, $p, q\in \mathcal{M}([0,1]^N)$ and have $\epsilon$-close maximum entropy.
	    Applying Lemma~\ref{lemma:basic_inequality} yields
		\begin{align*}
% 		\label{eq:basic_epsilon_bound_proof}
		\norm{p-q}_{L^1} \leq \sqrt{2 D(p^*\Vert q^*)} + \sqrt{8 \epsilon}
		\end{align*}
		for $p^*,q^*$ being the maximum entropy densities constrained at the moments $\int\phim p,\int\phim q$.
		The vector $\phim=(\phi_1,\ldots,\phi_{m N})^\text{T}$ is a polynomial vector such that $1,\phi_1,\ldots,\phi_{m N}$ is an orthonormal basis of ${\mathrm{Span}()}$. 
	    Therefore, by applying Lemma~\ref{lemma:independence}, we obtain
	    \begin{align}
	    \label{eq:inequ_impl_proof_thm1}
	        \norm{p-q}_{L^1} \leq \sqrt{2\sum_{i=1}^N D(p_i^*\Vert q_i^*)} + \sqrt{8 \epsilon},
	    \end{align}
	    where $p_i^*$ denotes the maximum entropy density of $p$ constrained at the moments $\int \boldsymbol{\phi}_m^{(i)} p$ for some vector $\boldsymbol{\phi}_m^{(i)}=(\phi_{i1},\ldots,\phi_{im})$ such that $1,\phi_{i1},\ldots,\phi_{im}$ is an orthonormal basis of $\mathbb{R}_m[x_i]$.
	    
	    The densities $p_i^*$ can also be seen as maximum entropy densities constrained at the moments $\boldsymbol{\mu}_{p_i}:=\int_0^1 \boldsymbol{\phi}_m^{(i)} p_i$ for the marginal densities $p_i$ of $p$ defined by
	    \begin{align*}
	        p_i(x_i):=\int_0^1\cdots\int_0^1 p(x_1,\ldots,x_N)\, d x_1\cdots d x_{i-1} d x_{i+1}\cdots d x_N.
	    \end{align*}
	    From Definition~\ref{def:H} it follows that $\norm{\partial_{x_i}^m \log p_i}_{L^2}\leq 5^{m-4}$ and therefore ${\log p_i\in W_2^m}$ with Sobolev space $W_2^m$.
	    If $4 e^{4\ga + 1} e^{c_\infty/2} (m+1) \de\leq 1$ the following holds by Lemma~\ref{lemma:convergence_in_H}:
		\begin{gather}
		\label{eq:inequ_impl_proof_thm1_2}
		\norm{\boldsymbol{\mu}_{p_i}-\boldsymbol{\mu}_{q_i}}_2 \leq \frac{1}{2 C \left(m+1\right)}
		\quad\implies\quad
		D(p_i^*\Vert q_i^*) \leq C\cdot \norm{\boldsymbol{\mu}_{p_i}-\boldsymbol{\mu}_{q_i}}_2^2
		\end{gather}
		with $C,\ga,c_\infty,\de$ as defined in Lemma~\ref{lemma:convergence_in_H} with $r=m$.
	    Since $p\in\H_{m,\epsilon}$, Lemma~\ref{lemma:simplification} implies that $4 e^{4\ga + 1} e^{c_\infty/2} (m+1) \de\leq 1$ and $C\leq e^{(3m-6)/2}$.
	    Since
	    \begin{align*}
	        \norm{\boldsymbol{\mu}_{p_i}-\boldsymbol{\mu}_{q_i}}_2 \leq 
	        \norm{\boldsymbol{\mu}_{p}-\boldsymbol{\mu}_{q}}_2 \leq\norm{\boldsymbol{\mu}_{p}-\boldsymbol{\mu}_{q}}_1
	    \end{align*}
	    it follows that
	    \begin{align}
	    \label{eq:inequ_impl_proof_thm1_3}
	        \norm{\boldsymbol{\mu}_{p}-\boldsymbol{\mu}_{q}}_1 \leq \frac{1}{2 C \left(m+1\right)}
		\quad\implies\quad
		D(p_i^*\Vert q_i^*) \leq C\cdot \norm{\boldsymbol{\mu}_{p_i}-\boldsymbol{\mu}_{q_i}}_2^2.
	    \end{align}
	    Therefore, if
	    \begin{align*}
	        \norm{\boldsymbol{\mu}_{p}-\boldsymbol{\mu}_{q}}_1 \leq \frac{1}{2 C \left(m+1\right)}
	    \end{align*}
	    then Eq.~\eqref{eq:inequ_impl_proof_thm1} can be further extended by
	    \begin{align*}
	       % \norm{\boldsymbol{\mu}_{p}-\boldsymbol{\mu}_{q}}_1 &\leq \frac{1}{2 C \left(m+1\right)}\\
	       % &\implies\\
	        \norm{p-q}_{L^1} &\leq \sqrt{2\sum_{i=1}^N D(p_i^*\Vert q_i^*)} + \sqrt{8 \epsilon}
	        \leq \sqrt{2\sum_{i=1}^N C\cdot \norm{\boldsymbol{\mu}_{p_i}-\boldsymbol{\mu}_{q_i}}_2^2} + \sqrt{8 \epsilon}\\
	        &\leq \sqrt{2 C}\cdot \sum_{i=1}^N \norm{\boldsymbol{\mu}_{p_i}-\boldsymbol{\mu}_{q_i}}_2 + \sqrt{8 \epsilon}
	        \leq \sqrt{2 C}\cdot \sum_{i=1}^N \norm{\boldsymbol{\mu}_{p_i}-\boldsymbol{\mu}_{q_i}}_1 + \sqrt{8 \epsilon}\\
	        &= \sqrt{2 C}\cdot \norm{\boldsymbol{\mu}_{p}-\boldsymbol{\mu}_{q}}_1 + \sqrt{8 \epsilon}\\
	    \end{align*}\qed
	    
% % 	    \begin{gather*}
% % % 		\label{eq:inequ_impl_proof_thm1_2}
% % 		\norm{\boldsymbol{\mu}_{p}-\boldsymbol{\mu}_{q}}_1 \leq \frac{1}{2 C \left(m+1\right)}
% % 		\quad\implies\quad
% % 		D(p_i^*\Vert q_i^*) \leq C\cdot \norm{\boldsymbol{\mu}_{p_i}-\boldsymbol{\mu}_{q_i}}_2^2
% % 		\end{gather*}
	    
% % 	    Eq.~\eqref{eq:inequ_impl_proof_thm1} and Eq.~\eqref{eq:inequ_impl_proof_thm1_2} therefore imply
% 	    \begin{align*}
% 		\norm{\boldsymbol{\mu}_p-\boldsymbol{\mu}_q}_1 \leq \frac{1}{2 C \left(m+1\right)}\\
% 		\quad\implies\quad\\
% 		D(p_i^*\Vert q_i^*) \leq C\cdot \norm{\boldsymbol{\mu}_{p_i}-\boldsymbol{\mu}_{q_i}}_2^2
% 		\quad\implies\quad\\
% 		\norm{p-q}_{L^1} &\leq \sqrt{2 C}\cdot \norm{\boldsymbol{\mu}_p-\boldsymbol{\mu}_q}_1 + \sqrt{8 \epsilon}\\
% 		&\leq \sqrt{2 C}\cdot \norm{\boldsymbol{\mu}_p-\boldsymbol{\mu}_q}_1 + \sqrt{8 \epsilon}
% 		\end{align*}
	    
% 	    Eq.~\eqref{eq:inequ_impl_proof_thm1} together with Eq.~\eqref{eq:inequ_impl_proof_thm1_2} implies
% 	    \begin{align*}
% 	        \norm{\boldsymbol{\mu}_{p_i}-\boldsymbol{\mu}_{q_i}}_2 \leq \frac{1}{2 C \left(m+1\right)}
%     		\quad\implies\quad
%     		D(p_i^*\Vert q_i^*) \leq C\cdot \norm{\boldsymbol{\mu}_{p_i}-\boldsymbol{\mu}_{q_i}}_2^2
% 	    \end{align*}
	\end{proof}
	
	\subsection{Proofs of Subsection~\ref{subsec:contrib_to_picture_of_prob_metrics} on our Contribution to the Picture of Probability Metrics}

	\begin{restatable}{thmrep}{thmrech}
	\label{thm:rachev}
	Let $P$ and $Q$ be two cumulative distribution functions on $\mathbb{R}$ with absolute moments $\int_{-\infty}^\infty \left|x^j\right| dP \leq a_j$ and $\int_{-\infty}^\infty \left|x^j\right| dQ \leq a_j$ of all orders $j\in\mathbb{N}$ bounded by positive numbers $a_j\in\mathbb{N}$ such that $a_1\leq a_2\leq\ldots$ form an increasing sequence.
	
	Suppose that the characteristic functions $\psi_P(t)$ and $\psi_Q(t)$ of $P$ and $Q$ fulfill
	\begin{align}
	\label{eq:bounded_zolotarev}
	    \sup_{\left|t\right|\leq T_0} |\psi_P(t)-\psi_Q(t)|\leq \varepsilon
	\end{align}
	for some real constants $T_0$ and $\varepsilon$. Then there exists an absolute constant $C_\text{Z}$ such that for all $n\in\mathbb{N}$ with
	\begin{align}
	    n^3 C_\text{Z}^{\frac{1}{n+1}} \varepsilon^{\frac{1}{n+1}}\leq a_n^{\frac{1}{n+1}} T_0/2
	\end{align}
	we have
	\begin{align}
	    \left| \int_{-\infty}^\infty x^n p\,dx - \int_{-\infty}^\infty x^n q\,dx\right|\leq C_\text{Z} a_{n+1} n^3 \varepsilon^{\frac{1}{n+1}}.
	\end{align}
	\end{restatable}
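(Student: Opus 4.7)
The strategy is to reinterpret the moment difference as the value of an $n$-th derivative of $f := \psi_P - \psi_Q$ at the origin, and then apply a Landau--Kolmogorov style interpolation that trades the local smallness of $f$ on $[-T_0,T_0]$ against a global smoothness bound forced by the $(n{+}1)$-st absolute moment.

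First, I would reduce to estimating $|f^{(n)}(0)|$. Because $a_{n+1}<\infty$, differentiating under the integral sign $(n{+}1)$ times is justified, so in particular
\[
\int_{-\infty}^{\infty} x^n\, dP \;-\; \int_{-\infty}^{\infty} x^n\, dQ \;=\; (-i)^n\, f^{(n)}(0).
\]
The hypothesis~\eqref{eq:bounded_zolotarev} supplies the local bound $|f(t)|\le \varepsilon$ on $[-T_0,T_0]$, and the moment bound $a_{n+1}$ supplies the global estimate $\sup_{t\in\mathbb{R}}|f^{(n+1)}(t)|\le 2 a_{n+1}$.

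Second, I would interpolate between these two bounds by a finite-difference argument. For a step size $h>0$ with $n h \le T_0$, consider the forward difference
\[
\Delta_h^{n} f(0) \;=\; \sum_{j=0}^{n}\binom{n}{j}(-1)^{n-j}\, f(j h).
\]
The mean value theorem for divided differences yields some $\theta\in(0,n h)$ with $\Delta_h^{n} f(0) = h^{n} f^{(n)}(\theta)$, while the triangle inequality gives $|\Delta_h^{n} f(0)|\le 2^{n}\varepsilon$ because every sampling point lies in $[-T_0,T_0]$. Combining this with the fundamental theorem of calculus applied to $f^{(n+1)}$,
\[
|f^{(n)}(0)| \;\le\; |f^{(n)}(\theta)| \,+\, \int_{0}^{\theta}|f^{(n+1)}(s)|\,ds \;\le\; \frac{2^{n}\varepsilon}{h^{n}} \,+\, 2 n h\, a_{n+1}.
\]
Optimizing the right-hand side in $h$ gives $h \asymp (\varepsilon/a_{n+1})^{1/(n+1)}$ and hence a bound of shape $C(n)\, a_{n+1}^{n/(n+1)}\, \varepsilon^{1/(n+1)}$; since the sequence $(a_j)$ is non-decreasing with $a_j\ge 1$, one has $a_{n+1}^{n/(n+1)}\le a_{n+1}$, which produces the claimed form. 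The stated hypothesis is precisely the quantitative version of the admissibility condition $n h\le T_0$ (using $a_n\le a_{n+1}$), so no separate feasibility check is required.

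The main obstacle I anticipate is bookkeeping of constants. A direct execution of the plan yields only an $O(n)$ prefactor, whereas the theorem states $O(n^3)$; this slack is exactly what allows one to absorb the factor $2^{n}$ from the forward-difference coefficients, the factor $2 n$ in the mean-value estimate $|f^{(n)}(0)-f^{(n)}(\theta)|\le 2 n h\, a_{n+1}$, and the loss $a_{n+1}^{n/(n+1)}\le a_{n+1}$ into a single absolute constant $C_{\text{Z}}$. Ensuring that $C_{\text{Z}}$ is genuinely absolute — not implicitly carrying $n$-dependence through the way the optimal $h$ is tied to $T_0$ — is the only delicate step, and this is precisely what the $n^3$ cushion in the hypothesis is designed to accommodate.
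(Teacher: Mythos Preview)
The paper does not prove this theorem; its entire proof reads ``See [Rachev et al.\ 2013, Theorem~10.3.6]'' and treats the result as a black box quoted from the literature. There is therefore no argument in the paper to compare yours against.

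Your Landau--Kolmogorov/finite-difference plan is essentially correct and supplies a genuine proof where the paper offers none. Two minor points. First, the mean-value identity $\Delta_h^{n} f(0) = h^{n} f^{(n)}(\theta)$ requires $f$ to be real-valued, whereas $f=\psi_P-\psi_Q$ is complex; either split into real and imaginary parts, or use the symmetry $f(-t)=\overline{f(t)}$ (so $f^{(n)}(0)$ is purely real or purely imaginary) and work with the corresponding real component. Second, your closing paragraph slightly misidentifies where the $2^{n}$ goes: after optimising in $h$ one has $(2^{n})^{1/(n+1)}\le 2$, so the forward-difference factor is absorbed into an absolute constant automatically, and the resulting bound is of order $(n{+}1)\,a_{n+1}\,\varepsilon^{1/(n+1)}$ --- already \emph{stronger} than the stated $n^{3}$ form. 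The $n^{3}$ in the hypothesis is then more than sufficient to guarantee $n h\le T_0$ at the optimal step, so the bookkeeping is not delicate at all.
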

	\begin{proof}
	See~\cite[Theorem~10.3.6]{rachev2013methods}.
	\end{proof}
	To prove Lemma~\ref{lemma:rachev_bound}, the following Definition~\ref{def:metrics_of_zolotarev} and Lemma~\ref{lemma:zolotarev_bound} taken from~\cite{zolotarev1975two} are helpful.
	
	\begin{defi}
	    \label{def:metrics_of_zolotarev}
	     Zolotarev's $\lambda$-metric $d_\text{Z}$ between two cumulative distribution functions $P,Q$ on the real line is defined by~\cite{zolotarev1976metric}
	    \begin{align*}
	        d_\text{Z}(P,Q) =\min_{T > 0} \max \left\{ \frac{1}{2} \max_{\left| t\right|\leq T} |\psi_P(t)-\psi_Q(t)|, \frac{1}{T} \right\},
	    \end{align*}
	    where $\psi_P$ and $\psi_Q$ denote the characteristic functions of $P$ and $Q$.
	\end{defi}
	
	\begin{restatable}{lemmarep}{zolbound}
	\label{lemma:zolotarev_bound}
	    Let $P,Q$ be two cumulative distribution functions on the real line with probability density functions having support contained in an interval of length $2 K$. Then it holds that
	    \begin{align}
	        \label{eq:zolotarev_and_senatov}
	        d_\text{Z}(P,Q)\leq \sqrt{\left( 2 K + 24\sqrt{d_\text{L}(P,Q)}+1/2 \right) d_\text{L}(P,Q)}.
	    \end{align}
	\end{restatable}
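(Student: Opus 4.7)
My plan is to turn a Lévy-metric bound on the difference of the CDFs into a linear-in-$|t|$ bound on the difference of the characteristic functions, and then to optimize the truncation parameter $T$ in the definition of $d_\text{Z}$. After a harmless translation I may assume the support of both densities lies in $[-K,K]$, so $P-Q$ vanishes off this interval. A Stieltjes integration-by-parts in $\psi_P(t)-\psi_Q(t)=\int e^{itx}\,d(P-Q)(x)$ makes the boundary term vanish and yields
\begin{equation*}
\psi_P(t)-\psi_Q(t) = -it\int_{-K}^K e^{itx}\bigl(P(x)-Q(x)\bigr)\,dx, \qquad |\psi_P(t)-\psi_Q(t)|\leq |t|\int_{-K}^K |P(x)-Q(x)|\,dx.
\end{equation*}

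Next, writing $\delta=d_\text{L}(P,Q)$, the defining inequality $P(x-\delta)-\delta\leq Q(x)\leq P(x+\delta)+\delta$ yields the pointwise bound $|P(x)-Q(x)|\leq \delta+\omega_P(x,\delta)$ with $\omega_P(x,\delta):=\max\{P(x+\delta)-P(x),\,P(x)-P(x-\delta)\}$. The constant $\delta$ contributes $2K\delta$ after integration over $[-K,K]$; for the $\omega_P$ piece, swapping the order of integration in $\int_{-K}^K \int_{x-\delta}^{x+\delta}\,dP(u)\,dx$ produces an $O(\delta)$ correction whose precise form reflects the fact that the Lévy shift can push mass slightly outside $[-K,K]$, so the ``effective'' interval has length $2K+2\delta$. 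Keeping explicit track of this boundary excursion (rather than using the crude $\max\leq\mathrm{sum}$ estimate) is what yields the refined constant $24\sqrt{\delta}+\tfrac12$, giving an overall bound of the form
\begin{equation*}
|\psi_P(t)-\psi_Q(t)|\leq |t|\bigl(2K+24\sqrt{\delta}+\tfrac12\bigr)\,\delta.
\end{equation*}

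Finally, plugging this estimate into $d_\text{Z}(P,Q)=\min_{T>0}\max\bigl\{\tfrac12\max_{|t|\leq T}|\psi_P-\psi_Q|,\,1/T\bigr\}$ and balancing the two arguments of the $\max$ at the optimal $T$ (i.e.\ $T\asymp \bigl((2K+24\sqrt{\delta}+\tfrac12)\delta\bigr)^{-1/2}$) produces the claimed inequality. The hard part, in my view, is not the overall strategy but the sharp boundary bookkeeping that yields the constant $24\sqrt{d_\text{L}}+\tfrac12$: a naive Fubini estimate gives only the dominant $\sqrt{2K\cdot d_\text{L}}$ scaling and loses the precise $\sqrt{d_\text{L}}$ correction. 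That refinement is exactly what is carried out in Zolotarev's original derivation~\cite{zolotarev1975two}, whose result the lemma is citing.
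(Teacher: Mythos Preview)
The paper does not actually prove this lemma: its entire proof is the one-line citation ``See~\cite[Corollary~I]{zolotarev1975two}.'' So there is nothing to compare your argument against in the paper itself; the authors are simply importing Zolotarev's result as a black box.

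Your sketch goes well beyond that. The strategy you outline---integration by parts to convert $|\psi_P(t)-\psi_Q(t)|$ into $|t|\int|P-Q|$, then using the defining inequality of the L\'evy metric to bound $\int|P-Q|$, then balancing the two terms in the definition of $d_\text{Z}$---is indeed the standard route and is essentially how Zolotarev proceeds. Your own naive Fubini computation would already give $\int_{-K}^K|P-Q|\leq (2K+2)\,\delta$ and hence $d_\text{Z}\leq\sqrt{(K+1)\,\delta}$, which for small $\delta$ is actually \emph{weaker} than the stated bound (since $24\sqrt{\delta}+\tfrac12<2$ there); you are right that the specific form $24\sqrt{\delta}+\tfrac12$ comes from a sharper treatment of the boundary contribution that you sensibly defer to the original reference. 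One minor point: your final balancing step yields $d_\text{Z}\leq\sqrt{A/2}$ rather than $\sqrt{A}$, so if your intermediate bound on $|\psi_P-\psi_Q|$ were exactly $|t|\,A$ you would overshoot the lemma by a factor $\sqrt{2}$; this just means Zolotarev's intermediate estimate carries a slightly different constant, not that anything is wrong with the structure of your argument.
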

	\begin{proof}
	See~\cite[Corollary~I]{zolotarev1975two}.
	\end{proof}
	
	\lemmarachevbound*
	
	\begin{proof}
	    Let $p,q\in\mathcal{M}([0,1])$ with respective cumulative distribution functions $P$ and $Q$ on the real line.
	    The support $[0,1]$ of $p$ and $q$ implies that $\int_{-\infty}^\infty \left|x^j\right| p\, dx\leq a_j$ and $\int_{-\infty}^\infty \left|x^j\right| q\, dx\leq a_j$ for $a_j:=1$ and $j\in\mathbb{N}$.
	    
	    Let $\varepsilon:=\sqrt{102\, d_L(P,Q)}$ and $T_0$ such that
	    \begin{align*}
	        d_\text{Z}(P,Q) =\max \left\{ \frac{1}{2} \max_{\left| t\right|\leq T_0} |\psi_P(t)-\psi_Q(t)|, \frac{1}{T_0} \right\}.
	    \end{align*}
	    Then it holds that
	    \begin{align}
	        \label{eq:proof_eq_rachevbound}
	        \begin{split}
	        \sup_{\left| t\right|\leq T_0} |\psi_P(t)-\psi_Q(t)| &\leq
	        2 \max \left\{ \frac{1}{2} \max_{\left| t\right|\leq T_0} |\psi_P(t)-\psi_Q(t)|, \frac{1}{T_0} \right\}\\
	        &\leq 2 \sqrt{\left( 2 K + 24\sqrt{d_\text{L}(P,Q)}+1/2 \right) d_\text{L}(P,Q)}\\
	        &\leq \sqrt{102\, d_L(P,Q)} =\varepsilon
	        \end{split}
	    \end{align}
	    where the second inequality follows from  Lemma~\ref{lemma:zolotarev_bound} and the last inequality follows from the fact that $d_\text{L}\leq 1$.
	    
	    Theorem~\ref{thm:rachev} can be applied and it follows that there exists an absolute constant $C_\text{Z}$ such that for all $n\in\mathbb{N}$ with
    	\begin{align*}
    	   % \label{eq:proof_zolotarev_bound}
    	    n^3 C_\text{Z}^{\frac{1}{n+1}} \varepsilon^{\frac{1}{n+1}}\leq a_n^{\frac{1}{n+1}} T_0/2
    	\end{align*}
    	we have that
    	\begin{align*}
    	    \left| \int_{-\infty}^\infty x^n p\,dx - \int_{-\infty}^\infty x^n q\,dx\right|\leq C_\text{Z} a_{n+1} n^3 \varepsilon^{\frac{1}{n+1}}.
    	\end{align*}
    	From the definition of $\varepsilon$ and Eq.~\eqref{eq:proof_eq_rachevbound}, in particular using $\frac{2}{T_0}\leq \epsilon$, we obtain for all $n\in\mathbb{N}$ with
    	\begin{align}
    	    \label{eq:proof_zolotarev_assumption}
    	    n^3 C_\text{Z}^{\frac{1}{n+1}} \left(102\, d_L(P,Q)\right)^{\frac{n+2}{2 n+2}}\leq 1
    	\end{align}
    	the inequality
    	\begin{align}
    	    \label{eq:proof_levy_bound}
    	    \left| \int_{-\infty}^\infty x^n p\,dx - \int_{-\infty}^\infty x^n q\,dx\right|\leq C_\text{Z} n^3 \left(102\, d_L(P,Q)\right)^{\frac{1}{2 n+2}}.
    	\end{align}
    	The elements $1,\phi_1,\ldots,\phi_m$ form a basis of $\mathbb{R}_m[x]$ which implies that the value of $\norm{\boldsymbol{\mu}_p - \boldsymbol{\mu}_q}_1$ can be computed as a finite weighted sum of differences of moments (as specified by the left-hand side of Eq.~\eqref{eq:proof_levy_bound}).
    	As a consequence, the value of $\norm{\boldsymbol{\mu}_p - \boldsymbol{\mu}_q}_1$ can be upper bounded by aggregations of the right-hand side of Eq.~\eqref{eq:proof_levy_bound}.
    	Let us define $M_L$ small enough such that Eq.~\eqref{eq:proof_zolotarev_assumption} is fulfilled for all $n\leq m$.
    	From $d_L(P,Q)^{\frac{1}{2 n +2}}\leq d_L(P,Q)^{\frac{1}{2 m +2}}$ for $1\leq n\leq m$ the existence of some $C_L$ as required by Lemma~\ref{lemma:rachev_bound} follows.\qed
	\end{proof}
	
	\subsection{Proofs of Section~\ref{sec:main_result_on_learning_bounds} on our Main Result on Learning Bounds}
	\label{subsec:problem_solution_proof}

	In the following, we consider the sample case.
	\begin{restatable}{lemmarep}{sampleconvinH}
		\label{lemma:sample_convergence_in_H}
		Consider some $m\geq r\geq 2$ and some $\phim=(\phi_1,\ldots,\phi_m)^\text{T}$ such that $1,\phi_1,\ldots,\phi_m$ is an orthonormal basis of $\mathbb{R}_m[x]$.
		
		Let $p,q\in\mathcal{M}([0,1])$ such that $\log p,\log q\in W_2^r$ with Sobolev space $W_2^r$ and denote by $\widehat{\boldsymbol{\mu}}_p=\frac{1}{k}\sum_{\x\in X_p}\phim(\x)$ and $\widehat{\boldsymbol{\mu}}_q=\frac{1}{k}\sum_{\x\in X_q}\phim(\x)$ the moments of two $k$-sized samples $X_p$ and $X_q$ drawn from $p$ and $q$, respectively.
		
		If $4 e^{4\gamma + 1} e^{c_\infty/2} (m+1) \xi \leq 1$ then for all $\delta\in (0,1)$ such that
		\begin{align}
		\label{eq:assumption_on_sample_size}
		    4 C^2 (m+1)^2 m e^{-c_\infty} \leq \delta k
		\end{align}
		with probability at least $1-\delta$, the maximum entropy densities $\widehat{p}$ and $\widehat{q}$ constrained at the moments $\widehat{\boldsymbol{\mu}}_p$ and $\widehat{\boldsymbol{\mu}}_q$, respectively, exist and the following holds:	
		\begin{gather}
		\label{eq:sample_size_p}
		D(p^*\Vert \widehat{p})\leq C e^{-c_\infty} \frac{m}{k\delta}\\
		\label{eq:sample_size_q}
		D(q^*\Vert \widehat{q})\leq C e^{-c_\infty} \frac{m}{k\delta}\\
		\label{eq:sample_moment_diff}
		\norm{\widehat{\boldsymbol{\mu}}_p-\widehat{\boldsymbol{\mu}}_q}_2 \leq \frac{1}{2 (m+1) e C}\quad \implies\quad D(\widehat{p}\Vert\widehat{q})\leq e C \norm{\widehat{\boldsymbol{\mu}}_p-\widehat{\boldsymbol{\mu}}_q}_2^2
		\end{gather}
% 		\begin{align*}
% 		%		\label{eq:sample_size}
% 		\norm{\widehat{\boldsymbol{\mu}}_p-\widehat{\boldsymbol{\mu}}_q}_2 \leq \frac{1}{2 (m+1) e C}\implies D(\widehat{p}\Vert\widehat{q})\leq e C \norm{\widehat{\boldsymbol{\mu}}_p-\widehat{\boldsymbol{\mu}}_q}_2^2
% 		\end{align*}
		where
		\begin{align}
		    c_\infty &:=\max\left\{\norm{\log p}_\infty,\norm{\log q}_\infty\right\}\\
		    c_r &:=\max\{\norm{\partial_x^r \log p}_{L^2}\norm{\partial_x^r \log q}_{L^2}\}
		\end{align}
		and $\gamma,\xi$ and $C$ are defined as in Lemma~\ref{lemma:convergence_in_H}.
	\end{restatable}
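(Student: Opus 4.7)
The plan is to treat this as a two-sample lifting of Lemma~\ref{lemma:convergence_in_H}: apply the sample-case machinery of Corollary~\ref{cor:barron_sample} separately to $p$ and to $q$ (yielding Eqs.~\eqref{eq:sample_size_p} and \eqref{eq:sample_size_q} together with uniform $L^\infty$-control on $\log\widehat p$ and $\log\widehat q$), and then apply Lemma~\ref{lemma:barron_and_sheu_lemma5} to the pair $(\widehat{p},\widehat{q})$ to obtain Eq.~\eqref{eq:sample_moment_diff}.

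First I would set $A := (m+1)\, e^{c_\infty/2}$; by Lemma~\ref{lemma:Ap_bound} this majorises the $\|\cdot\|_\infty$ vs.~$\|\cdot\|_{L^2(p)}$ constant for polynomials of degree $\leq m$, and likewise for $q$. The assumption $4 e^{4\gamma+1}e^{c_\infty/2}(m+1)\xi \leq 1$ is then exactly the hypothesis $4 e^{4\gamma+1}A\xi \leq 1$ of Corollary~\ref{cor:barron_sample}. Writing $b := e^{2\gamma + 4 e^{4\gamma+1}\xi A}$, a short calculation gives $2eb = C e^{-c_\infty}$ and $(4ebA)^2 m = 4 C^2 (m+1)^2 m e^{-c_\infty}$, so the sample-size hypothesis Eq.~\eqref{eq:assumption_on_sample_size} is precisely the hypothesis $(4ebA)^2 m \leq \delta k$ of Corollary~\ref{cor:barron_sample} (up to absorbing a factor of $2$ into $\delta$ for the forthcoming union bound). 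Applying the Corollary to $p$ yields, with controlled failure probability, the existence of $\widehat p$, the bound $D(p^*\|\widehat p)\leq 2eb\,m/(k\delta)=Ce^{-c_\infty}m/(k\delta)$, and $\|\log p/\widehat p\|_\infty\leq 1$, hence $\|\log\widehat p\|_\infty\leq c_\infty+1$; doing the same for $q$ and taking a union bound over the two samples yields both Eq.~\eqref{eq:sample_size_p} and Eq.~\eqref{eq:sample_size_q}, together with $\|\log\widehat p\|_\infty,\|\log\widehat q\|_\infty\leq c_\infty+1$.

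For Eq.~\eqref{eq:sample_moment_diff} I would apply Lemma~\ref{lemma:barron_and_sheu_lemma5} with the uniform density on $[0,1]$ as the reference density (so that $\phim$ is orthonormal with respect to it and the corresponding constant equals $m+1$ by Lemma~\ref{lemma:barron_A_bound}), with $p_0:=\widehat p$ (which is its own maximum entropy density) and target moment vector $\widehat{\boldsymbol{\mu}}_q$. The relevant $b$-constant then reduces to $e^{\|\log\widehat p\|_\infty}\leq e^{c_\infty+1}$ by the previous step. The hypothesis $\|\widehat{\boldsymbol{\mu}}_p-\widehat{\boldsymbol{\mu}}_q\|_2\leq 1/(2(m+1)eC)$ is strong enough to trigger the smallness condition Eq.~\eqref{eq:valid_moment_diff} with a choice of slack parameter $t\leq 1$ in Eq.~\eqref{eq:barron_inequ} (because $1/(2(m+1)eC)\leq 1/(4(m+1)e\cdot e^{c_\infty+1})$ follows from $C\geq 2e^{1+c_\infty}$). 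Uniqueness of the maximum entropy density then forces it to coincide with $\widehat q$, and Eq.~\eqref{eq:barron_inequ} delivers $D(\widehat p\|\widehat q)\leq 2e\cdot e^{c_\infty+1}\|\widehat{\boldsymbol{\mu}}_p-\widehat{\boldsymbol{\mu}}_q\|_2^2\leq eC\,\|\widehat{\boldsymbol{\mu}}_p-\widehat{\boldsymbol{\mu}}_q\|_2^2$ by the same comparison $C\geq 2e^{1+c_\infty}$.

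The main obstacle is constant bookkeeping: ensuring that the $b$-constant in the second application of Lemma~\ref{lemma:barron_and_sheu_lemma5} (which depends on the empirical density $\widehat p$, not on $p$ itself) can be expressed purely in terms of $c_\infty$ and $C$, and that the threshold on $\|\widehat{\boldsymbol{\mu}}_p-\widehat{\boldsymbol{\mu}}_q\|_2$ is sharp enough to yield $t\leq 1$ in Eq.~\eqref{eq:log_ration}. The decoupling trick is to first transfer the $L^\infty$-bound on $\log p$ to $\log\widehat p$ via Eq.~\eqref{eq:barron_sample2}, which eliminates the approximation quantities $\gamma,\xi$ from all subsequent estimates and leaves only the exponential factor $e^{c_\infty}$ inside $C$; this is what makes the two bounds Eqs.~\eqref{eq:sample_size_p}--\eqref{eq:sample_moment_diff} come out with exactly the advertised leading constants.
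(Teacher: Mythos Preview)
Your proposal is correct and follows the same route as the paper: Corollary~\ref{cor:barron_sample} applied separately to $p$ and $q$ yields Eqs.~\eqref{eq:sample_size_p}--\eqref{eq:sample_size_q} together with $L^\infty$-control on $\log\widehat p$, and then Lemma~\ref{lemma:barron_and_sheu_lemma5} with the uniform reference density and $p_0:=\widehat p$ gives Eq.~\eqref{eq:sample_moment_diff}. The one cosmetic difference is in bounding the $b$-constant $e^{\|\log\widehat p\|_\infty}$: the paper chains through $p^*$ (inserting Corollary~\ref{lemma:barron_proof_thm3} to obtain $\tilde b\leq \tfrac12 C$ exactly), whereas you reach $e^{c_\infty+1}$ directly from Eq.~\eqref{eq:barron_sample2} and then invoke $C\geq 2e^{1+c_\infty}$; both arrive at the same final constant $eC$.
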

	
	\begin{proof}	
	    Let $m,r$ be such that $m\geq r\geq 2$.
		Let $\phim=(\phi_1,\ldots,\phi_m)^\text{T}$ such that $1,\phi_1,\ldots,\phi_m$ is an orthonormal basis of $\mathbb{R}_m[x]$.
		
		Let $p,q\in\mathcal{M}([0,1])$ such that $\log p,\log q\in W_2^r$ with Sobolev space $W_2^r$.
		Let further $\widehat{\boldsymbol{\mu}}_p=\frac{1}{k}\sum_{\x\in X_p}\phim(\x)$ and $\widehat{\boldsymbol{\mu}}_p=\frac{1}{k}\sum_{\x\in X_q}\phim(\x)$ be the moments of two $k$-sized samples $X_p$ and $X_q$ drawn from $p$ and $q$, respectively.
		
		From Lemma~\ref{lemma:Ap_bound} we obtain ${A_p:=e^{\norm{\log p}_\infty/2} (m+1)}$ and ${A_q:=e^{\norm{\log q}_\infty/2} (m+1)}$ such that ${\norm{f_m}_\infty \leq A_p \norm{f_m}_{L^2(p)}}$ and ${\norm{f_m}_\infty \leq A_q \norm{f_m}_{L^2(q)}}$ for all $f_m\in\mathbb{R}_m[x]$.
		
		Denote by $\tilde A:=\max\{A_p,A_q\}$ and by $f_p:=\log p, f_q:=\log q$.
		Further denote by
		\begin{align*}
		    \tilde \gamma:=\max \left\{\min_{f_m\in\mathbb{R}_m[x]} \norm{f_p-f_m}_\infty, \min_{f_m\in\mathbb{R}_m[x]} \norm{f_q-f_m}_\infty\right\}
		\end{align*}
		and
		\begin{align*}
		    \tilde \xi:=\max\left\{ \min_{f_m\in\mathbb{R}_m[x]} \norm{f_p-f_m}_{L^2(p)}, \min_{f_m\in\mathbb{R}_m[x]} \norm{f_q-f_m}_{L^2(p)} \right\}
		\end{align*}
		minimal errors of approximating $f_p$ and $f_q$ by polynomials $f_m\in \mathbb{R}_m[x]$.
		Denote by ${\tilde b:=e^{2 \tilde\gamma + 4 e^{4 \tilde\gamma + 1} \tilde\xi \tilde A}}$.
		
		If $4 e^{4\tilde\gamma+1} \tilde A \tilde\xi\leq 1$, then Corollary~\ref{lemma:barron_proof_thm3} implies that
		\begin{align}
		\label{eq:barron_log_bound}
		\norm{\log{p/p^*}}_\infty \leq 2 \tilde\gamma + 4 e^{4 \tilde\gamma + 1} \tilde\xi \tilde A
		\end{align}
		and for all $\delta\in (0,1)$ such that $(4 e \tilde b \tilde A)^2 m \leq \delta k$. Corollary~\ref{cor:barron_sample} implies the existence of the maximum entropy densities $\widehat{p}$ and $\widehat q$ with probability at least $1-\delta$ and it holds that
		\begin{align}
		\label{eq:barron_sample_appl1}
		    D(p^*\Vert \widehat{p})\leq 2 e \tilde b \frac{m}{k\delta}\\
		    \label{eq:barron_sample_appl2}
		    D(q^*\Vert \widehat{q})\leq 2 e \tilde b \frac{m}{k\delta}\\
		    \label{eq:barron_sample_appl3}
		    \norm{\log p^*/\widehat{p}}_\infty\leq 1.
		\end{align}
		Consider $\gamma$ and $\xi$ as defined in Eq.~\eqref{eq:gamma} and Eq.~\eqref{eq:xi}, respectively.
		Note that
		\begin{align*}
		    \min_{f_m\in\mathbb{R}_m[x]} \norm{f_p-f_m}_{L^2(p)}^2 &= \min_{f_m\in\mathbb{R}_m[x]} \int |f_p-f_m|^2 p\\
		    &\leq \sup \left|p\right| \min_{f_m\in\mathbb{R}_m[x]} \int |f_p-f_m|^2\\
		    &\leq e^{c_\infty} \min_{f_m\in\mathbb{R}_m[x]} \norm{f_p-f_m}_{2}^2.
		\end{align*}
		Lemma~\ref{lemma:cox} yields $\tilde \gamma\leq \gamma, \tilde \xi\leq \xi$.
		It also holds that $\tilde A=\max\{A_p,A_q\}\leq e^{c_\infty/2} (m+1)$ which implies
		\begin{align*}
		    (4 e \tilde b \tilde A)^2 m &\leq (4 e e^{2 \gamma + 4 e^{4 \gamma + 1} \xi \tilde A} \tilde A)^2 m\\
		    &\leq (4 e e^{2 \gamma + 4 e^{4 \gamma + 1} \xi e^{c_\infty/2} (m+1)} e^{c_\infty/2} (m+1))^2 m\\
		    &= 4 C^2 (m+1)^2 m e^{-c_\infty}.
		\end{align*}
		Therefore, if $4 e^{4\gamma + 1} e^{c_\infty/2} (m+1) \xi \leq 1$ then for all $\delta\in (0,1)$ such that
		$$
		4 C^2 (m+1)^2 m e^{-c_\infty} \leq \delta k
		$$
		with probability at least $1-\delta$ the maximum entropy densities $\widehat{p}$ and $\widehat{q}$ constrained at the moments $\widehat{\boldsymbol{\mu}}_p$ and $\widehat{\boldsymbol{\mu}}_q$, respectively, exist, and the following inequalities hold:
		\begin{align}
		    D(p^*\Vert \widehat{p}) &\leq 2 e \tilde b \frac{m}{k\delta}\leq 2 e e^{2 \tilde\gamma + 4 e^{4 \tilde\gamma + 1} \tilde\xi \tilde A} \frac{m}{k\delta}\leq C e^{-c_\infty} \frac{m}{k\delta}\\
		    D(q^*\Vert \widehat{q}) &\leq C e^{-c_\infty} \frac{m}{k\delta}\\
		    \label{eq:proof_sample1}
		    \norm{\log p^*/\widehat{p}}_\infty &\leq 1\\
		    \label{eq:proof_sample2}
		    \norm{\log{p/p^*}}_\infty &\leq 2 \tilde\gamma + 4 e^{4 \tilde\gamma + 1} \tilde\xi \tilde A
		    \leq 2 \gamma + 4 e^{4 \gamma + 1} \xi e^{c_\infty/2} (m+1)
		\end{align}
		where the last inequality follows from Eq.~\eqref{eq:barron_log_bound}.
		
		Let us now prove the upper bound on $D(\widehat{p}\Vert\widehat{q})$.
        To do this, note that $1,\phi_1,\ldots,\phi_m$ form an orthonormal basis of $\mathbb{R}_m[x]$, i.e. they form an orthonormal basis of $\mathbb{R}_m[x]$ \wrt~the uniform weight function $\tilde q$ on $[0,1]$.
		For $\tilde q$ it holds that $\norm{\log\tilde q}_\infty < \infty$ and with $A_{\tilde q}=m+1$, due to Lemma~\ref{lemma:barron_A_bound}, it also holds that
		\begin{align*}
		\norm{f_m}_\infty \leq A_{\tilde q} \norm{f_m}_{L^2(\tilde q)}
		\end{align*}
		for all  $f_m\in\mathbb{R}_m[x]$.
		Consider the vector of moments $\boldsymbol{\tilde\mu}:=\widehat{\boldsymbol{\mu}}_q\in [0,1]^m$.
		Let $\tilde p_0:=\widehat p\in\mathcal{M}([0,1])$ and note that its moments are given by $\int \phim \tilde p_0=\widehat{\boldsymbol{\mu}}_p$.
		Let $\tilde b :=e^{\norm{\log \tilde q/\widehat{p}}_\infty}$.
		If the maximum entropy densities $\widehat{p}$ and $\widehat{q}$ constrained at the moments $\widehat{\boldsymbol{\mu}}_p$ and $\widehat{\boldsymbol{\mu}}_q$ exist, then by Lemma~\ref{lemma:barron_and_sheu_lemma5} it holds that
		\begin{align*}
% 		\label{eq:sample_lemma_proof_end_eq}
		    \norm{\widehat{\boldsymbol{\mu}}_p-\widehat{\boldsymbol{\mu}}_q}_2\leq\frac{1}{4 A_{\tilde q} e \tilde b}\quad\implies\quad D(\widehat p\Vert\widehat q)\leq 2 e^t \tilde b \norm{\widehat{\boldsymbol{\mu}}_p-\widehat{\boldsymbol{\mu}}_q}_2^2
		\end{align*}
		especially for $t$ such that $4 e \tilde b A_{\tilde q} \norm{\widehat{\boldsymbol{\mu}}_p-\widehat{\boldsymbol{\mu}}_q}_2\leq t\leq 1$.
		If Eq.~\eqref{eq:proof_sample1} and Eq.~\eqref{eq:proof_sample2} hold, then
		\begin{align*}
		    \tilde b &\leq e^{\norm{\log \tilde (\tilde q p p^*) / (p p^* \widehat{p})}_\infty}\\
		    &\leq e^{\norm{\log p}_\infty +\norm{\log p/p^*}_\infty + \norm{\log p^*/\widehat{p}}_\infty}\\
		    &\leq e^{c_\infty + 2 \gamma + 4 e^{4 \gamma + 1} \xi e^{c_\infty/2} (m+1) + 1}\\
		    &= \frac{1}{2} C
		\end{align*}
		for $C$ as defined in Lemma~\ref{lemma:convergence_in_H}.
		Therefore, if $4 e^{4\gamma + 1} e^{c_\infty/2} (m+1) \xi \leq 1$ then for all $\delta\in [1,0)$ such that
		$$
		4 C^2 (m+1)^2 m e^{-c_\infty} \leq \delta k
		$$
		with probability at least $1-\delta$ the maximum entropy densities $\widehat{p}$ and $\widehat{q}$ constrained at the moments $\widehat{\boldsymbol{\mu}}_p$ and $\widehat{\boldsymbol{\mu}}_q$, respectively, exist, and, since Eq.~\eqref{eq:proof_sample1} and Eq.~\eqref{eq:proof_sample2} hold, the following also holds:
		\begin{align*}
% 		\label{eq:sample_lemma_proof_end_eq}
		    \norm{\widehat{\boldsymbol{\mu}}_p-\widehat{\boldsymbol{\mu}}_q}_2\leq\frac{1}{2 (m+1) e C}\quad\implies\quad D(\widehat p\Vert\widehat q)\leq e C \norm{\widehat{\boldsymbol{\mu}}_p-\widehat{\boldsymbol{\mu}}_q}_2^2.
		\end{align*}\qed
	\end{proof}
	
	\begin{remark}
		\label{remark:simplification_min_sample_size}
		If the densities $p,q\in\mathcal{H}_{m,\epsilon}$ then Lemma~\ref{lemma:simplification} allows to replace the assumption in Eq.~\eqref{eq:assumption_on_sample_size} of Lemma~\ref{lemma:sample_convergence_in_H} by the assumption
		\begin{align}
		\frac{1}{\delta} 16 e^{3m-1} (m+1)^2 m\leq k.
		\end{align}
		However, smaller lower bounds on the sample size are obtained by using the definition of $C$ as in Lemma~\ref{lemma:convergence_in_H}.
	\end{remark}
	We are now able to prove our main result.
	
	\mainresult*
	
	\begin{proof}%[of Theorem~\ref{thm:problem_solution}]
	    Consider some $m, \epsilon,\phim$ and $\mathcal{H}_{m,\epsilon}$ as in Definition~\ref{def:H} and a function class $\mathcal{F}$ with finite VC-dimension.
		Let $p,q\in\mathcal{H}_{m,\epsilon}$ and ${l_p,l_q:[0,1]^N\to [0,1]}$.
		Let $X_p$ and $X_q$ be two arbitrary $k$-sized samples drawn from $p$ and $q$, respectively.
		
		Eq.~\eqref{eq:simple_da_equation} (proven by Ben-David et al.~\cite{ben2010theory}) implies that
		\begin{align}
    % 	\label{eq:simple_da_equation}
    	\E_{q}\big[|f-l_q|\big]\leq \E_{p}\big[|f-l_p|\big] + \norm{p-q}_{L^1} + \lambda^*
    	\end{align}
    	where $\lambda^* = \inf_{h\in\mathcal{F}}\big(\E_p[|h-l_p|]+\E_q[|h-l_q|]\big)$.
    	Combining Eq.~\eqref{eq:simple_da_equation} with Eq.~\eqref{eq:vapnik_bound} (proven by Vapnik and Chervonenkis~\cite{vapnik2015uniform}) the following holds with probability at least $1-\delta$ (over the choice of $k$-sized samples $X_q$ drawn from $q$):
    	\begin{align}
    % 	\label{eq:vapnik_bound}
        \begin{split}
            \E_{q}\big[|f-l_q|\big] &\leq \frac{1}{k}\sum_{\x\in X_p}|f(\x)-l(\x)|
            + \sqrt{\frac{4}{k} \left( d\log \frac{2 e k}{d} + \log\frac{4}{\delta} \right)} + \lambda^*\\
            &\phantom{\leq} + \norm{p-q}_{L^1}
        \end{split}
    % 	\sup_{f\in\mathcal{F}}\Big|\E_{q}\big[|f-l|\big]- \frac{1}{k}\sum_{\x\in X_q}|f(\x)-l(\x)|\Big|\leq \sqrt{\frac{4}{k} \left( d\log \frac{2 e k}{d} + \log\frac{4}{\delta} \right)}
    	\end{align}
	    
	    In the following, we bound the term $\norm{p-q}_{L^1}$ from above to obtain the second line of Eq.~\eqref{eq:moment_adapt_result_bound}:
	    If the maximum entropy densities $\widehat{p}$ and $\widehat{q}$ constrained at the moments $\widehat{\boldsymbol{\mu}}_p=\frac{1}{k}\sum_{\x\in X_p}\boldsymbol{\phi}_m(\x)$ and $\widehat{\boldsymbol{\mu}}_q=\frac{1}{k}\sum_{\x\in X_q}\boldsymbol{\phi}_m(\x)$ exist, then the Triangle inequality and Pinsker's inequality imply
	    \begin{align*}
	        \norm{p-q}_{L^1} &\leq \norm{\widehat{p}-\widehat{q}}_{L^1} +\norm{\widehat{p}-p^*}_{L^1} + \norm{\widehat{q}-q^*}_{L^1} + \norm{p^*-p}_{L^1} + \norm{q^*-q}_{L^1}\\
	        &\leq \norm{\widehat{p}-\widehat{q}}_{L^1} +\norm{\widehat{p}-p^*}_{L^1} + \norm{\widehat{q}-q^*}_{L^1} + \sqrt{2 D(p\Vert p^*)} + \sqrt{2 D(q\Vert q^*)}
	    \end{align*}
	    which, by the $\epsilon$-closeness of $p,q\in\mathcal{H}_{m,\epsilon}$, further implies that
	    \begin{align}
	    \label{eq:sample_proof_l1_inequ}
	       % \nonumber
            \norm{p-q}_{L^1} 
	        &\leq \norm{\widehat{p}-\widehat{q}}_{L^1} +\norm{\widehat{p}-p^*}_{L^1} + \norm{\widehat{q}-q^*}_{L^1} + \sqrt{8\epsilon}\\
	        \nonumber
	        &\leq \sqrt{D(\widehat{p}\Vert \widehat{q})} +\sqrt{D(\widehat{p}\Vert p^*)} + \sqrt{D(\widehat{q}\Vert q^*)} + \sqrt{8\epsilon}.
	    \end{align}
	    The vector $\phim=(\phi_1,\ldots,\phi_{m N})^\text{T}$ is a polynomial vector such that $1,\phi_1,\ldots,\phi_{m N}$ is an orthonormal basis of ${\mathrm{Span}()}$. 
	    Therefore, by applying Lemma~\ref{lemma:independence}, we obtain
	    \begin{align}
	        \label{eq:inequ_impl_proof_main_sample_thm}
	        \norm{p-q}_{L^1} \leq \sqrt{2\sum_{i=1}^N D(\widehat{p}_i\Vert \widehat{q}_i)} + \sqrt{2\sum_{i=1}^N D(\widehat{p}_i\Vert p^*_i)} + \sqrt{2\sum_{i=1}^N D(\widehat{q}_i\Vert q^*_i)} + \sqrt{8 \epsilon}
	    \end{align}
	    where $p_i^*$ and $\widehat{p}_i$ denote the maximum entropy densities of $p$ and $\widehat{p}$ constrained at the moments $\int \boldsymbol{\phi}_m^{(i)} p$ and $\int \boldsymbol{\phi}_m^{(i)} \widehat{p}$, respectively, for some vector $\boldsymbol{\phi}_m^{(i)}=(\phi_{i1},\ldots,\phi_{im})$ such that $1,\phi_{i1},\ldots,\phi_{im}$ is an orthonormal basis of $\mathbb{R}_m[x_i]$.
	    
	    The density $p_i^*$ is the maximum entropy density constrained at the moments $\boldsymbol{\mu}_{p_i}:=\int_0^1 \boldsymbol{\phi}_m^{(i)} p_i$ for the marginal density $p_i$ of $p$ defined by
	    \begin{align*}
	        p_i(x_i):=\int_0^1\cdots\int_0^1 p(x_1,\ldots,x_N)\, d x_1\cdots d x_{i-1} d x_{i+1}\cdots d x_N.
	    \end{align*}
	    Denote by $X_{p_i}$ the $k$-sized sample (multiset) consisting of the $i$-th coordinates of the vectors stored in the sample $X$.
	    It holds that the sample $X_{p_i}$ is drawn from the probability density $p_i$ and the density $\widehat{p}_i$ can be seen to be the maximum entropy density constrained at the moments $\widehat{\boldsymbol{\mu}}_{p_i}=\frac{1}{k}\sum_{\x\in X_{p_i}}\boldsymbol{\phi}_m^{(i)}(\x)$.
	    From Definition~\ref{def:H} it follows that $\norm{\partial_{x_i}^m \log p_i}_{L^2}\leq 5^{m-4}$ and therefore ${\log p_i\in W_2^r}$ with Sobolev space $W_2^r$.
	    All assumptions from Lemma~\ref{lemma:sample_convergence_in_H} are fulfilled and therefore the following holds:
	    If $4 e^{4\gamma + 1} e^{c_\infty/2} (m+1) \xi \leq 1$ then for all $\delta\in (0,1)$ such that
		\begin{align*}
% 		\label{eq:assumption_on_sample_size}
		    4 C^2 (m+1)^2 m e^{-c_\infty} \leq \delta k
		\end{align*}
		with probability at least $1-\delta$ the maximum entropy densities $\widehat{p}_i$ and $\widehat{q}_i$ constrained at the moments $\widehat{\boldsymbol{\mu}}_{p_i}$ and $\widehat{\boldsymbol{\mu}}_{q_i}$, respectively, exist and the following holds:	
		\begin{gather}
		\label{eq:proof_sample_size_p}
		D(p_i^*\Vert \widehat{p}_i)\leq C e^{-c_\infty} \frac{m}{k\delta}\\
		\label{eq:proof_sample_size_q}
		D(q_i^*\Vert \widehat{q}_i)\leq C e^{-c_\infty} \frac{m}{k\delta}\\
		\label{eq:proof_sample_moment_diff}
		\norm{\widehat{\boldsymbol{\mu}}_{p_i}-\widehat{\boldsymbol{\mu}}_{q_i}}_2 \leq \frac{1}{2 (m+1) e C}\quad \implies\quad D(\widehat{p}_i\Vert\widehat{q}_i)\leq e C \norm{\widehat{\boldsymbol{\mu}}_{p_i}-\widehat{\boldsymbol{\mu}}_{q_i}}_2^2
		\end{gather}
		with
		\begin{align*}
		    c_\infty &:=\max\left\{\norm{\log p_i}_\infty,\norm{\log q_i}_\infty\right\}\\
		    c_r &:=\max\{\norm{\partial_x^r \log p_i}_{L^2}\norm{\partial_x^r \log q_i}_{L^2}\}
		\end{align*}
		and $\gamma,\xi$ and $C$ are defined as in Lemma~\ref{lemma:convergence_in_H}.
		Since $p,q\in\mathcal{H}_{m,\epsilon}$, Lemma~\ref{lemma:simplification} implies that
		\begin{align*}
    		4 e^{4\ga + 1} e^{c_\infty/2} (m+1) \de\leq 1\quad\text{and}\quad C\leq 2 e^{(3m-1)/2}
		\end{align*}
		and by Remark~\ref{remark:simplification_min_sample_size} we may simplify the assumption in Eq.~\eqref{eq:assumption_on_sample_size} and obtain
		\begin{align*}
		    4 C^2(m+1)^2 m \delta^{-1} \leq k
		\end{align*}
		as alternative.
		
		Combining the bounds in Eq.~\eqref{eq:proof_sample_size_p}, Eq.~\eqref{eq:proof_sample_size_q} and Eq.~\eqref{eq:proof_sample_moment_diff} with the bound on the $L^1$-difference in Eq.~\eqref{eq:inequ_impl_proof_main_sample_thm}, yields the following statement.
		For every $\delta\in (0,1)$ and all $f\in\mathcal{F}$ the following holds with probability at least $1-\delta$ (over the choice of samples):
		If
		\begin{flalign*}
		%		\label{eq:large_enough_sample_size}
		4 C^2(m+1)^2 m \delta^{-1} \leq k\\
		%		\label{eq:small_enough_moment_diff}
		\norm{\widehat{\boldsymbol{\mu}}_p-\widehat{\boldsymbol{\mu}}_q}_1 \leq \left(2 (m+1) e C\right)^{-1}
		\end{flalign*}
		then the maximum entropy densities $\widehat{p}$ and $\widehat{q}$ exist and it holds that
		\begin{align}
		    \norm{p-q}_{L^1} &\leq \sqrt{2\sum_{i=1}^N D(\widehat{p}_i\Vert \widehat{q}_i)} + \sqrt{2\sum_{i=1}^N D(\widehat{p}_i\Vert p^*_i)} + \sqrt{2\sum_{i=1}^N D(\widehat{q}_i\Vert q^*_i)} + \sqrt{8 \epsilon}\nonumber\\
		    &\leq \sqrt{2\sum_{i=1}^N e C \norm{\widehat{\boldsymbol{\mu}}_{p_i}-\widehat{\boldsymbol{\mu}}_{q_i}}_2^2} + 2 \sqrt{2\sum_{i=1}^N C e^{-c_\infty} \frac{m}{k\delta}} + \sqrt{8 \epsilon}\nonumber\\
		    &\leq \sqrt{2 e C} \sum_{i=1}^N { \norm{\widehat{\boldsymbol{\mu}}_{p_i}-\widehat{\boldsymbol{\mu}}_{q_i}}_2} + \sqrt{8 C \frac{N m}{\delta k}} e^{-c_\infty/2} + \sqrt{8 \epsilon}\nonumber\\
		    \label{eq:sample_bound_improved}
		    &\leq \sqrt{2 e C} \norm{\widehat{\boldsymbol{\mu}}_p-\widehat{\boldsymbol{\mu}}_q}_1 + \sqrt{8 C \frac{N m}{k\delta}} + \sqrt{8\epsilon}
		\end{align}
		where the last inequality is due to the fact that $e^{-c_\infty/2}\leq 1$ and the inequality $\norm{\x}_2\leq\norm{\x}_1$.\qed
	\end{proof}

	\section{Conclusion and Future Work}
	\label{sec:conclusion}

% In this paper, we formalize the problem of domain adaptation for binary classification under the assumption that finitely many moments of the source and the target distribution are similar.
% We show that additional conditions are needed to guarantee a small misclassification risk of discriminative models trained only on source data.
% Appropriate conditions on the underlying distributions are presented based on the sample size, the number of moments, the smoothness of the underlying probability densities and the entropy of the densities.
% For smooth densities with weakly coupled marginals, our conditions can be made as precise as required by increasing the number of moments or the smoothness of the distributions.
% Explicit probabilistic upper bounds on the misclassification risk are provided.

% Our analysis formalizes the following intuition: The more information the similar moments store about the source and the target distribution, the higher is the expected success of training a model only on data from the source distribution.
% Moreover, the smoother the distributions are, the less moments are needed.

% Although additional conditions on the distributions are needed, the weakness of our moment-based assumptions on the similarity between distributions implies that our results give immediate consequences for most other concepts of similarity.

In this work we studied domain adaptation under weak assumptions on the similarity of source and target distributions.
Our assumptions are based on moment distances which realize weaker similarity concepts than most other common probability metrics.
We formalize the novel problem setting, give conditions for the convergence of a discriminative model in this setting and derive bounds describing its generalization ability.
For smooth densities with weakly coupled marginals, our conditions can be made as precise as required based on the number of moments and the smoothness of the distributions.
Our focus on studying weak assumptions on the similarity of distributions enables straightforward extensions using stronger assumptions, e.g.~new learning bounds based on relations between probability metrics.

We would primarily like to extend the proposed bounds on the difference between distributions by further upper bounding the entropy-based  term  in  terms  of  smoothness of log-densities as it is done e.~g.~in~\cite{barron1991approximation} for one dimension.
Such bounds can lead to estimates of the number of moments needed such that an underlying smooth distribution is defined up to prescribed accuracy, which is, to the best of our knowledge, an open problem~\cite{schmudgen2017moment,tardella2001note}.
Concerning improved algorithms for domain adaptation, future plans are centered around entropy minimization as suggested by our generalization bound.
Generally in industrial applications with low sample sizes we consider, due to its robustness and weak assumptions, significant potential for moment distance based domain adaptation as a starting point for developing more problem-specific distance concepts.

\section*{Acknowledgements}

We thank Sepp Hochreiter, Helmut Gfrerer, Thomas Natschl\"ager and Hamid Eghbal-Zadeh for helpful discussions. The research reported in this paper has been funded by the Federal Ministry for Climate Action, Environment, Energy, Mobility, Innovation and Technology (BMK), the Federal Ministry for Digital and Economic Affairs (BMDW), and the Province of Upper Austria in the frame of the COMET--Competence Centers for Excellent Technologies Programme and the COMET Module S3AI managed by the Austrian Research Promotion Agency FFG. The first and second author further acknowledge the support of the FFG in the project AutoQual-I.
	
	\bibliographystyle{plain}
	\bibliography{mom_dist}
	
\end{document}